\newcommand{\vertiii}[1]{{\left\vert\kern-0.25ex\left\vert\kern-0.25ex\left\vert #1 
		\right\vert\kern-0.25ex\right\vert\kern-0.25ex\right\vert}}
\newtheorem{theorem}{Theorem}
\newtheorem{lemma}{Lemma}
\newtheorem{corollary}{Corollary}
\newtheorem{proposition}{Proposition}
\global\long\def\H2{\mathcal{H}_2}
\global\long\def\E1{\mathcal{E}_1}
\global\long\def\Csys{C_{\mathrm{sys}}}
\begin{document}

	\title{\bf Learning Partially Observed Linear Dynamical Systems from Logarithmic Number of Samples}
	\author{Salar Fattahi
		\date{%
		University of Michigan	
		}
	}

	\maketitle
	
	\begin{abstract}
		In this work, we study the problem of learning partially observed linear dynamical systems from a single sample trajectory. A major practical challenge in the existing system identification methods is the undesirable dependency of their required sample size on the system dimension: roughly speaking, they presume and rely on sample sizes that scale linearly with respect to the system dimension. Evidently, in high-dimensional regime where the system dimension is large, it may be costly, if not impossible, to collect as many samples from the unknown system. In this paper, we will remedy this undesirable dependency on the system dimension by introducing an $\ell_1$-regularized estimation method that can accurately estimate the Markov parameters of the system, provided that the number of samples scale logarithmically with the system dimension. Our result significantly improves the sample complexity of learning partially observed linear dynamical systems: it shows that the Markov parameters of the system can be learned in the high-dimensional setting, where the number of samples is significantly smaller than the system dimension. Traditionally, the $\ell_1$-regularized estimators have been used to promote sparsity in the estimated parameters. By resorting to the notion of ``weak sparsity'', we show that, \textit{irrespective of the true sparsity of the system}, a similar regularized estimator can be used to reduce the sample complexity of learning partially observed linear systems, provided that the true system is inherently stable.
	\end{abstract}

\section{Introduction}\label{sec:intro}

Most of today's real-world systems are characterized by being large-scale, complex, and safety-critical. For instance, the nation-wide power grid is comprised of millions of active devices that interact according to uncertain dynamics and complex laws of physics~\cite{blaabjerg2006overview, amin2005toward, wang1998robust}. 
As another example, the contemporary transportation systems are moving towards a spatially distributed, autonomous, and intelligent infrastructure with thousands of heterogeneous and dynamic components~\cite{barbaresso2014usdot, krechmer2018effects}. Other examples include aerospace systems~\cite{kapila2000spacecraft}, decentralized wireless networks~\cite{kubisch2003distributed}, and multi-agent robot networks~\cite{nguyen2011model}. A common feature of these systems is that they are comprised of a massive network of interconnected subsystems with complex and uncertain dynamics.

The unknown structure of the dynamics on the one hand, and the emergence of machine learning and reinforcement learning (RL) as powerful tools for solving sequential decision making problems~\cite{sutton2018reinforcement, krizhevsky2012imagenet, duan2016benchmarking} on the other hand, strongly motivate the use of \textit{data-driven} methods in the operation of unknown safety-critical systems. However, the applications of machine learning techniques in the {safety-critical} systems remain mostly limited due to several fundamental challenges. First, to alleviate the so-called ``curse of dimensionality'' in these systems, any practical learning and control method must be data-, time-, and memory-efficient. Second, rather than being treated as ``black-box'' models, these systems must be governed via models that are interpretable by practitioners, and are amenable to well-established robust/optimal control methods. 

With the goal of addressing the aforementioned challenges, this paper studies the efficient learning of partially observed linear systems from a single trajectory of input-output measurements. Despite a mature body of literature on the statistical learning and control of linear dynamical systems, their practicality remains limited for large-scale and safety-critical systems. A key challenge lies in the required sample sizes of these methods and their dependency on the system dimensions: 
for a system with dimension $n$, the best existing system identification techniques require sample sizes in the order of $\mathcal{O}(n)$ to $\mathcal{O}(n^4)$ to provide certifiable guarantees on their
performance~\cite{oymak2019non, krauth2019finite, dean2019sample, simchowitz2019learning, sarkar2019finite}. Such dependency may inevitably lead to exceedingly long interactions with the safety-critical system, where it is extremely costly or even impossible to collect nearly as many samples without jeopardizing its safety---consider sampling from a geographically distributed power grid with tens of millions of parameters, and this increasing difficulty becomes apparent.\vspace{2mm}

\noindent{\bf Contributions:} In this work, we show that the Markov parameters defining the input-output behavior of partially observed linear dynamical systems can be learned with logarithmic sample complexity, i.e., from a single sample trajectory whose length scales poly-logarithmically with the output dimension. Our result relies on the key assumption that the system is inherently stable, or alternatively, it is equipped with an initial stabilizing controller. We show that the inherent stability of the system is analogous to the notion of \textit{weak sparsity} in the corresponding Markov parameters. We then show that this ``prior knowledge'' on the weak sparsity of the Markov parameters can be systematically captured and exploited via an $\ell_1$-regularized estimation method. Our results imply that the Markov parameters of a partially observed linear system can be learned with certifiable bounds in the high-dimensional settings, where the system dimension is significantly larger than the number of available samples, thereby paving the way towards the efficient learning of massive-scale safety-critical systems. Within the realm of statistics, the $\ell_1$-regularized estimators have been traditionally used to promote (exact) sparsity in the unknown parameters. In this work, we show that a similar $\ell_1$-regularized method can be used to estimate the Markov parameters of the system, \textit{irrespective} of the true sparsity of the unknown system.
\vspace{2mm}

\noindent{\bf Paper organization:} In Section~\ref{sec:related}, we provide a literature review on different system identification techniques, and explain their connection to our work. The problem is formally defined in Section~\ref{sec:problem}, and the main results are presented in Section~\ref{sec:main}. We provide an empirical study of our method on synthetically generated systems in Section~\ref{sec:simulations}, and end with conclusions and future directions in Section~\ref{sec:conclusion}. To streamline the presentation, the proofs are deferred to the appendix.\vspace{2mm}

\noindent{\bf Notation:} Upper- and lower-case letters are used to denote matrices and vectors, respectively. For a matrix $M\in\mathcal{R}^{m\times n}$ the symbols $M_{:j}$ and $M_{j:}$
indicate the $j^{\text{th}}$ column and row of $M$, respectively. Given a vector $v$ and an index set $\mathcal{S}$, the notation $v_{\mathcal{S}}$ refers to a subvector of $v$ whose indices are restricted to the set $\mathcal{S}$. For a vector $v$, $\|v\|_p$ corresponds to its $\ell_p$-norm. For a matrix $M$, the notation $\|M\|_{p,q}$ is equivalent to $\|\begin{bmatrix}
\|M_{1:}\|_p & \|M_{2:}\|_p &\dots&M_{m:}
\end{bmatrix}\|_q$. Moreover, $\|M\|_q$ refers to the induced $q$-norm of the matrix $M$. The notation $\|M\|_F$ is used to denote the Frobenius norm, defined as $\|M\|_{2,2}$. Furthermore, $\rho(M)$ correspond to the spectral radius of $M$. Given the sequences $f(n)$ and $g(n)$ indexed by $n$, the notation $f(n) = \mathcal{O}(g(n))$ or $f(n)\lesssim g(n)$ implies that there exists a universal constant $C<\infty$, independent of $n$, that satisfies $f(n) \leq Cg(n)$. Moreover, $f(n) = \tilde{\mathcal{O}}(g(n))$ is used to denote $f(n) = \mathcal{O}(g(n))$, modulo logarithmic factors. Similarly, the notation $f(n)\asymp g(n)$ implies that there exist constants $C_1>0$ and $C_2<\infty$, independent of $n$, that satisfy $C_1g(n)\leq f(n) \leq C_2g(n)$. Given two scalars $a$ and $b$, the notation $a\vee b$ denotes their maximum. We use $x\sim \mathcal{N}(\mu, \Sigma)$ to show that $x$ is a multivariate random variable drawn from a Gaussian distribution with mean $\mu$ and covariance $\Sigma$.  For two random variables $x$ and $y$, the notation $x\sim y$ implies that they have the same distribution. $\mathbb{E}[x]$ denotes
the expected value of the random variable $x$. For an event $\mathcal{X}$, the notation $\mathbb{P}(\mathcal{X})$ refers to its probability of occurrence. The scalar $c$ denotes a universal constant throughout the paper.

\section{Related Works}\label{sec:related}

{\bf System identification:} Estimating system models from input/output experiments has a well-developed theory dating back to the 1960s, particularly in the case of linear and time-invariant systems. Standard reference textbooks on the topic include \cite{aastrom1971system,ljung1999system,chen2012identification,goodwin1977dynamic}, all focusing on establishing \emph{asymptotic} consistency of the proposed estimators. On the other hand, contemporary results in statistical learning as applied to system identification seek to characterize \emph{finite time and finite data} rates. For fully observed systems,~\cite{dean2017sample} shows that a simple least-squares estimator can correctly recover the system matrices with multiple trajectories whose length scale linearly with the system dimension. This result was later generalized to the single sample trajectory setting for stable~\cite{dean2018regret}, and unstable ~\cite{simchowitz2018learning, dean2019sample, sarkar2018fast} systems, with sample complexities depending polynomially on the system dimension. These results were later extended to stable~\cite{oymak2019non,sarkar2019finite,tsiamis2019finite,simchowitz2019learning}, and unstable~\cite{zheng2020non} partially observed systems, where it is shown that the system matrices (or their associated Markov parameters) can be learned with similar sample complexities.\vspace{2mm}

\noindent{\bf Regularized estimation:} To further reduce the sample complexity of the system identification, a recent line of works has focused on learning dynamical systems with prior information. The works~\cite{fattahi2019learning, fattahi2018data, fattahi2018non, fattahi2018sample} employ $\ell_1$- and $\ell_1/\ell_\infty$-regularized estimators to learn fully observed sparse systems with sample complexities that scale polynomially in the number nonzero entries in different rows and columns of the system matrices, but only logarithmically in the dimension of the system. However, these methods are not applicable to partially observed systems with hidden states. Another line of works~\cite{sun2020finite, wahlberg2013matrix, cai2016robust} introduces a different regularization technique, where the nuclear norm of the Hankel matrix is minimized to improve the sample complexity of learning inherently low-order systems. In particular,~\cite{sun2020finite} shows that for multiple-input-single-output (MISO) systems with order $R\ll n$, the sample complexity of estimating both Markov parameters and Hankel matrix can be reduced to $\mathcal{O}(R^2)$.

\vspace{2mm}

\noindent{\bf Learning-based control:} Complementary to the aforementioned results, a large body of works study adaptive~\cite{dean2018regret, abbasi2011regret, abbasi2019model, lale2020regret}, robust~\cite{dean2019sample, dean2019safely, mania2019certainty}, or distributed~\cite{fattahi2019efficient, furieri2020learning} control of unknown linear systems. These works, culminated under the umbrella of model-based RL, indicate that if a learned model is to be integrated into a safety-critical control loop, then it is essential that the uncertainty associated with the learned model be explicitly quantified. This way, the learned model and the uncertainty bounds can be integrated with a reach body of tools from robust and adaptive control to provide strong end-to-end guarantees on the system performance and stability. 

\section{Problem Statement}\label{sec:problem}
Consider the following linear time-invariant (LTI) dynamical system:
\begin{align}
x_{t+1} &= Ax_t+Bu_t+w_t\\
y_t &= Cx_t+Du_t+v_t
\end{align}
where $x_t\in\mathbb{R}^{n}$, $u_t\in\mathbb{R}^p$, and $y_t\in \mathbb{R}^m$ are the state, input, and output of the system at time $t$. Moreover, the vectors $w_t\in\mathbb{R}^n$ and $v_t\in\mathbb{R}^m$ are the process (or disturbance) and measurement noises, respectively. Throughout the paper, we assume that both $v_t$ and $w_t$ have element-wise independent sub-Gaussian distributions with parameters $\sigma_w$ and $\sigma_v$, respectively. Moreover, without loss of generality, we assume that $x_0 = 0$\footnote{Our results can be readily extended to scenarios where $x_0$ is randomly drawn from a sub-Gaussian distribution.}. The parameters  $A\in\mathbb{R}^{n\times n}$, $B\in \mathbb{R}^{n\times p}$, $C\in\mathbb{R}^{m\times n}$, and $D\in\mathbb{R}^{m\times p}$ are the unknown system matrices, to be estimated from a single input-output sample trajectory $\{(u_t,y_t)\}_{t=0}^{\bar N}$. Much of the progress on the system identification is devoted to learning different variants of \textit{fully observed} systems, where $C = I$ and $v_t = 0$. While being theoretically important, the practicality of these results are limited, since realistic dynamical systems are not directly observable, or corrupted with measurement noise. 

On the other hand, the lack of ``intermediate'' states $x_t$ in partially observed systems gives rise to a mapping from $u_k$ to $y_k$ that is highly nonlinear in terms of the system parameters:
\begin{align}\label{in_out}
y_t = \underbrace{Du_t+\sum_{\tau=1}^{T-1}CA^{\tau-1}Bu_{t-\tau}}_{\text{ Effect of the last $T$ inputs}}+\underbrace{\sum_{\tau=1}^{T-1}CA^{\tau-1}w_{t-\tau}+v_t}_{\text{ Effect of noise}}+\underbrace{\vphantom{\sum_{\tau=1}^{N}} CA^{T-1}Bx_{t-T+1}}_{\text{ Effect of the state at time $t-T+1$}}
\end{align}
\begin{sloppypar}
	\noindent where $t\geq T-1$. The first term in~\eqref{in_out} captures the effect of the past $T$ inputs on $y_t$, while the second term corresponds to the effect of the unknown disturbance and measurement noises on $y_t$. Finally, the third term controls the contribution of the unknown state $x_{t-T+1}$ on $y_t$, whose effect diminishes exponentially fast with $T$, provided that $A$ is stable. A closer look at the first term reveals that the relationship between $y_t$ and $\{u_t,u_{t-1},\dots,u_{t-T+1}\}$ becomes linear in terms of the \textit{Markov matrix} 
	\begin{align}
		G = \begin{bmatrix}
		D\! &\! G_0\! &\! G_1 \dots G_{T-2}
		\end{bmatrix} = \begin{bmatrix}
		D\! &\! CB\! &\! CAB \dots CA^{T-2}B
		\end{bmatrix}\in\mathbb{R}^{m\times Tp},
	\end{align} whose components are commonly known as {Markov parameters} of the system. One of the main goals of this paper is to obtain an accurate estimate of $G$ given a single input-output trajectory. The Markov parameters can be used to directly estimate the outputs of the system from the past input. Moreover, as will be shown later, a good estimation of the Markov parameters can be translated into an accurate estimate of the Hankel matrix, which in turn can be used in the model reduction and $\mathcal{H}_\infty$ methods in control theory~\cite{antoulas2005approximation, zhou1998essentials}. Finally, given the estimated Markov matrix $G$, one can recover estimates of the system matrices. Note that it is only possible to extract the system parameters up to a nonsingular transformation: given any nonsingular matrix $T$, the system matrices $(A,B,C,D)$ and $(T^{-1}AT,TB,CT^{-1},D)$ correspond to the same Markov matrix. Therefore, a common approach for recovering the system matrices is to first construct the associated Hankel matrix, and then extract a \textit{realization} of the system parameters from the Hankel matrix, e.g. via the Ho-Kalman method~\cite{ho1966effective, ljung1999system}. In fact, it has been recently shown in~\cite{oymak2019non, sarkar2019finite} that the Ho-Kalman method can robustly obtain a balanced realization of the system matrices, provided that the estimated Markov matrix enjoys a small estimation error.
\end{sloppypar}

\begin{proposition}[Oymak and Ozay~\cite{oymak2019non}, informal]\label{prop_hokalman}
	Suppose that the true system is controllable and observable. Given an estimate $\widehat{G}$ of $G$, the Ho-Kalman method outputs system matrices $(\widehat{A},\widehat{B},\widehat{C},\widehat{D})$ that satisfy
	\begin{align}
		\|B-\mathcal{U}\widehat{B}\|_F&\lesssim \sqrt{T}\|G-\widehat{G}\|_F\\
		\|C-\widehat{C}\mathcal{U}^{\top}\|_F&\lesssim \sqrt{T}\|G-\widehat{G}\|_F\\
		\|A-\mathcal{U}\widehat{A}\mathcal{U}^{\top}\|_F&\lesssim {T}\|G\|_2\|G-\widehat{G}\|_F
	\end{align}
	for some unitary matrix $\mathcal{U}$, provided that $\widehat{G}$ is sufficiently close to $G$.
\end{proposition}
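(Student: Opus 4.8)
The plan is to establish the robustness of the three stages of the Ho-Kalman algorithm in turn: (i) assembling a block-Hankel matrix from the Markov parameters, (ii) extracting a balanced realization of $(\widehat{C},\widehat{B})$ through a rank-$n$ truncated SVD, and (iii) recovering $\widehat{A}$ from a once-shifted Hankel matrix. Throughout, the estimates are only identifiable up to an orthogonal change of basis, which is why a single unitary $\mathcal{U}$ appears in all three bounds, and the controllability/observability hypothesis is what keeps the relevant Hankel matrix of rank exactly $n$ with a strictly positive smallest nonzero singular value.

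First I would form the block-Hankel matrix $H$ whose blocks are the Markov parameters arranged so that it factors as $H=\mathcal{O}\mathcal{C}$ into the observability matrix $\mathcal{O}$ and controllability matrix $\mathcal{C}$, together with the shifted matrix $H^{+}=\mathcal{O}A\mathcal{C}$, and their estimated counterparts $\widehat{H},\widehat{H}^{+}$ built from $\widehat{G}$. Since each Markov parameter is repeated along the anti-diagonals of $H$ with multiplicity $\mathcal{O}(T)$, a direct count gives $\|H-\widehat{H}\|_F\lesssim\sqrt{T}\,\|G-\widehat{G}\|_F$, and likewise for $H^{+}$. This reduces the statement to a matrix-perturbation question about $H$.

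For the second stage I would write the rank-$n$ SVD $H=U\Sigma V^{\top}$ and set $\widehat{C},\widehat{B}$ to be the leading block row and column of $\mathcal{O}=U\Sigma^{1/2}$ and $\mathcal{C}=\Sigma^{1/2}V^{\top}$. Weyl's inequality controls the perturbation of the singular values while Wedin's $\sin\Theta$ theorem controls the rotation of the singular subspaces; because the balancing uses $\Sigma^{1/2}$, the resulting factor bounds read $\|\mathcal{O}-\widehat{\mathcal{O}}\mathcal{U}\|_F,\ \|\mathcal{C}-\mathcal{U}^{\top}\widehat{\mathcal{C}}\|_F\lesssim\|H-\widehat{H}\|_F$, with hidden constants depending on the extreme nonzero Hankel singular values (here "$\widehat{G}$ sufficiently close" ensures the rank-$n$ truncation does not mix in spurious directions). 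Restricting these factor bounds to the relevant sub-blocks and chaining with the estimate of the previous paragraph yields the stated bounds on $B$ and $C$.

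The hardest step is the bound on $A$, obtained from $\widehat{A}=\widehat{\mathcal{O}}^{\dagger}\widehat{H}^{+}\widehat{\mathcal{C}}^{\dagger}$. I would expand $\widehat{A}-\mathcal{U}^{\top}A\,\mathcal{U}$ into terms carrying the perturbations of the two pseudoinverses and of $H^{+}$. The delicate point is the pseudoinverse perturbation: $\|\mathcal{O}^{\dagger}-(\widehat{\mathcal{O}}\mathcal{U})^{\dagger}\|$ scales like $\sigma_n(H)^{-1}$ times the factor perturbation and must be kept in its linear regime via the standard perturbation bound for pseudoinverses of matrices of equal rank, which once more leans on the closeness hypothesis. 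Collecting the dominant term, in which the middle factor $\widehat{H}^{+}$ supplies the overall scale $\|G\|_2$ of the Hankel entries while the two perturbed pseudoinverses supply the $\sigma_n(H)$ dependence, produces the extra $\|G\|_2$ factor together with one additional power of $\sqrt{T}$ relative to the $B,C$ bounds, giving $\|A-\mathcal{U}\widehat{A}\mathcal{U}^{\top}\|_F\lesssim T\,\|G\|_2\,\|G-\widehat{G}\|_F$. The main obstacle is bookkeeping: threading the \emph{same} orthogonal $\mathcal{U}$ consistently through all three estimates and verifying that the pseudoinverse perturbations never leave their linear regime.
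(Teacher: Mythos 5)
The paper contains no proof of this proposition: it is imported verbatim (and explicitly labeled ``informal'') from Oymak and Ozay~\cite{oymak2019non}, so there is no in-paper argument to compare against. Your sketch is nonetheless a faithful reconstruction of the proof in that reference, with the same three ingredients in the same order: the $\sqrt{T}$ factor from counting the multiplicity of each Markov block along the anti-diagonals of the Hankel matrix, a balanced-factorization perturbation bound for the rank-$n$ SVD, and a pseudoinverse perturbation argument for $A=\mathcal{O}^{\dagger}H^{+}\mathcal{C}^{\dagger}$ that contributes the extra $\|G\|_2$ and the second $\sqrt{T}$. Two points of precision are worth flagging. First, the SVD in Ho--Kalman is taken of the sub-Hankel matrix $H^{-}$ (last block column removed), not of $H$ itself, so the rank-$n$ and $\sigma_n$ conditions should be stated for $H^{-}$. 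Second, the existence of a \emph{single} unitary $\mathcal{U}$ that simultaneously aligns $\widehat{\mathcal{O}}$ with $\mathcal{O}$ and $\widehat{\mathcal{C}}$ with $\mathcal{C}$ does not follow from applying Weyl and Wedin to each factor separately; it is the content of a dedicated balanced-factorization lemma (Tu et al., used as a black box in~\cite{oymak2019non}), which bounds $\|\mathcal{O}-\widehat{\mathcal{O}}\mathcal{U}\|_F^2+\|\mathcal{C}-\mathcal{U}^{\top}\widehat{\mathcal{C}}\|_F^2$ by $\|H^{-}-\widehat{H}^{-}\|_F^2/\sigma_n(H^{-})$ up to constants, under the closeness hypothesis $\|H^{-}-\widehat{H}^{-}\|_2\leq\sigma_n(H^{-})/2$. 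Invoking that lemma directly, rather than re-deriving it from two independent $\sin\Theta$ bounds, is what makes the ``same $\mathcal{U}$ in all three estimates'' bookkeeping go through; with that substitution your outline matches the cited proof.
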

Therefore, without loss of generality, our focus will be devoted to obtaining accurate estimates of the Markov and Hankel matrices. 
To streamline the presentation, the concatenated input and process noise vectors are defined as:
\begin{align}
\bar{u}_t &= \begin{bmatrix}
u_t^\top & u_{t-1}^\top & \dots & u_{t-T+1}^\top
\end{bmatrix}^\top\in\mathbb{R}^{Tp},\\
\bar{w}_t &= \begin{bmatrix}
w_t^\top & w_{t-1}^\top & \dots & w_{t-T+1}^\top
\end{bmatrix}^\top\in\mathbb{R}^{Tn},
\end{align}
Moreover, the following concatenated matrix will be used throughout the paper:
\begin{align}
F = \begin{bmatrix}
0 & C & CA & \dots & CA^{T-2}
\end{bmatrix}\in \mathbb{R}^{m\times Tn}
\end{align}
Based on the above definitions, the input-output relation~\eqref{in_out} can be written compactly as
\begin{align}
y_t = G\bar{u}_t+F\bar{w}_t+e_t+v_t
\end{align}
where $e_t = CA^{T-1}x_{t-T+1}$.
To estimate the Markov matrix $G$, the work~\cite{oymak2019non} proposes the following least-squares estimator:
\begin{align}
\widetilde{G} = \arg\min_X\sum_{t=T-1}^{N+T-2}\|y_{t}-X\bar{u}_{t}\|_2^2
\end{align}
Define $q = p+n+m$ as the system dimension, and $\sigma^2_e$ as the effective variance of $e_t$, as in \begin{align}
\sigma_e = \Phi(A)\|CA^{T-1}\|\sqrt{\frac{T\|\Gamma_{\infty}\|}{1-\rho(A)^{2T}}}
\end{align}
where 
\begin{align}\label{eq_cov}
\Phi(A) = \sup_{\tau\geq 0} \frac{\|A^\tau\|}{\rho(A)^\tau},\quad \Gamma_{\infty} = \sum_{i=0}^{\infty}\sigma_w^2A^i(A^\top)^i+\sigma_u^2A^iBB^\top(A^\top)^i
\end{align}
The work~\cite{oymak2019non} characterizes the non-asymptotic behavior of the least-squares estimate $\widetilde{G}$.
\begin{theorem}[Oymak and Ozay \cite{oymak2019non}]\label{thm_ls}
	Suppose that $u_t\sim\mathcal{N}(0,\sigma_u^2I)$ for every $t = 0,\dots, T+N-2$, and $N \gtrsim Tq\log^2(Tq)\log^2(Nq)$. Then, with overwhelming probability, the following inequalities hold:
	\begin{align}
	&\|\widetilde{G}-G\|_2 \lesssim \frac{\sigma_v+\sigma_e+\sigma_w\|F\|_2}{\sigma_u}\sqrt{\frac{Tq\log^2(Tq)\log^2(Nq)}{N}},\\
	&\|\widetilde{G}-G\|_F \lesssim \frac{(\sigma_v+\sigma_e)\sqrt{m}+\sigma_w\|F\|_2}{\sigma_u}\sqrt{\frac{Tq\log^2(Tq)\log^2(Nq)}{N}}\label{bound_ls_F}
	\end{align}
	where $\sigma^2_u$, $\sigma^2_v$, $\sigma^2_w$ are the variances of the random input, disturbance noise, and the measurement noise, respectively.
\end{theorem}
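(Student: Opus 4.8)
The plan is to run the classical least-squares error analysis, specialized to this dynamical-system setting, where the two genuine complications are the temporal dependence buried in the noise terms and the overlapping structure of the regressors $\bar{u}_t$. First I would stack the $N$ samples into matrices: writing $Y = \begin{bmatrix} y_{T-1} & \cdots & y_{N+T-2}\end{bmatrix}$, $U = \begin{bmatrix} \bar{u}_{T-1} & \cdots & \bar{u}_{N+T-2}\end{bmatrix}$, and collecting the error contributions into a matrix $W$ whose columns are $F\bar{w}_t + e_t + v_t$, the input--output relation becomes $Y = GU + W$, and the normal equations yield the exact identity
\begin{equation}
\widetilde{G} - G = W U^{\top}\left(U U^{\top}\right)^{-1}.
\end{equation}
From here the theorem reduces to two quantitative ingredients: a lower bound on the smallest singular value of the design matrix $UU^{\top}$, and an upper bound on the cross-correlation $WU^{\top}$ (suitably normalized).

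For the design matrix, I would show $UU^{\top} \succeq c\,N\sigma_u^2 I$ with overwhelming probability, together with a matching upper bound. Since $u_t \sim \mathcal{N}(0,\sigma_u^2 I)$ are i.i.d., the concatenated regressors $\bar{u}_t \in \mathbb{R}^{Tp}$ are jointly Gaussian with block-Toeplitz correlation across the window, and $\tfrac1N UU^{\top}$ concentrates around $\sigma_u^2 I$. I expect the dimension $Tq$ and the poly-logarithmic factors to enter precisely here, through an $\epsilon$-net over the unit sphere in $\mathbb{R}^{Tp}$ combined with sub-Gaussian tail bounds and a union bound that must also absorb the weak temporal dependence induced by the overlap between consecutive windows $\bar{u}_t$ and $\bar{u}_{t+1}$.

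For the cross term I would split $W = W_v + W_w + W_e$ according to the measurement noise $v_t$, the process-noise contribution $F\bar{w}_t$, and the state-truncation term $e_t = CA^{T-1}x_{t-T+1}$. The measurement part $W_v$ is independent of $U$ and succumbs to a direct sub-Gaussian concentration bound; the process-noise part $W_w$ is also independent of the inputs, so conditioning on $U$ and reorganizing the sum over the independent $w_s$ (each appearing in several overlapping $\bar{w}_t$) handles it, with $\sigma_w\|F\|_2$ emerging as the effective scale. The truncation part $W_e$ is where stability of $A$ is indispensable and where I anticipate the main obstacle: $e_t$ depends on inputs preceding its own window, so it is coupled to the regressors at other times, and no conditioning decouples it cleanly. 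I would control $\|WU^{\top}(UU^{\top})^{-1/2}\|$ by a self-normalized martingale concentration inequality with respect to the natural filtration generated by the inputs and noises; the effective variance $\sigma_e$, built from $\Phi(A)$ and $\Gamma_{\infty}$ and the geometric decay of $CA^{T-1}$, is exactly the uniform sub-Gaussian control needed to fold $e_t$ into this bound without losing more than logarithmic factors.

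Combining the ingredients via $\|\widetilde{G}-G\|_2 \le \|WU^{\top}(UU^{\top})^{-1/2}\|_2 \cdot \lambda_{\min}(UU^{\top})^{-1/2}$ produces the spectral-norm estimate, after identifying the composite noise level $\sigma_v + \sigma_e + \sigma_w\|F\|_2$. The Frobenius bound \eqref{bound_ls_F} follows the same route but tracks the contribution per output coordinate: the terms $v_t$ and $e_t$ live isotropically in $\mathbb{R}^m$ and hence contribute an extra $\sqrt{m}$, whereas the structured term $F\bar{w}_t$ is already measured in operator norm and does not. In summary, the conceptual difficulty is reconciling two distinct dependency structures simultaneously---regressor overlap in $U$ and genuine temporal coupling in $W_e$---which is what rules out an off-the-shelf matrix Bernstein argument and forces the combination of a careful net bound for the design matrix with the self-normalized martingale machinery for the noise, all while keeping the sample requirement at $N \gtrsim Tq\log^2(Tq)\log^2(Nq)$.
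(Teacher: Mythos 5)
This statement is not proved in the paper: Theorem~\ref{thm_ls} is quoted verbatim from Oymak and Ozay~\cite{oymak2019non} as background, and the appendix contains proofs only of Theorem~\ref{thm_main}, Propositions~\ref{prop_deterministic}--\ref{prop_lambda}, and the corollaries. So there is no in-paper proof to compare against; your proposal can only be judged against the architecture of the cited reference. At that level, your outline is the right one and matches what is actually done there: write $\widetilde{G}-G = WU^{\top}(UU^{\top})^{-1}$ (with the appropriate transposition conventions), prove a lower bound on $\lambda_{\min}(UU^{\top})$ of order $N\sigma_u^2$, split the residual into the $v_t$, $F\bar{w}_t$, and $e_t = CA^{T-1}x_{t-T+1}$ contributions, and control each cross term; the composite scale $\sigma_v+\sigma_e+\sigma_w\|F\|_2$ and the extra $\sqrt{m}$ in the Frobenius bound arise exactly as you describe.

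The gap is that the two steps you correctly identify as the hard ones are named but not executed, and they are not routine. The rows $\bar{u}_t$ and $\bar{u}_{t+1}$ share $T-1$ of their $T$ blocks, so $U$ is (up to reshaping) a random block-Toeplitz matrix; a generic $\epsilon$-net over the sphere plus independent sub-Gaussian tails does not apply, and the $Tq\log^2(Tq)\log^2(Nq)$ sample requirement is precisely the price of the specialized machinery for such matrices (suprema-of-chaos-process bounds for partial circulant/Toeplitz ensembles, the same tool invoked in this paper's discussion around Lemma~\ref{l_f} via~\cite{krahmer2014suprema, sarkar2019finite}). Without it your design-matrix step is an assertion, not a proof. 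Similarly, for the truncation term $E$ you gesture at a self-normalized martingale bound, but $e_t$ depends on inputs and disturbances strictly \emph{older} than the window $\bar{u}_t$, so one must either set up the filtration so that $e_t$ is predictable relative to the regressors, or (as is more standard for this term) bound $\|E\|$ directly using the stationary covariance $\Gamma_{\infty}$ and the decay of $\|CA^{T-1}\|$, which is exactly how the quantity $\sigma_e$ in~\eqref{eq_cov} is manufactured; your sketch does not commit to either route. As a roadmap the proposal is sound; as a proof it is incomplete at the two places where all the work lies.
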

The above theorem shows that the spectral norm of the estimation error for the Markov parameters via least-squares method is in the order of $\tilde{\mathcal{O}}\left(\sqrt{{T(n+m+p)}/{N}}\right)$, provided that $N = \tilde{\mathcal{O}}(T(n+m+p))$. Moreover,~\cite{oymak2019non} shows that the number of samples $N$ can be reduced to $\tilde{\mathcal{O}}(Tp)$ (without improving the spectral norm error). Such dependency on the system dimension is unavoidable if one does not exploit any prior information on the structure of $G$: roughly speaking, the Markov parameter $G$ has $Tmp$ unknown parameters, and one needs to collect at least $Tp$ outputs (each with size $m$) to obtain a well-defined least-squares estimator. 
Evidently, such dependency on the system dimension may be prohibitive for large-scale and safety-critical systems, where it is expensive to collect as many output samples. Motivated by this shortcoming of the existing methods, we aim to address the following open question: \vspace{2mm}

\noindent {\bf Question: }{\it Can partially observed linear systems be learned in a logarithmic sample complexity?}

\section{Main Results}\label{sec:main}
In this section, we provide an affirmative answer to the aforementioned question. At a high-level, we will use the fact that, due to the stability of $A$, the Markov parameters decay \textit{exponentially fast}, which in turn implies that the rows of the extended matrix $G$ exhibit a bounded $\ell_1$-norm (also known as weak sparsity~\cite{wainwright2019high}). This observation strongly motivates the use of the following regularized estimator:
\begin{align}\label{lasso1}
\widehat{G} = \arg\min_X\left(\frac{1}{2N}\sum_{t=T-1}^{N+T-2}\|y_{t}-X\bar{u}_{t}\|_2^2\right)+\lambda\|X\|_{1,1}
\end{align}
\begin{sloppypar}
	\noindent Due to the stability of $A$, there exist scalars $\Csys\geq 1$ and $\rho<1$ such that $\|A^\tau\|_1\leq \Csys\rho^\tau$. Without loss of generality and to simplify the notation, we assume that $\max\{\|B\|_1, \|C\|_1, \|D\|_1\}\leq \Csys$. Finally, define the \textit{effective variance} of the disturbance noise as $\bar{\sigma}_w = \left(\frac{\Csys^2}{1-\rho}\right)\sigma_w$.
	The main result of the paper is the following theorem:
\end{sloppypar}
\begin{theorem}\label{thm_main}
	Suppose that $u_t\sim\mathcal{N}(0,\sigma_u^2I)$ for every $t = 0,\dots, T+N-2$. Moreover, suppose that $N$ and $T$ satisfy the following inequalities:
	\begin{align}\label{TN}
	N\gtrsim \log^2(Tp),\qquad T\gtrsim T_0 = \frac{\log\log(Nn+Tp)+\log\left(\frac{\Csys}{1-\rho}\right)+\log(\sigma_w+\sigma_v)+\log\left(\frac{1}{\epsilon}\right)}{1-\rho}
	\end{align}
	for an arbitrary $\epsilon>0$. Finally, assume that $\lambda$ is chosen such that
	\begin{align}\label{lambda}
	\lambda \asymp \sigma_u\left(\bar\sigma_w+\sigma_v\right)\sqrt{\frac{\log(Tpn)}{N}}+\epsilon
	\end{align}
	Then, with overwhelming probability, the following inequalities hold:
\begin{align}
\|G-\widehat{G}\|_{2,\infty}&\lesssim \mathcal{E}_1\vee\mathcal{E}_2\label{bound_2inf}\\
\|G-\widehat{G}\|_F&\lesssim \sqrt{m}(\mathcal{E}_1\vee\mathcal{E}_2)\label{bound_F}
\end{align}
where 
\begin{align}
	&\mathcal{E}_1 = \sqrt{\frac{\Csys^3}{1-\rho}}\left(\sqrt{\frac{\bar{\sigma}_w+\sigma_v}{\sigma_u^3}}\left(\frac{\log(Tpn)}{N}\right)^{1/4}+\frac{\epsilon}{\sigma_u^2}\right)\nonumber\\
	&\mathcal{E}_2 = {\frac{\Csys^3}{1-\rho}}\left(\frac{\log(Tp)}{N}\right)^{1/4}
\end{align}
\end{theorem}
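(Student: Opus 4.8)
The plan is to exploit that the estimator~\eqref{lasso1} is separable across the rows of $X$---since $\|X\|_{1,1}=\sum_{i=1}^m\|X_{i:}\|_1$ and the quadratic loss splits row-wise---so that each row $\widehat{G}_{i:}$ solves an ordinary (vector) Lasso with the common design $\{\bar u_t\}$ and response $y_{t,i}=G_{i:}\bar u_t+\eta_{t,i}$, where the effective noise is $\eta_{t,i}=(F\bar w_t)_i+e_{t,i}+v_{t,i}$. I would prove the scalar per-row bound $\|G_{i:}-\widehat G_{i:}\|_2\lesssim\mathcal{E}_1\vee\mathcal{E}_2$ and then aggregate: a union bound over $i=1,\dots,m$ gives~\eqref{bound_2inf}, and summing the squared row errors gives the extra $\sqrt m$ in~\eqref{bound_F}. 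The first ingredient is the \emph{weak sparsity} of each row: using $\|A^\tau\|_1\le\Csys\rho^\tau$ and $\max\{\|B\|_1,\|C\|_1,\|D\|_1\}\le\Csys$, every block $CA^{\tau-1}B$ contributes at most $\Csys^3\rho^{\tau-1}$, so $\|G_{i:}\|_1\lesssim R:=\Csys^3/(1-\rho)$, a radius independent of the dimensions. The second ingredient is that the truncation term $e_t=CA^{T-1}x_{t-T+1}$ is forced below $\epsilon$ by the requirement $T\gtrsim T_0$ in~\eqref{TN}.

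With $\Delta:=\widehat G_{i:}^\top-G_{i:}^\top$ and $\bar U$ the $N\times Tp$ matrix with rows $\bar u_t^\top$, the Lasso basic inequality reads
\begin{align}
\frac{1}{2N}\|\bar U\Delta\|_2^2\le\Bigl\langle\tfrac1N\bar U^\top\eta_{\cdot,i},\,\Delta\Bigr\rangle+\lambda\bigl(\|G_{i:}\|_1-\|\widehat G_{i:}\|_1\bigr).
\end{align}
I would first certify that the $\lambda$ in~\eqref{lambda} satisfies $\lambda\ge2\|\tfrac1N\bar U^\top\eta_{\cdot,i}\|_\infty$ with overwhelming probability. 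A useful structural fact is that for a \emph{fixed} coordinate $j$ the scalars $\{\bar u_{t,j}\}_t$ are i.i.d.\ $\mathcal N(0,\sigma_u^2)$ (distinct past inputs), so, conditioning on the noise, each coordinate of $\tfrac1N\bar U^\top\eta_{\cdot,i}$ is a weighted sum of independent Gaussians and concentrates cleanly; a Bernstein step for the noise energy $\tfrac1N\sum_t\eta_{t,i}^2$ and a union bound over the $Tp$ coordinates and the $n$ noise directions entering $F\bar w_t$ produce exactly the $\sigma_u(\bar\sigma_w+\sigma_v)\sqrt{\log(Tpn)/N}+\epsilon$ scaling, the last summand coming from $e_t$. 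Combined with weak sparsity, the basic inequality then yields the two consequences I need: $\|\Delta\|_1\le 4R$ and $\tfrac1{2N}\|\bar U\Delta\|_2^2\le 2\lambda R$.

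The crux is a restricted lower-isometry for the \emph{dependent} design. Because $\mathbb E[\bar u_t\bar u_t^\top]=\sigma_u^2 I_{Tp}$, the population Gram is ideal, and it suffices to control the empirical deviation entrywise and convert it via
\begin{align}
\frac1N\|\bar U\Delta\|_2^2\ \ge\ \sigma_u^2\|\Delta\|_2^2-\Bigl\|\tfrac1N\bar U^\top\bar U-\sigma_u^2 I\Bigr\|_{\max}\|\Delta\|_1^2.
\end{align}
The obstacle is precisely the entrywise concentration $\|\tfrac1N\bar U^\top\bar U-\sigma_u^2 I\|_{\max}\lesssim\sigma_u^2\sqrt{\log(Tp)/N}$: unlike the noise term, the off-diagonal summands $u_{t-\ell,k}u_{t-\ell',k'}$ are temporally \emph{dependent} across $t$ (different lags $\ell\neq\ell'$ make distinct times share an input), so the i.i.d.\ argument fails and one must invoke a Hanson--Wright-type bound for this Toeplitz-structured quadratic form in Gaussian inputs, in the spirit of the concentration machinery of~\cite{oymak2019non}. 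The condition $N\gtrsim\log^2(Tp)$ in~\eqref{TN} is what keeps this deviation small.

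Finally I would combine the pieces. Chaining the lower-isometry with $\tfrac1{2N}\|\bar U\Delta\|_2^2\le2\lambda R$ and $\|\Delta\|_1\le 4R$ gives
\begin{align}
\sigma_u^2\|\Delta\|_2^2\ \lesssim\ \lambda R+\sigma_u^2\sqrt{\tfrac{\log(Tp)}N}\,R^2,
\end{align}
whose square root is exactly $\mathcal{E}_1\vee\mathcal{E}_2$: the noise term $\lambda R$ yields $\mathcal{E}_1$, of the order $\sqrt{R\lambda}$ up to $\sigma_u$-normalization, hence the $(\log(Tpn)/N)^{1/4}$ rate together with the $\epsilon$-contribution, while the entrywise slack yields the noise-independent $\mathcal{E}_2\asymp R\,(\log(Tp)/N)^{1/4}$. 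Tracking the exact powers of $\sigma_u$ and the constants, together with the per-row aggregation, completes the proof of~\eqref{bound_2inf}--\eqref{bound_F}. The step I expect to dominate the effort is the dependent-design entrywise Gram concentration, with the analogous but easier noise-gradient concentration (thanks to the per-coordinate independence noted above) as a secondary technical burden.
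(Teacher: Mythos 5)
Your proposal is correct and follows the same overall skeleton as the paper: row-wise decomposition of the Lasso, the weak-sparsity radius $R\asymp \Csys^3/(1-\rho)$ (the paper's Proposition~\ref{prop_G}), a lower bound on $\lambda$ dominating the three noise terms $WF_{i:}^\top$, $E_{:i}$, $V_{:i}$ (Proposition~\ref{prop_lambda}, with the $e_t$ contribution absorbed into $\epsilon$ exactly as you describe), and a restricted-singular-value inequality with an additive tolerance proportional to $\sqrt{\log(Tp)/N}\,\|\Delta_{i:}\|_1^2$ whose two branches produce $\mathcal{E}_1$ and $\mathcal{E}_2$. Where you genuinely diverge is in how that tolerance inequality is certified for the dependent Toeplitz design. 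The paper (Lemma~\ref{l_f}) applies Hanson--Wright directly to the quadratic form $\|U\theta\|_2^2\sim \sigma_u^2\,w^\top P(\theta)w$ for a given direction $\theta$, bounding $\|P(\theta)\|_2\le\|\theta\|_2^2+\|\theta\|_1^2$ via Gershgorin and Young's convolution inequality; you instead prove entrywise concentration of the Gram matrix, $\|\tfrac1N U^\top U-\sigma_u^2I\|_{\max}\lesssim\sigma_u^2\sqrt{\log(Tp)/N}$ (Hanson--Wright per entry plus a union bound over $(Tp)^2$ pairs), and then pass to the quadratic form by H\"older. Both routes cost the same logarithmic factor, but yours yields a bound that holds \emph{uniformly} over all $\Delta$, which cleanly justifies plugging in the data-dependent error vector---a point the paper's fixed-$\theta$ Lemma~\ref{l_f} glosses over---at the price of needing the dependent-entry concentration for the off-diagonal lags, which you correctly identify as the main burden. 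A second, smaller difference: in the final deterministic step you use the crude bound $\|\Delta_{i:}\|_1\le 4R$ and $\tfrac1{2N}\|U\Delta_{i:}\|_2^2\le 2\lambda R$, giving $\|\Delta_{i:}\|_2^2\lesssim \lambda R/\kappa$, whereas the paper's Proposition~\ref{prop_deterministic} runs a threshold-set argument with $\mathcal{S}=\{j:|g_j^*|\ge\lambda\}$ and obtains $\lambda R/\kappa^2$; since $\kappa=\sigma_u^2/4\le1$ is assumed there, your bound is the stronger one and implies the stated $\mathcal{E}_1$ (your powers of $\sigma_u$ differ from the theorem's, in your favor). So the proposal is sound; the only caveat is to treat the $U$--$E$ cross term by the crude factorization $\|U\|_{\infty,\infty}\|E_{:i}\|_1$ rather than by conditioning, since $e_t$ depends on past inputs.
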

The above theorem can be used to provide estimation error bounds on the higher order Markov parameters and Hankel matrices (which can be used to recover a realization of the system parameters $\{A,B,C,D\}$, as delineated in Proposition~\ref{prop_hokalman}). Similar to~\cite{oymak2019non}, define the true and estimated $K^{\text{th}}$ order (where $K\geq T$) Markov parameters as
\begin{align}
	G^{(K)} &= \begin{bmatrix}
	D & CB & CAB & \dots & CA^{K-2}B
	\end{bmatrix}\in\mathbb{R}^{m\times Kp},\\ \widehat{G}^{(K)} &= \begin{bmatrix}
	\widehat{G} & \mathbf{0}_{m\times (K-T)p}
	\end{bmatrix}\in\mathbb{R}^{m\times Kp}
\end{align}
Moreover, define the true and estimated $K^{\text{th}}$ order Hankel matrices as
\begin{align}
	&H^{(K)} = \begin{bmatrix}
	D & CB & \dots & CA^{K-2}B\\
	CB & CAB & \dots & CA^{K-1}B\\
	 & & \vdots & \\
	 CA^{K-2}B & CA^{K-1}B & \dots & CA^{2K-3}B
	\end{bmatrix}\in\mathbb{R}^{Km\times Kp},\\
	& \widehat{H}^{(K)} = \begin{bmatrix}
	\widehat D & \widehat G_0 & \dots & \widehat G_{T-3} &\widehat G_{T-2} & \mathbf{0}_{m\times n} & \dots & \mathbf{0}_{m\times n}\\
	\widehat G_0 & \widehat G_1 & \dots & \widehat G_{T-2} &\mathbf{0}_{m\times n} & \mathbf{0}_{m\times n} & \dots & \mathbf{0}_{m\times n}\\
	& & \vdots & \\
	\widehat G_{T-2} & \mathbf{0}_{m\times n} & \dots & \mathbf{0}_{m\times n} &\mathbf{0}_{m\times n} & \mathbf{0}_{m\times n} & \dots & \mathbf{0}_{m\times n}\\
	\mathbf{0}_{m\times n} & \mathbf{0}_{m\times n} & \dots & \mathbf{0}_{m\times n} &\mathbf{0}_{m\times n} & \mathbf{0}_{m\times n} & \dots & \mathbf{0}_{m\times n}\\
	& & \vdots & \\
	\mathbf{0}_{m\times n} & \mathbf{0}_{m\times n} & \dots & \mathbf{0}_{m\times n} &\mathbf{0}_{m\times n} & \mathbf{0}_{m\times n} & \dots & \mathbf{0}_{m\times n}
	\end{bmatrix}\in\mathbb{R}^{Km\times Kp}
\end{align}
 Our next corollary follows from Theorem~\ref{thm_main}.
 \begin{corollary}\label{cor_HM}
 	Suppose that $u_t\sim\mathcal{N}(0,\sigma_u^2I)$ for every $t = 0,\dots, T+N-2$, and $N$ and $\lambda$ satisfy~\eqref{TN} and~\eqref{lambda}, respectively. Moreover, assume that $T\gtrsim T_0\vee\left(\log(\|C\|_\infty)+\log{(1/\tilde\epsilon)}\right)/(1-\rho)$ for an arbitrary $\tilde{\epsilon}>0$. Then, for any $K\geq T$ (including $K=\infty$), the following inequalities hold with overwhelming probability:
 	\begin{align}
 		\|G^{(K)}-\widehat{G}^{(K)}\|_{2,\infty} &\lesssim \mathcal{E}_1\vee\mathcal{E}_2+\tilde{\epsilon},\label{GK_2inf}\\
 		\|G^{(K)}-\widehat{G}^{(K)}\|_{F} &\lesssim \sqrt{m}(\mathcal{E}_1\vee\mathcal{E}_2+\tilde{\epsilon})\label{GK_F}\\
 		\|H^{(K)}-\widehat{H}^{(K)}\|_{2,\infty} &\lesssim \mathcal{E}_1\vee\mathcal{E}_2+\tilde{\epsilon},\label{HK_2inf}\\
 		\|H^{(K)}-\widehat{H}^{(K)}\|_{F} &\lesssim \sqrt{Tm}(\mathcal{E}_1\vee\mathcal{E}_2+\tilde{\epsilon})\label{HK_F}
 	\end{align}
 \end{corollary}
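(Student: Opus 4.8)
The plan is to reduce all four inequalities to Theorem~\ref{thm_main} by splitting each error matrix into an \emph{estimation} block --- where $\widehat{G}^{(K)}$ and $\widehat{H}^{(K)}$ carry entries of the computed estimate $\widehat{G}$ --- and a \emph{truncation} block, consisting of the true higher-order Markov parameters $CA^{k}B$ with $k\geq T-1$ that have been replaced by zeros. Writing $c_i^{\top}$ for the $i$-th row of $C$, the whole argument rests on a single geometric-decay estimate for the truncation tail: for any $\ell\geq T-1$,
\[
\max_i \sqrt{\textstyle\sum_{k\geq \ell}\|c_i^{\top}A^{k}B\|_2^2}\;\lesssim\;c\,\rho^{\ell},
\]
which follows from $\|c_i^{\top}A^{k}B\|_2\lesssim \Csys^2\|C\|_\infty\,\rho^{k}$ via submultiplicativity of the induced norms together with $\|A^{k}\|_1\le\Csys\rho^{k}$ and $\|B\|_1\le\Csys$. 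Under the stated lower bound $T\gtrsim T_0\vee(\log\|C\|_\infty+\log(1/\tilde\epsilon))/(1-\rho)$, the factor $\rho^{T-1}=e^{-(T-1)\log(1/\rho)}\le e^{-(T-1)(1-\rho)}$ is small enough to push this tail below $\tilde\epsilon$, while the $\Csys$- and $(1-\rho)$-dependent constants are already absorbed into $T_0$.

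For the higher-order Markov parameters, observe that $G^{(K)}-\widehat{G}^{(K)}=[\,G-\widehat{G}\mid CA^{T-1}B\;\cdots\;CA^{K-2}B\,]$, so that the first $Tp$ columns are exactly the error controlled by Theorem~\ref{thm_main} and the remaining columns form the truncation tail. Taking the $i$-th row and using $\sqrt{a^2+b^2}\le a+b$, the $(2,\infty)$ norm is at most $\|G-\widehat{G}\|_{2,\infty}$ plus the tail, which yields \eqref{GK_2inf}; the Frobenius bound \eqref{GK_F} then follows from $\|M\|_F\le\sqrt{m}\,\|M\|_{2,\infty}$ for an $m$-row matrix $M$.

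The Hankel case requires bookkeeping on the block structure: the $(r,s)$ block of $H^{(K)}$ is $CA^{r+s-3}B$ (with $CA^{-1}B:=D$), and $\widehat{H}^{(K)}$ replaces it by $\widehat{G}_{r+s-3}$ when $r+s\le T+1$ and by zero otherwise. Hence the first block-row of the error is precisely $G^{(K)}-\widehat{G}^{(K)}$, and for a general block-row $r$ the row-$i$ contribution splits into (i) an estimation part $\sqrt{\sum_{\text{estimated }k}\|(G_k-\widehat{G}_k)_{i:}\|_2^2}\le\|G-\widehat{G}\|_{2,\infty}$, since the estimated indices are a subset of the columns of $G$, and (ii) a truncation tail bounded as above; this gives \eqref{HK_2inf} uniformly over all block-rows. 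For \eqref{HK_F}, the crux of obtaining $\sqrt{Tm}$ rather than $\sqrt{Km}$ is that once $r\ge T+1$ every block in block-row $r$ is truncated, so its row-$i$ norm is a pure tail starting at index $r-2$ and decays like $\rho^{r-2}$; summing these geometrically over $r>T$ contributes only $\lesssim\sqrt{m}\,\tilde\epsilon$, whereas the at most $T$ block-rows with $r\le T$ each contribute $\lesssim\sqrt{m}(\mathcal{E}_1\vee\mathcal{E}_2+\tilde\epsilon)$, for a total of $\sqrt{Tm}(\mathcal{E}_1\vee\mathcal{E}_2+\tilde\epsilon)$. This geometric summability is exactly what permits $K=\infty$.

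The main obstacle I anticipate is twofold. First, one must pin down the tail estimate with the correct induced norms so that only $\log\|C\|_\infty$, $\log(1/\tilde\epsilon)$, and quantities already folded into $T_0$ enter the requirement on $T$; the natural norm manipulations for row-vector products $c_i^{\top}A^{k}B$ tend to produce stray $\|\cdot\|_\infty$ or $\sqrt{p}$ factors that need to be controlled or absorbed. Second, the Frobenius accounting for the Hankel matrix is delicate: one must genuinely exploit the decay of \emph{entire} block-rows for $r>T$ to avoid an unwanted $\sqrt{K}$ factor and to guarantee convergence of the sum when $K=\infty$.
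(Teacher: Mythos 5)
Your proposal is correct and follows essentially the same route as the paper: reduce everything to Theorem~\ref{thm_main} plus a geometric bound on the truncated tail $\{CA^kB\}_{k\ge T-1}$, which the lower bound on $T$ pushes below $\tilde\epsilon$, and use $\|\cdot\|_F\le\sqrt{m}\|\cdot\|_{2,\infty}$ together with the observation that rows of the Hankel error are subvectors of rows of $G^{(2K-1)}-\widehat G^{(2K-1)}$. The only difference is cosmetic bookkeeping in the Hankel Frobenius bound (you sum over block-rows and treat $r\le T$ and $r>T$ separately, while the paper counts the anti-diagonal multiplicity $(k+2)\le T$ of each block), and both yield the same $\sqrt{Tm}$ factor and the same summability at $K=\infty$.
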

Next, we will explain the implications of Theorem~\ref{thm_main} and Corollary~\ref{cor_HM}.\vspace{2mm}
	
	\noindent \textbf{Sample complexity:} According to Theorem~\ref{thm_main} and Corollary~\ref{cor_HM}, the required number of samples $N$ for estimating the Markov parameters and the Hankel matrix scales poly-logarithmically with the system dimension, making it particularly well-suited to massive-scale dynamical systems, where the system dimension surpasses the number of available input-output samples. In contrast, the existing methods for learning partially observed linear systems do not provide any guarantee on their estimation errors under such ``high-dimension/low-sampling'' regime. Moreover, the imposed lower bound on $T$ scales double-logarithmically with respect to the system dimension\footnote{The imposed lower bound on $T$ is to simplify the derived bounds, and hence, can be relaxed at the expense of less intuitive estimation bounds.}, which can be treated as a constant number for all practical purposes.\footnote{It is easy to verify that $\log\log(s)\leq 5$ for any $s\leq 10^{50}$!}\vspace{2mm}
	
	\noindent{\bf Estimation error:} The estimation error bounds in Theorem~\ref{thm_main} and Corollary~\ref{cor_HM} are in terms of the row-wise $\ell_2$ and Frobenius norms. In contrast, most of the existing methods provide upper bounds on the spectral norm of the estimation error. An important benefit of the provided row-wise bound is that it provides a finer control over the element-wise estimation error, which in turn can be used in the recovery of the special sparsity patterns in the Hankel matrices~\cite{jin2017sparse}. We note that although the provided bound on the Frobenius norm of the estimation error readily applies to its spectral norm, we believe that it can be strengthened. Moreover, the provided estimation error bound reduces at the rate $N^{-1/4}$, which is slower than the rate $N^{-1/2}$ for the simple least-squares estimator (see Theorem~\ref{thm_ls}). However, a more careful scrutiny of~\eqref{bound_F} and~\eqref{bound_ls_F} reveals that our proposed estimator outperforms the least-squares in the regime where
	\begin{align}
		\frac{N}{T^2}\lesssim \frac{q^2\log^4(Tq)\log^4(Nq)}{\log(Tpn)} = \tilde{O}\left((n+m+p)^2\right)
	\end{align}
	In fact, a stronger statement can be made on the ratio between the Frobenius norms of the estimation errors:
	\begin{corollary}\label{cor_bound}
		Denote the right hand sides of~\eqref{bound_ls_F} and~\eqref{bound_F} as $\mathcal{E}^{LS}_{F}$ and $\mathcal{E}^{\ell_1}_F$, respectively. Suppose that $\sigma_u\vee\sigma_w\vee\sigma_v\vee\frac{\Csys}{1-\rho}\vee \Phi(A)\vee \|B\|_2\vee \|C\|_2 = \mathcal{O}(1)$, and $T\gtrsim T_0+\log(n+m+p)$. Then, we have
		\begin{align}
			\lim_{n,m,p\to\infty}\frac{\mathcal{E}^{\ell_1}_F}{\mathcal{E}^{LS}_F} = 0
		\end{align} 
		provided that $T$ and $N$ satisfy
		\begin{align}\label{eq_assume}
			\lim_{n,m,p\to\infty}\frac{N\log(Tpn)}{T^2(n+m+p)^2} = 0
		\end{align}
	\end{corollary}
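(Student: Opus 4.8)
The plan is to reduce both Frobenius error bounds to their leading-order dependence on $(n,m,p,N,T)$ under the stated $\mathcal{O}(1)$ hypotheses, and then compare them directly. The decisive feature is that, once the problem-dependent constants are absorbed, $\mathcal{E}^{\ell_1}_F$ decays like $N^{-1/4}$ and carries no dimension factor, whereas $\mathcal{E}^{LS}_F$ decays like the faster $N^{-1/2}$ but is penalized by the factor $\sqrt{Tq}$ with $q=n+m+p$. The regime \eqref{eq_assume} is precisely where the dimensional penalty of least squares overwhelms the slower rate of the $\ell_1$ estimator.

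First I would simplify $\mathcal{E}^{\ell_1}_F=\sqrt m(\mathcal{E}_1\vee\mathcal{E}_2)$. Under $\sigma_u\vee\sigma_w\vee\sigma_v\vee\Csys/(1-\rho)=\mathcal{O}(1)$, every prefactor in $\mathcal{E}_1$ and $\mathcal{E}_2$ (the quantities $\sqrt{\Csys^3/(1-\rho)}$, $\sqrt{(\bar\sigma_w+\sigma_v)/\sigma_u^3}$, $1/\sigma_u^2$, and $\Csys^3/(1-\rho)$, recalling $\bar\sigma_w=\Csys^2\sigma_w/(1-\rho)$) is bounded by a universal constant; since $\log(Tpn)\geq\log(Tp)$, this gives $\mathcal{E}_1\vee\mathcal{E}_2=\mathcal{O}\big((\log(Tpn)/N)^{1/4}+\epsilon\big)$. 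The enlarged lower bound $T\gtrsim T_0+\log(n+m+p)$ is what lets me discard the bias term $\epsilon$: because $1-\rho=\Theta(1)$, the summand $\log(1/\epsilon)/(1-\rho)$ of $T_0$ permits $\epsilon$ to be taken as small as $(\log(Tpn)/N)^{1/4}$ (indeed polynomially small in $q$) while respecting the bound on $T$, so that $\mathcal{E}^{\ell_1}_F=\mathcal{O}\big(\sqrt m(\log(Tpn)/N)^{1/4}\big)$.

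Next I would simplify $\mathcal{E}^{LS}_F$, whose only system-dependent ingredients are $\sigma_e$ and $\|F\|_2$. Invoking $\|A^\tau\|_2\leq\Phi(A)\rho^\tau$ together with $\Phi(A),\|B\|_2,\|C\|_2=\mathcal{O}(1)$, a geometric series gives $\|\Gamma_\infty\|=\mathcal{O}(1)$, whence $\sigma_e=\mathcal{O}(\rho^{T-1}\sqrt T)$ is even exponentially small in $T$; the same spectral bound, applied blockwise through $\|[M_1\ \cdots\ M_k]\|_2^2\leq\sum_i\|M_i\|_2^2$, yields $\|F\|_2=\mathcal{O}(1)$. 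The numerator $(\sigma_v+\sigma_e)\sqrt m+\sigma_w\|F\|_2$ is therefore $\mathcal{O}(\sqrt m)$, and $\mathcal{E}^{LS}_F=\mathcal{O}\big(\sqrt m\sqrt{Tq\log^2(Tq)\log^2(Nq)/N}\big)$.

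Finally I would form the ratio: the factor $\sqrt m$ cancels and, on raising to the fourth power,
\begin{align}
\left(\frac{\mathcal{E}^{\ell_1}_F}{\mathcal{E}^{LS}_F}\right)^{4}=\mathcal{O}\!\left(\frac{N\log(Tpn)}{T^2q^2}\cdot\frac{1}{\log^4(Tq)\log^4(Nq)}\right).
\end{align}
Since the logarithmic factor in the denominator is bounded below by a constant, the right-hand side is dominated by $N\log(Tpn)/(T^2q^2)$, which vanishes in the limit by \eqref{eq_assume}; hence the ratio tends to $0$. I expect the one genuinely delicate point to be the treatment of $\epsilon$: one must verify that the slack afforded by $T\gtrsim T_0+\log(n+m+p)$ lets $\epsilon$ be swallowed by the statistical term $(\log(Tpn)/N)^{1/4}$ uniformly along the whole limiting sequence, rather than merely for fixed dimensions. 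The remaining $\mathcal{O}(1)$ estimates for $\|F\|_2$, $\|\Gamma_\infty\|$, and $\sigma_e$ are routine geometric-series bounds.
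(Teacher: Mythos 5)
Your proposal is correct and follows essentially the same route as the paper: both reduce the two Frobenius bounds under the $\mathcal{O}(1)$ hypotheses, form the ratio so that the statistical part becomes $\left(N\log(Tpn)/(T^2(n+m+p)^2)\right)^{1/4}$, which vanishes by \eqref{eq_assume}, and use the slack in the lower bound on $T$ to dispose of the bias term $\epsilon$. The only (immaterial) difference is that you absorb $\epsilon$ into the statistical term by taking $\epsilon\lesssim(\log(Tpn)/N)^{1/4}$, whereas the paper keeps it as a separate term $\left(N\epsilon^2/(T(n+m+p))\right)^{1/2}$, sets $\epsilon^2\lesssim\rho^{T}$, and kills it using $T\gtrsim\log(T/N)+\log(n+m+p)$; your explicit geometric-series verifications that $\sigma_e$, $\|\Gamma_\infty\|$, and $\|F\|_2$ are $\mathcal{O}(1)$ are left implicit in the paper.
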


\begin{table}[h]
	\centering
	{\begin{tabular}{|c|c|c|c|}
		\hline
		Method & Sample Complexity & Error Bound ($\|\cdot\|_F$) & Additional Notes\\
		\hline
		proposed method & $\mathcal{O}(\log^2(Tp))$& $\mathcal{O}\left(\sqrt{m}\left(\frac{\log(Tnp)}{N}\right)^{1/4}\right)$ & \footnotesize\begin{tabular}{@{}c@{}} Single trajectory\end{tabular}\\
		\hline
		Oymak and Ozay~\cite{oymak2019non} & $\tilde{\mathcal{O}}(Tq)$ & $\tilde{\mathcal{O}}\left(\sqrt{m}\left(\frac{Tq}{N}\right)^{1/2}\right)$ & \footnotesize\begin{tabular}{@{}c@{}} Single trajectory\end{tabular}\\
		\hline
		Sarkar {\it et. al.}~\cite{sarkar2019finite} & $\tilde{\mathcal{O}}(n^2)$ & $\tilde{\mathcal{O}}\left(\sqrt{m}\left(\frac{pn^2}{N}\right)^{1/2}\right)$& \footnotesize\begin{tabular}{@{}c@{}} Single trajectory,\\ Suitable for systems with\\ unknown order\\ \end{tabular}\\
		\hline
		Zheng and Li~\cite{zheng2020non} & $\tilde{O}(mT+q)$ & $\tilde{\mathcal{O}}\left(\sqrt{m}\left(\frac{T^{3}q}{N}\right)^{1/2}\right)$ & \footnotesize\begin{tabular}{@{}c@{}} Multiple trajectories,\\ Stable and unstable systems\\ \end{tabular}\\
		\hline
		Sun {\it et. al.}~\cite{sun2020finite} & $\tilde{\mathcal{O}}(pR)$ & $\tilde{\mathcal{O}}\left(\left(\frac{Rnp}{N}\right)^{1/2}\right)$& \footnotesize\begin{tabular}{@{}c@{}} Multiple trajectories,\\ MISO ($m=1$)\end{tabular}\\
		\hline
		Tu {\it et. al.}~\cite{tu2017non} & $\tilde{\mathcal{O}}(r)$ & $\tilde{\mathcal{O}}\left(\left(\frac{r}{T}\right)^{1/2}\right)$& \footnotesize\begin{tabular}{@{}c@{}} Multiple trajectories,\\ SISO ($p = m=1$)\end{tabular}\\
		\hline
	\end{tabular}}
	\caption{\footnotesize Sample complexity and error bounds on the estimated Markov parameters for different methods. The parameters $R\leq n$ and $r$ are respectively the order of the system and the length of the FIR impulse response; see~\cite{sun2020finite} and~\cite{tu2017non} for more information. The error bounds are measured with respect to the Frobenius norm.
	}
	\label{table_comparison}
\end{table}

The above proposition implies that in the regime where $N$ is not significantly larger than the system dimension, the derived upper bound on the estimation error of the regularized estimator becomes arbitrarily smaller than that of the simple least-squares method. Our numerical analysis in Section~\ref{sec:simulations} also reveals the superior performance of the proposed estimator, even when $N\gg Tp$. Finally, we point out that similar error bounds have been derived for linear regression problems with weakly sparse structures. In particular, the work~\cite{negahban2012unified} considers a ``simpler'' linear model where the samples/outputs are assumed to be independent, and shows that a $\ell_1$-regularized estimator achieves an error bound in the order of $\mathcal{O}\left((\log(d)/N)^{1/4}\right)$, where $d$ is the dimension of the unknown regression vector. Theorem~\ref{thm_main} reveals that the same non-asymptotic rates can be achieved in the context of system identification with a single (and correlated) input-output trajectory. Table~\ref{table_comparison} compares the performance of the proposed estimator with other state-of-the-art methods.\vspace{2mm}

\noindent{\bf Role of signal-to-noise ratio:} Intuitively, the estimation error should improve with an increasing signal-to-noise (SNR) ratio (in our problem, the SNR ratio is defined as $\sigma_u^3/(\sigma_w+\sigma_v)$); this behavior is also observed in the related works~\cite{oymak2019non, sun2020finite, tu2017non}. In contrast, our provided bound is the maximum of two terms, one of which is independent of the SNR ratio. In other words, an increasing SNR ratio can only shrink the estimation error down to a certain positive threshold. The reason behind this seemingly unintuitive behavior lies in the statistical behavior of the random input matrix $U$. For two different vectors $\zeta$ and $\tilde\zeta$, the quantity $U(\zeta-\tilde{\zeta})$ measures how \textit{distinguishable} these vectors are under the considered linear model. For the cases where $N\gtrsim Tp$, it is easy to see that these two vectors are easily distinguishable, since $\|U(\zeta-\tilde{\zeta})\|_2^2\geq \kappa\sigma_u^2 \|\zeta-\tilde{\zeta}\|_2^2$ holds with high probability, for some strictly positive $\kappa$ (see, e.g.,~\cite{oymak2019non, sarkar2019finite}). However, in the high-dimensional setting, where $N\ll Tp$, the matrix $U$ will inevitably have zero singular values, and hence, $\|U(\zeta-\tilde{\zeta})\|_2^2\geq \kappa\sigma_u^2 \|\zeta-\tilde{\zeta}\|_2^2$ does not hold for specific choices of $\zeta$ and $\tilde{\zeta}$. Under such circumstances, we will show that the \textit{relaxed} inequality $\|U(\zeta-\tilde{\zeta})\|_2^2\geq \kappa\sigma_u^2 \|\zeta-\tilde{\zeta}\|_2^2-\sigma_u^2f(\zeta-\tilde{\zeta})$ holds for any $\zeta$ and $\tilde{\zeta}$, where $f(\cdot)$ is a function to be defined later. Upon replacing $\zeta-\tilde{\zeta}$ with $G_{i:}-\widehat{G}_{i:}$ for an arbitrary row index $i$, it is easy to see that the derived lower bound becomes nontrivial {\it only if} $\|G_{i:}-\widehat{G}_{i:}\|_2^2> f(G_{i:}-\widehat{G}_{i:})/\kappa$, which is \textit{independent} of the SNR ratio.
We will formalize this intuition later in the proof of Theorem~\ref{thm_main}. In particular, we will show that: (1) the threshold $f(G_{i:}-\widehat{G}_{i:})/\kappa$ is small, i.e., it is upper bounded by $\mathcal{E}_2^2$; (2) whenever $\|G_{i:}-\widehat{G}_{i:}\|_2^2$ is larger than $\mathcal{E}_2^2$, it can be upper bounded by $\mathcal{E}_1^2$. 

\begin{figure}[h]
	\centering
	\includegraphics[scale=0.58]{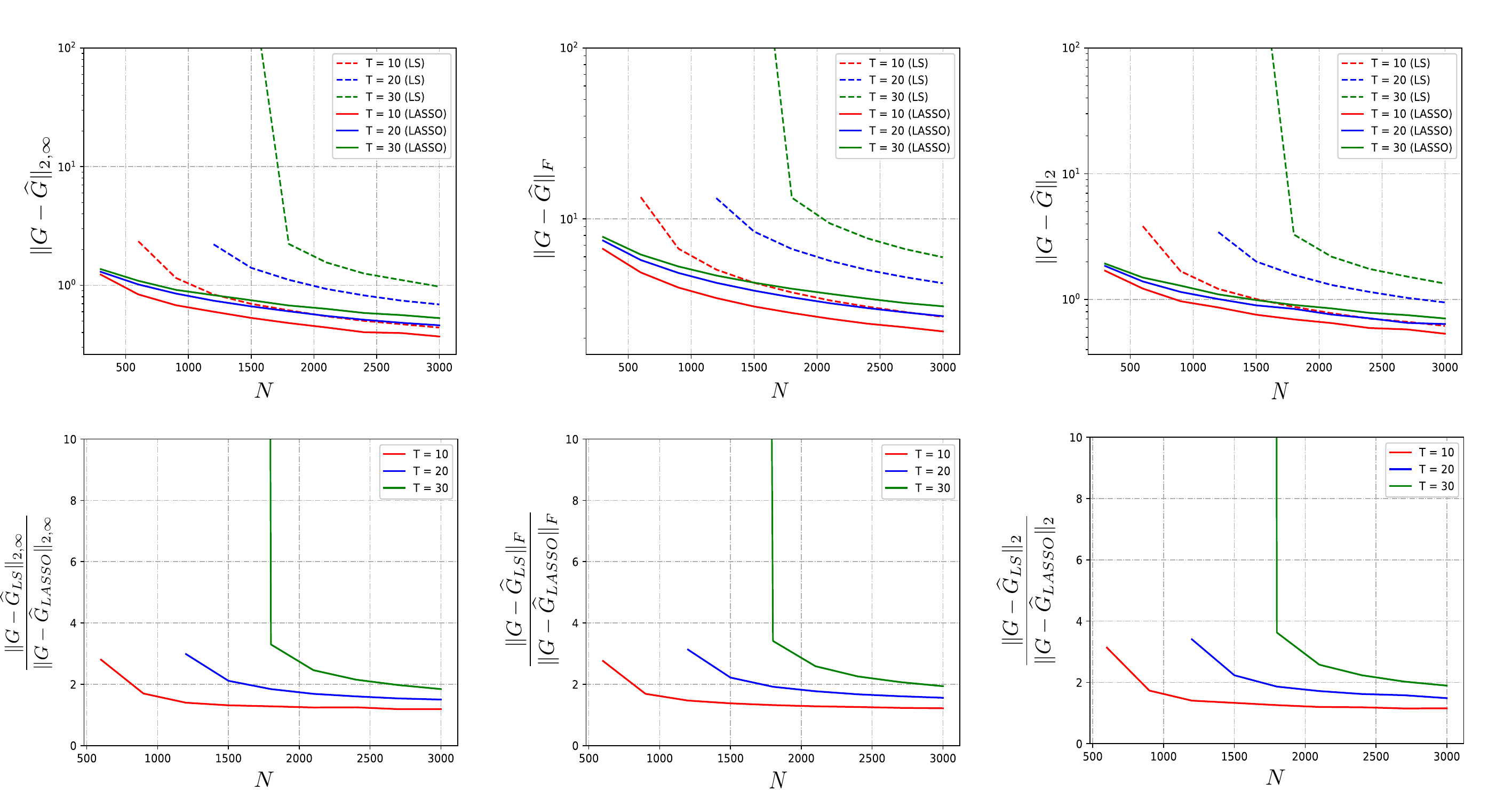}
	\caption{The estimation error of the Markov parameters for \texttt{LASSO} (denoted as  $\widehat{G}_{LASSO}$) and \texttt{LS} (denoted as $\widehat{G}_{LS}$) with respect to the sample size, with $\sigma_w^2 = \sigma_v^2 = 0.1$ and varying $T$. When $N<Tp$, \texttt{LASSO} achieves small estimation error, while \texttt{LS} is not well-defined. Moreover, \texttt{LASSO} significantly outperforms \texttt{LS} when $N\geq Tp$. The $y$-axis in all figures are clipped to better illustrate the differences in the curves.}
	\label{fig:markov}
\end{figure}
\section{Simulations}\label{sec:simulations} In this section, we showcase the performance of the proposed regularized estimator. In particular, we will provide an empirical comparison between our method and the least-squares approach of Oymak and Ozay~\cite{oymak2019non}.\footnote{It has been recently verified in~\cite{sun2020finite} that the method proposed by Oymak and Ozay~\cite{oymak2019non} outperforms that of Sarkar \textit{et. al.}~\cite{sarkar2019finite}. Therefore, without loss of generality, we will focus on the former.} In all of our simulations, we set $n = 200$, $m = p = 50$, and $D = 0$. The system matrices are generated according to the following rules:

\begin{itemize}
	\item[-] $A$ is chosen as a banded matrix, with the bandwidth equal to 5. This implies that the rows and columns of $A$ have at least 6 and at most 11 elements. Moreover, each nonzero entry of $A$ is selected uniformly from $[-0.5,0.5]$. To ensure the stability of the system, $A$ is further normalized to ensure that $\rho(A) = 0.8$. The special structure of $A$ entails that $\|A\|_1$ remains small.
	\item[-] The $(i,j)^{\text{th}}$ entry of $B$ is set to 1 if $i = 4j$, and it is set to 0 otherwise, for every $(i,j)\in \{1,\dots, n\}\times\{1,\dots, p\}$.
	\item[-] $C$ is chosen as a Gaussian matrix, with entries selected from  $\mathcal{N}(0,1/m)$.
\end{itemize}
Note that, despite the sparse nature of $A$ and $B$, the Markov parameters of the system are fully dense, due to the dense nature of $C$. 
Throughout our simulations, $\sigma_u$ is set of $1$, and the values of $\sigma_w$ and $\sigma_v$ are changed to examine the effect of SNR ratio on the quality of our estimates. Moreover, in all of our simulations, we set the regularization parameter to 
\begin{align}\label{key}
	\lambda = 0.2(\sigma_w+\sigma_v)\sqrt{\frac{\log(Tpn)}{N}}+0.02\times 0.8^T
\end{align}
Note that the above choice of the regularization parameter does not require any further fine-tuning, and it is in line with Theorem~\ref{thm_main}, after replacing $\epsilon$ with $0.02\times 0.8^T$ in~\eqref{lambda}. The exponential decay in $\epsilon$ correctly captures the diminishing effect of the unknown initial state $x_{t-T+1}$ on the output $y_t$ with $T$ (see equation~\eqref{in_out}). We point out that a better choice of $\lambda$ may be possible via cross-validation~\cite{shao1993linear}. Figure~\ref{fig:markov} shows the estimation error of the proposed method compared to the least-squares estimator (referred to as \texttt{LASSO} and \texttt{LS}, respectively) for  $\sigma_w^2=\sigma_v^2=0.1$ (averaged over 10 independent trials). It can be seen that \texttt{LASSO} significantly outperforms \texttt{LS} for all values of $N$ and $T$. In the high-dimensional setting, where $N<Tp$, \texttt{LS} is not well-defined, while \texttt{LASSO} results in small estimation errors. Moreover, when $N\geq Tp$, the incurred estimation error of \texttt{LASSO} is $1.2$ to $1077$ times smaller than that of \texttt{LS}. Although the main strength of \texttt{LASSO} is in the high-dimensional regime, it still outperforms \texttt{LS} when $N\gg Tp$. Furthermore, Figure~\ref{fig:hankel} shows the superior performance of \texttt{LASSO} compared to \texttt{LS} in the estimated Hankel matrices.

\begin{figure}[h]
	\centering
	\includegraphics[scale=0.6]{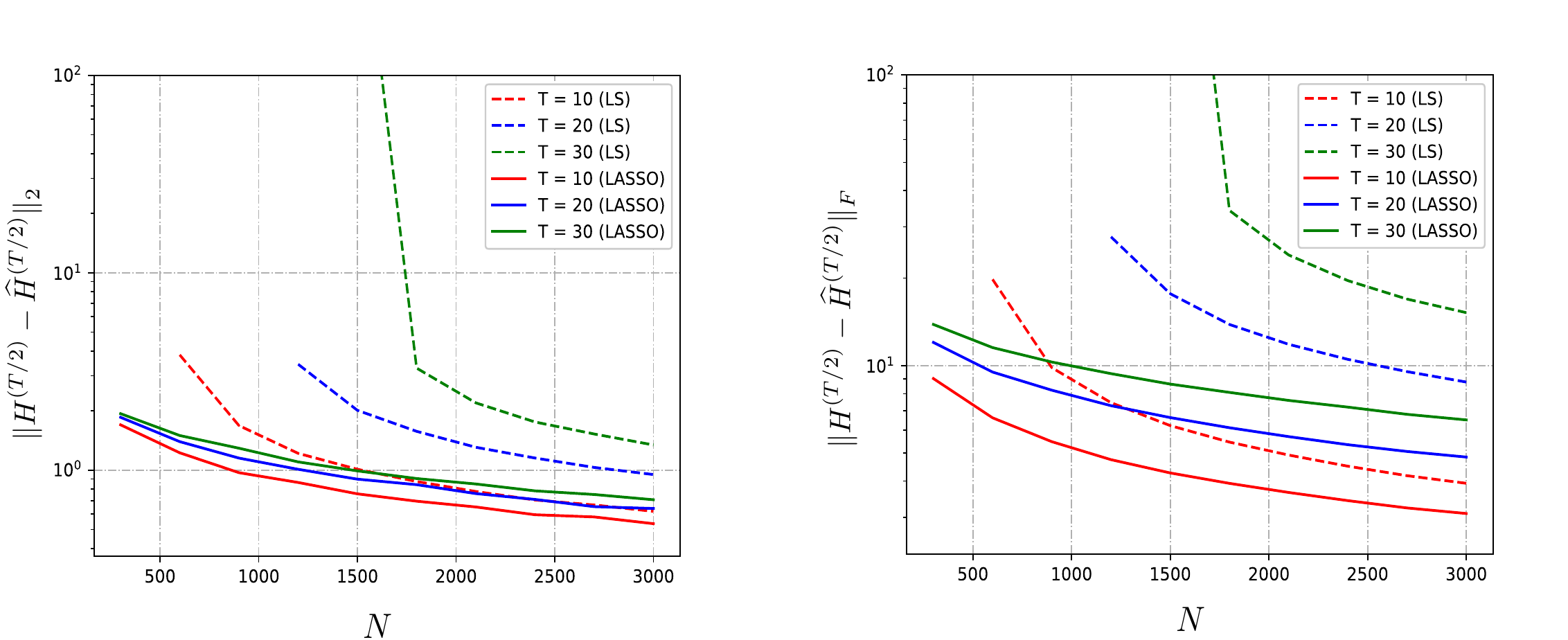}
	\caption{The estimation error of the Hankel matrices for \texttt{LASSO} and \texttt{LS} with respect to the sample size $N$, with $\sigma_w^2 = \sigma_v^2 = 0.1$ and varying $T$. Similar to Figure~\ref{fig:markov}, \texttt{LASSO} outperforms \texttt{LS} for all values of $N$ and $T$. The $y$-axis in all figures are clipped to better illustrate the differences in the curves.}
	\label{fig:hankel}
\end{figure}

Next, we fix $N = 200$, reduce the variance of the disturbance and measurement noises to $\sigma_w^2=\sigma_v^2=0.02$, and report the estimation error for different values of $T$ in Figure~\ref{fig:noise} (left).
To explain the non-monotonic behavior of $\|G-\widehat G\|_F$, first note that incurred estimation error stems from two sources: $(i)$ the measurement and disturbance noises; and $(ii)$ the unknown initial state. For small values of $T$, the number of unknown parameters in $G$ is small, and it can be well-estimated with sufficiently large $N$. However, the effect of the unknown initial state is significant due to the small ``mixing time'', thereby giving rise to a large estimation error. As $T$ grows, the effect of the unknown initial state diminishes exponentially fast, while the size of $G$ (and the number of unknown parameters) increases. Therefore, the estimation error has a non-monotonic dependency on $T$ for any fixed $N$; such behavior is also reflected in Theorem~\ref{thm_main}, after setting $\epsilon = \rho^T$. 
Finally, Figure~\ref{fig:noise} (right) depicts the estimation error of \texttt{LASSO} and \texttt{LS} for different noise variances. It can be seen that the estimation accuracy of \texttt{LASSO} is less sensitive to noise, i.e., it deteriorates at a slower rate with the increasing noise levels.

\begin{figure}[h]
	\centering
	\includegraphics[scale=0.6]{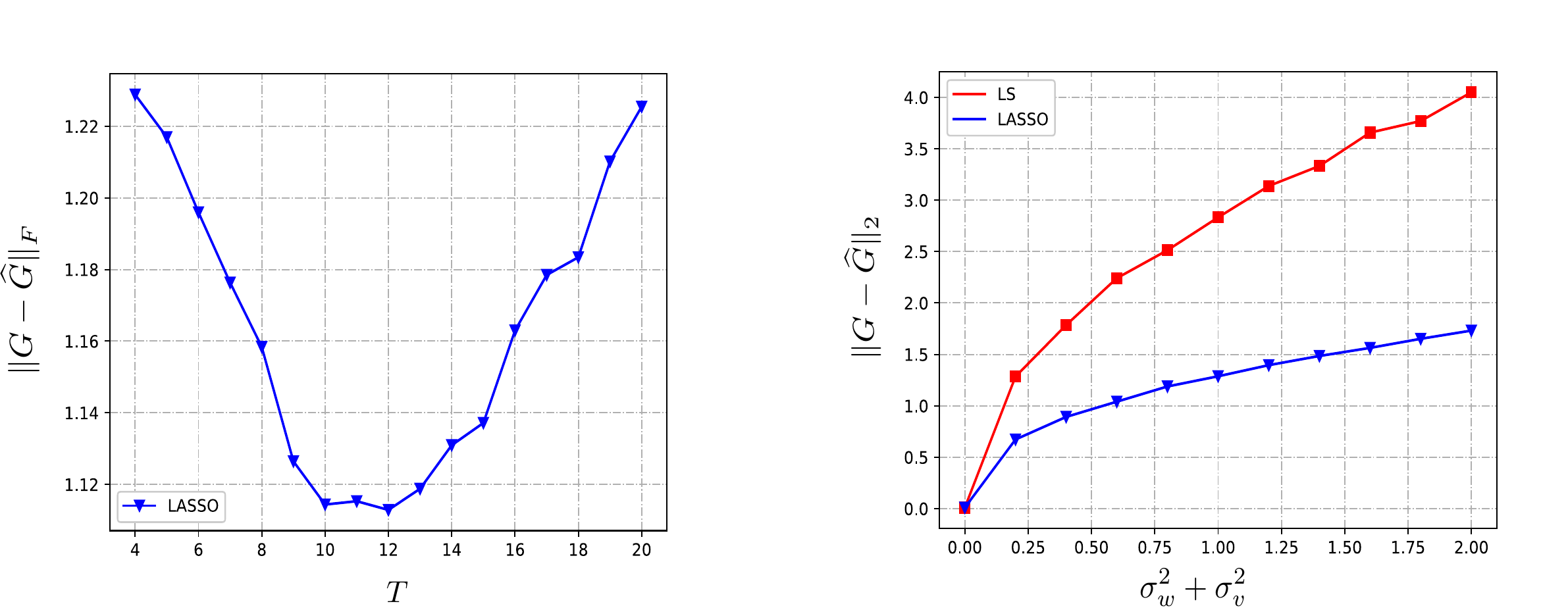}
	\caption{(Left) The estimation error of the Markov parameters for \texttt{LASSO} with respect to $T$, with $N = 2000$, and $\sigma_v^2 = \sigma_u^2 = 0.02$. The estimation error has a non-monotonic behavior with respect to $T$. (Right) The estimation error of the Markov parameters for different noise levels, with $N = 2000$, and $T = 20$. It can be seen that \texttt{LASSO} is less sensitive to the increasing noise levels.} 
	\label{fig:noise}
\end{figure}

\section{Conclusions and Future Directions}\label{sec:conclusion}

In this paper, we propose a method for learning partially observed linear systems from a single sample trajectory in high-dimensional settings, i.e., when the number of samples is less than the system dimension. Most of the existing inference methods presume and rely on the availability of prohibitively large number of samples collected from the unknown system. In this work, we address this issue by reducing the sample complexity of estimating the Markov parameters of partially observed systems via an $\ell_1$-regularized estimator. We show that, when the system is inherently stable, the required number of samples for a reliable estimation of the Markov parameters scales poly-logarithmically with the dimension of the system.

As a promising direction for future research, we will study the sparse recovery of the system matrices from the estimated Markov parameters. Indeed, most of the real-world systems consist of many subsystems with \textit{local} interactions, thereby giving rise to sparse system matrices. However, it is easy to see that sparsity in the system parameters does translate into sparsity in the Markov parameters. On the other hand, the classical system identification methods, such as Ho-Kalman method, often extracts a \textit{dense realization} of the system parameters, and therefore, cannot incorporate prior information, such as sparsity. As a future direction, we aim to remedy this challenge in a principled manner. Given a sparse realization of the system matrices, our next goal is to design a robust distributed controller for the true system, taking into account the uncertainty in the estimated model.

\section*{Acknowledgments}
We would like to thank Necmiye Ozay, Nikolai Matni, Yang Zheng, and Kamyar Azizzadenesheli for their insightful suggestions and constructive comments.
\appendix
\section*{Appendix}
\section{Proof of the Main Results}
In this section, we present the proofs of Theorem~\ref{thm_main} and Corollaries~\ref{cor_HM} and~\ref{cor_bound}. For simplicity, define the error matrix $\Delta = G-\widehat{G}$, and the following concatenated matrices:
\begin{align}
Y &= \begin{bmatrix}
y_{T-1} & y_{T} & \dots & y_{T+N-2}
\end{bmatrix}^\top\in \mathbb{R}^{N\times m}\\
U &= \begin{bmatrix}
\bar{u}_{T-1} & \bar{u}_{T} & \dots & \bar{u}_{T+N-2}
\end{bmatrix}^\top\in\mathbb{R}^{N\times Tp}\\
W &= \begin{bmatrix}
\bar{w}_{T-1} & \bar{w}_{T} & \dots & \bar{w}_{T+N-2}
\end{bmatrix}^\top\in\mathbb{R}^{N\times Tn}
\end{align}
\begin{align}
E &= \begin{bmatrix}
e_{T-1} & e_{T} & \dots & e_{T+N-2}
\end{bmatrix}^\top\in \mathbb{R}^{N\times m}\\
V &= \begin{bmatrix}
v_{T-1} & v_{T} & \dots & v_{T+N-2}
\end{bmatrix}^\top\in \mathbb{R}^{N\times m}
\end{align}
With these definitions, one can re-write~\eqref{in_out} as
\begin{align}
Y = UG^\top + WF^\top + E + V
\end{align}
Moreover, the the $\ell_1$-regularized estimator~\eqref{lasso1} reduces to
\begin{align}\label{lasso2}
\widehat{G} = \arg\min_X \frac{1}{2N}\left\|Y-UX^\top\right\|_F^2+\lambda\|X\|_{1,1}
\end{align}
Note that~\eqref{lasso2} is decomposable over different rows of $X$. Therefore, one can write:
\begin{align}\label{lasso_i}
\widehat{G}_{i:} = \arg\min_X \frac{1}{2N}\left\|Y_{:i}-U(X_{i:})^\top\right\|_F^2+\lambda\|X_{i:}\|_{1,1},\qquad \text{for every } i = 1,2,\dots, m
\end{align}
At the core of our result is the following fundamental lemma, which deterministically bounds the row-wise error of $\widehat{G}$.
\begin{proposition}[Deterministic Guarantee]\label{prop_deterministic}
	Fix a row index $i$, and assume that the following conditions hold:
	\begin{itemize}
		\item[1.] ($\ell_1$-boundedness) We have $\|G_{i:}\|_1\leq R$, for some $R>0$.
		\item[2.] (Restricted singular value) There exists a function $f(\cdot)$ such that 
		\begin{align}\label{svd_restricted}
		\frac{1}{N}\|U\Delta_{i:}\|_{2}^2\geq \kappa\|\Delta_{i:}\|_2^2-f(\Delta_{i:})
		\end{align}
		for some $\kappa\leq 1$.
		\item[3.] (Bound on $\lambda$) We have
		\begin{align}
		\lambda\geq \frac{2}{N}\left(\|U^\top WF_{i:}^\top\|_\infty+\|U^\top E_{:i}\|_\infty+\|U^\top V_{:i}\|_\infty\right)
		\end{align}
	\end{itemize}
	Then, the following inequality holds:
	\begin{align}\label{bound_det}
	\|\Delta_{i:}\|_2^2\leq\max\left\{\frac{2}{\kappa}f(\Delta_{i:}),\frac{88R}{\kappa^2}\lambda\right\}
	\end{align}
\end{proposition}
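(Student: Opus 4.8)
The plan is to run a textbook LASSO basic-inequality argument, adapted to the weak-sparsity regime in which $U$ may be rank-deficient (so no restricted eigenvalue of the usual kind is available) and exact sparsity is replaced by the $\ell_1$-ball constraint of Condition 1. First I would use the row-wise decomposition \eqref{lasso_i} to fix a single row $i$ and work with the scalar-valued per-row problem, writing the per-row noise as $\eta_i = WF_{i:}^\top + E_{:i} + V_{:i}$ so that the model reads $Y_{:i} = UG_{i:}^\top + \eta_i$. Since $\widehat{G}_{i:}$ minimizes the per-row objective, comparing its value to that at the truth $G_{i:}$, substituting $Y_{:i}-U\widehat{G}_{i:}^\top = U\Delta_{i:}^\top + \eta_i$ and $Y_{:i}-UG_{i:}^\top = \eta_i$, expanding the square and cancelling $\frac{1}{2N}\|\eta_i\|_2^2$ yields the basic inequality
\[
\frac{1}{2N}\|U\Delta_{i:}^\top\|_2^2 \le -\frac{1}{N}\langle U\Delta_{i:}^\top, \eta_i\rangle + \lambda\big(\|G_{i:}\|_1 - \|\widehat{G}_{i:}\|_1\big).
\]

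The cross term is handled by Hölder's inequality together with the adjoint identity $\langle U\Delta_{i:}^\top, \eta_i\rangle = \langle \Delta_{i:}^\top, U^\top\eta_i\rangle$: bounding $\frac1N\|U^\top\eta_i\|_\infty \le \frac1N\big(\|U^\top WF_{i:}^\top\|_\infty + \|U^\top E_{:i}\|_\infty + \|U^\top V_{:i}\|_\infty\big) \le \frac{\lambda}{2}$ via the triangle inequality and Condition 3 gives $\big|\tfrac1N\langle U\Delta_{i:}^\top, \eta_i\rangle\big| \le \frac{\lambda}{2}\|\Delta_{i:}\|_1$. The crucial and least standard step is then to control $\|\Delta_{i:}\|_1$ in the absence of genuine sparsity. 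Using nonnegativity $\frac1{2N}\|U\Delta_{i:}^\top\|_2^2 \ge 0$ in the basic inequality together with $\|\Delta_{i:}\|_1 \le \|G_{i:}\|_1 + \|\widehat{G}_{i:}\|_1$ and $\|G_{i:}\|_1 \le R$ (Condition 1), I would first deduce $\|\widehat{G}_{i:}\|_1 \le 3R$, hence $\|\Delta_{i:}\|_1 \le 4R$. Feeding this back and dropping the nonnegative $\lambda\|\widehat{G}_{i:}\|_1$ produces the prediction-error bound $\frac1{2N}\|U\Delta_{i:}^\top\|_2^2 \le 3\lambda R$. This is exactly where weak sparsity substitutes for exact sparsity: instead of writing $\|\Delta_{i:}\|_1 \lesssim \sqrt{s}\,\|\Delta_{i:}\|_2$, the $\ell_1$-ball radius $R$ caps the $\ell_1$-energy of the error directly.

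Finally, I would invoke the restricted singular-value condition \eqref{svd_restricted} in the form $\frac1{2N}\|U\Delta_{i:}^\top\|_2^2 \ge \frac{\kappa}{2}\|\Delta_{i:}\|_2^2 - \frac12 f(\Delta_{i:})$ and combine it with the prediction bound to obtain $\frac{\kappa}{2}\|\Delta_{i:}\|_2^2 \le \frac12 f(\Delta_{i:}) + 3\lambda R$, i.e.\ $\|\Delta_{i:}\|_2^2 \le \frac{1}{\kappa}f(\Delta_{i:}) + \frac{6R\lambda}{\kappa}$. Using $a+b \le \max\{2a,2b\}$ for nonnegative $a,b$ together with $\kappa \le 1$ converts this sum into the asserted maximum $\max\big\{\frac{2}{\kappa}f(\Delta_{i:}),\,\frac{88R}{\kappa^2}\lambda\big\}$, with ample room in the constant.

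The main obstacle, and the conceptual heart of the argument, is this reliance on the slack term $f(\cdot)$. Because $U$ is typically rank-deficient in the high-dimensional regime $N \ll Tp$, one cannot expect $\frac1N\|U\Delta_{i:}^\top\|_2^2 \ge \kappa\|\Delta_{i:}\|_2^2$ to hold in every direction, so the correction $f(\Delta_{i:})$ is unavoidable; this is precisely what forces the bound into its self-referential, ``either the error is controlled by the slack or by $\lambda R$'' form, and it is the structural reason the SNR cannot drive the error below the $\mathcal{E}_2$ threshold. Establishing that this deterministic guarantee is useful — verifying Condition 3 on a high-probability event and showing that $f$ admits a favorable bound $f(\Delta_{i:})/\kappa \lesssim \mathcal{E}_2^2$ — is the genuinely hard, probabilistic part, and I would defer it to the analysis that instantiates Proposition~\ref{prop_deterministic}.
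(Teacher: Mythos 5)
Your proof is correct, and every step checks out: the basic inequality, the H\"older bound $\bigl|\tfrac{1}{N}\langle U\Delta_{i:}^\top,\eta_i\rangle\bigr|\le\tfrac{\lambda}{2}\|\Delta_{i:}\|_1$ from Condition 3, the deduction $\|\widehat G_{i:}\|_1\le 3R$ hence $\|\Delta_{i:}\|_1\le 4R$, the prediction bound $\tfrac{1}{2N}\|U\Delta_{i:}^\top\|_2^2\le 3\lambda R$, and the final combination with \eqref{svd_restricted}. However, it is a genuinely different route from the paper's. The paper does not bound $\|\Delta_{i:}\|_1$ by the $\ell_1$-ball radius; instead it introduces a thresholded support set $\mathcal S=\{j:|g^*_j|\ge\lambda\}$, derives the cone-type inequality $\|g^*\|_1-\|\hat g\|_1\le\|\delta_{\mathcal S}\|_1-\|\delta_{\mathcal S^c}\|_1+2\|g^*_{\mathcal S^c}\|_1$, arrives at $\tfrac{1}{N}\|U\delta\|_2^2\le\lambda(3\sqrt{s}\|\delta\|_2+4\|g^*_{\mathcal S^c}\|_1)$, and then solves a quadratic inequality in $\|\delta\|_2$ to get $\|\delta\|_2^2\le\tfrac{72\lambda^2}{\kappa^2}s+\tfrac{16\lambda}{\kappa}\|g^*_{\mathcal S^c}\|_1$, which the choice of $\mathcal S$ turns into $\tfrac{88R\lambda}{\kappa^2}$. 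Your ``slow-rate'' argument is more elementary (no case split on $\|\delta\|_2^2$ versus $\tfrac{2}{\kappa}f(\delta)$, no quadratic inequality) and actually yields the sharper constant $\tfrac{12R\lambda}{\kappa}$ in place of $\tfrac{88R\lambda}{\kappa^2}$, which dominates the stated bound since $\kappa\le1$. What the paper's decomposition buys is adaptivity --- it would give the fast rate $\lambda^2 s/\kappa^2$ if $G_{i:}$ were exactly $s$-sparse, though that strength is not used in the proposition as stated --- and, more importantly for the rest of the paper, it produces the cone condition \eqref{eq_sc} that is reused in the proof of Proposition~\ref{prop_f} to bound $\|\delta\|_1$ in terms of $\|\delta\|_2$ when instantiating the restricted-singular-value condition. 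Two minor points: your division by $\lambda$ in the step $\|\widehat G_{i:}\|_1\le 3R$ tacitly assumes $\lambda>0$ (as does the paper's choice $s\le R/\lambda$), and your final paragraph correctly identifies that verifying Conditions 2 and 3 probabilistically is where the real work lies --- that is indeed deferred to Propositions~\ref{prop_f} and~\ref{prop_lambda} in the paper.
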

\begin{proof}
	See Appendix~\ref{app_prop1}.
\end{proof}

Before presenting the implications of this proposition, let us briefly explain the intuition behind the imposed assumptions. It can be easily seen that the first assumption holds for a choice of $R$ that only depends on $\Csys$ and $\rho$ (see Proposition~\ref{prop_G}). On the other hand, the second assumption implies that the concatenated input matrix $U$ has a nonzero singular value in the subspace spanned by the vector $\Delta_{i:}$, which is offset by a ``slack'' term $f(\Delta_{i:})$. We consider two scenarios to explain the inclusion of the slack term. First, in the low-dimensional regime, where $Tp\gtrsim N$ (modulo logarithmic factors), the standard concentration bounds on the random circulant matrices~\cite{krahmer2014suprema, sarkar2019finite} entail that~\eqref{svd_restricted} holds for some uniform constant $\kappa>0$, and with the choice of $f(\Delta_{i:}) = 0$. However, in the high-dimensional settings where $Tp< N$, one has to choose nonzero values for $f(\Delta_{i:})$, since the matrix $U$ will have zero singular values. While the naive choice of $f(\Delta_{i:}) = \kappa \|\Delta_{i:}\|_2^2$ is always feasible, one of the key contributions of this paper is to provide a sharper choice for $f(\Delta_{i:})$ that is particularly well-suited in the context of high-dimensional system identification. In particular, we will show that, for $\Delta_{i:}$ with bounded $\ell_1$-norm, the inequality~\eqref{svd_restricted} holds with high probability, for the choices $\kappa = \sigma^2_u/4$ and $f(\Delta_{i:})\lesssim\sigma^2_uR^2\sqrt{{\log(Tp)}/{N}}$ (see Proposition~\ref{prop_f}). Finally, the third assumption provides a lower bound on the regularization coefficient, which will be shown to hold with overwhelming probability when $\lambda$ is chosen as $(\sigma_u\bar{\sigma}_w+\sigma_u\sigma_v)\sqrt{\log(Tpn)/N}$ (see Proposition~\ref{prop_lambda}).

\begin{proposition}[$\ell_1$-boundedness]\label{prop_G}
	The following inequality holds for every $i=1,\dots,m$:
	\begin{align}
	\|G_{i:}\|_1\leq \frac{2\Csys^3}{1-\rho}
	\end{align}
\end{proposition}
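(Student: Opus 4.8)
The plan is to bound the $\ell_1$-norm of the $i$-th row of $G$ block-by-block, exploiting the exponential decay of the powers of $A$ encoded in $\|A^\tau\|_1\le\Csys\rho^\tau$. Recalling the block structure $G=\begin{bmatrix}D & CB & CAB & \dots & CA^{T-2}B\end{bmatrix}$, the $i$-th row splits across the blocks as
\begin{align*}
\|G_{i:}\|_1 = \|D_{i:}\|_1 + \sum_{\tau=0}^{T-2}\|(CA^\tau B)_{i:}\|_1,
\end{align*}
so it suffices to control the $\ell_1$-norm of a single row of each Markov block $CA^\tau B$ (and of $D$) and then sum the resulting geometric series.

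For each block I would first bound the $\ell_1$-norm of a row of the product by an induced operator norm of the whole block, using that for any matrix $M$ one has $\|M_{i:}\|_1\le\|M\|_\infty$, since the maximum absolute row sum dominates any individual row. Submultiplicativity of the induced norm together with the standing assumptions $\max\{\|B\|,\|C\|,\|D\|\}\le\Csys$ and the decay bound on $A^\tau$ then yields $\|(CA^\tau B)_{i:}\|_1\le\Csys\cdot\Csys\rho^\tau\cdot\Csys=\Csys^3\rho^\tau$, and likewise $\|D_{i:}\|_1\le\Csys$. The only point requiring care is the norm bookkeeping when a single row is extracted from a matrix product: selecting a row naturally pairs the $\ell_1$-norm with the induced $\infty$-norm rather than the $1$-norm, so one must peel the factor $C$ off from the left and apply submultiplicativity in the correct order so that the hypotheses (understood, per the author's stated simplification, as controlling the relevant induced norms of $B$, $C$, $D$ and the powers of $A$) deliver the three $\Csys$ factors and the decay $\rho^\tau$. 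This is the main, though minor, obstacle.

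Finally I would sum the per-block bounds. Since $\sum_{\tau=0}^{T-2}\rho^\tau\le\frac{1}{1-\rho}$ for $\rho<1$, we obtain
\begin{align*}
\|G_{i:}\|_1 \le \Csys + \Csys^3\sum_{\tau=0}^{T-2}\rho^\tau \le \Csys + \frac{\Csys^3}{1-\rho}.
\end{align*}
Using $\Csys\ge 1$ and $\rho<1$, which give $\Csys\le\Csys^3\le\frac{\Csys^3}{1-\rho}$, the contribution of the $D$-block is absorbed into the geometric term at the cost of a factor of two, producing the claimed uniform bound $\|G_{i:}\|_1\le\frac{2\Csys^3}{1-\rho}$. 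The bound is independent of $i$, of $T$, and of all system dimensions, which is precisely the weak-sparsity property that the $\ell_1$-regularized analysis in Proposition~\ref{prop_deterministic} relies on.
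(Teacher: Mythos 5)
Your proof is correct and follows essentially the same route as the paper's: split $\|G_{i:}\|_1$ across the blocks $D, CB, \dots, CA^{T-2}B$, bound each block's row by submultiplicativity using $\|A^\tau\|\le \Csys\rho^\tau$ and the $\Csys$ bounds on $B,C,D$, sum the geometric series, and absorb the $D$ term via $\Csys\le \Csys^3/(1-\rho)$. Your explicit remark that extracting a row pairs the $\ell_1$-norm with the induced $\infty$-norm is a point the paper's own proof glosses over (it writes $\|C_{i:}A^\tau B\|_1\le\|C\|_1\|A^\tau\|_1\|B\|_1$ directly), so your bookkeeping is if anything slightly more careful.
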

\begin{proof}
	One can write
	\begin{align}
	\|G_{i:}\|_1 = \|D\|_1+\sum_{\tau=0}^{T-2} \|G_\tau\|_1&\leq \Csys+\sum_{\tau=0}^{T-2}\|C_{i:}A^{\tau}B\|_1\nonumber\\
	&\leq \Csys+\sum_{\tau=0}^{\infty}\|C\|_1\|A^{\tau}\|_1\|B\|_1\nonumber\\
	&\leq \Csys+\sum_{\tau=0}^{\infty}\Csys^3\rho^\tau\nonumber\\
	&\leq \frac{2\Csys^3}{1-\rho}\nonumber
	\end{align}
	which completes the proof.
\end{proof}
Our next goal is to construct a sharp expression for $f(\Delta_{i:})$. As mentioned before, the matrix $U$ will have zero singular values when $N< Tp$. Therefore, the standard techniques for showing the concentration of the singular values of circulant matrices around an strictly positive number cannot be established. To circumvent this challenge, we prove the following key lemma which plays a pivotal role in our subsequent analysis.
\begin{lemma}\label{l_f}
	Suppose that $u_t$ is a zero-mean Gaussian vector with covariance $\sigma^2_u I$ for every $t = 1,\dots T+N-2$. Moreover, assume that $N\geq 4\eta\log^2(Tp)$ for an arbitrary $\eta>0$. Then, we have
	\begin{align}
	\frac{1}{N}\|U\theta\|_2^2\geq \frac{\sigma_u^2}{2}\|\theta\|_2^2-\sigma_u^2\sqrt\frac{\eta\log(Tp)}{{N}}\|\theta\|_1^2
	\end{align}
	for $\theta\in\mathbb{R}^{Tp}$, with probability of at least $1-(Tp)^{-c\eta}$.
\end{lemma}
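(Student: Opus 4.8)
The plan is to recognize the left-hand side as a quadratic form in the empirical covariance $\widehat{\Sigma} = \frac{1}{N}U^\top U$, namely $\frac{1}{N}\|U\theta\|_2^2 = \theta^\top \widehat{\Sigma}\theta$, and to reduce the uniform-in-$\theta$ inequality to an entrywise concentration statement about $\widehat{\Sigma}$. The key observation is that $\mathbb{E}[\widehat{\Sigma}] = \sigma_u^2 I$ exactly: each diagonal entry averages $N$ independent squared Gaussians, while every off-diagonal entry has mean zero since it couples either two distinct coordinates of a single input block or two input blocks at distinct time shifts, both of which are independent and mean-zero. I would then split $\theta^\top\widehat\Sigma\theta = \sum_a \widehat\Sigma_{aa}\theta_a^2 + \sum_{a\neq b}\widehat\Sigma_{ab}\theta_a\theta_b$ and control the two sums on a single high-probability event on which every diagonal entry obeys $\widehat\Sigma_{aa}\geq \sigma_u^2(1-\delta)$ and every off-diagonal entry obeys $|\widehat\Sigma_{ab}|\leq \sigma_u^2\delta$, with $\delta = \sqrt{\eta\log(Tp)/N}$. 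Since $N\geq 4\eta\log^2(Tp)$ forces $\delta\leq 1/2$, the diagonal sum is at least $\frac{\sigma_u^2}{2}\|\theta\|_2^2$, while the elementary bound $\bigl|\sum_{a\neq b}\widehat\Sigma_{ab}\theta_a\theta_b\bigr|\leq \max_{a\neq b}|\widehat\Sigma_{ab}|\,\|\theta\|_1^2 \leq \sigma_u^2\delta\,\|\theta\|_1^2$ controls the off-diagonal sum. Adding these gives the claimed inequality. I want to emphasize that the good event is entirely independent of $\theta$, so the bound holds simultaneously for all $\theta\in\mathbb{R}^{Tp}$ with no covering or net argument; this is precisely what makes the restricted-eigenvalue-type guarantee available in the regime $N\ll Tp$.

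What remains is the entrywise concentration, and the entries come in two flavors. The diagonal entries, and the off-diagonal entries within the same time block (same shift, different input coordinate), are averages of $N$ genuinely independent summands---squared Gaussians or products of two independent Gaussians---so a standard $\chi^2$/Bernstein tail yields $\mathbb{P}\bigl(|\widehat\Sigma_{ab}-\mathbb{E}\widehat\Sigma_{ab}|>\sigma_u^2\delta\bigr)\leq 2\exp(-cN\delta^2) = 2(Tp)^{-c\eta}$. The delicate entries are the cross-block off-diagonals $\widehat\Sigma_{ab}=\frac1N\sum_t (u_{t-j+1})_k(u_{t-j'+1})_{k'}$ with $j\neq j'$: the summands share input vectors across different $t$ and are therefore dependent (though pairwise uncorrelated). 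This is the main obstacle. I would dispatch it by writing $\widehat\Sigma_{ab}=g^\top M g$ as a quadratic form in the single Gaussian vector $g=(u_0^\top,\dots,u_{N+T-2}^\top)^\top\sim\mathcal{N}(0,\sigma_u^2 I)$, where $M$ is a sparse symmetric matrix carrying the $N$ coupling weights $\tfrac1{2N}$. A direct count gives $\|M\|_F^2 = \tfrac{1}{2N}$ and $\|M\|_2\leq \tfrac1N$ (each coordinate of $g$ participates in at most two couplings), together with $\mathrm{tr}(M)=0$ since distinct shifts land on distinct coordinates. The Hanson--Wright inequality then yields $\mathbb{P}\bigl(|\widehat\Sigma_{ab}|>\sigma_u^2\delta\bigr)\leq 2\exp\bigl(-c\min\{2N\delta^2,\,N\delta\}\bigr)$, and because $N\geq 4\eta\log^2(Tp)$ keeps the sub-Gaussian term active, this is again $2(Tp)^{-c\eta}$.

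Finally I would take a union bound over all $(Tp)^2$ entries (diagonal and off-diagonal of both flavors), absorbing the polynomial prefactor $(Tp)^2$ and the constant $2$ into the exponent by adjusting $c$, to conclude that the good event holds with probability at least $1-(Tp)^{-c\eta}$. The two points to verify carefully are (i) that the factor $\tfrac12$ multiplying $\|\theta\|_2^2$ is exactly what $N\geq 4\eta\log^2(Tp)$ buys through $\delta\leq 1/2$, and (ii) the bookkeeping of which coordinates of $g$ a given cross-block entry touches, since that count fixes $\|M\|_F$ and $\|M\|_2$ and hence the exponent delivered by Hanson--Wright.
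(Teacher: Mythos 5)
Your proof is correct, but it takes a genuinely different route from the paper's. The paper fixes $\theta$ and rewrites $\|U\theta\|_2^2/\sigma_u^2$ as a single quadratic form $w^\top P(\theta)w$ in an $N$-dimensional standard Gaussian $w$, where $P(\theta)$ is the $N\times N$ Toeplitz matrix with entries $P_{ij}(\theta)=R(|i-j|)=\sum_k\theta_k\theta_{k+|i-j|p}$; it then bounds $\|P(\theta)\|_2\leq\|\theta\|_2^2+\|\theta\|_1^2$ via Gershgorin combined with Young's convolution inequality, and applies Hanson--Wright exactly once. You instead work with the $Tp\times Tp$ empirical Gram matrix $\widehat\Sigma=U^\top U/N$, prove entrywise concentration around $\sigma_u^2 I$ (Bernstein for the diagonal and same-shift entries, whose summands are genuinely independent, and Hanson--Wright for the cross-shift entries, whose coupling matrix $M$ you size correctly as $\|M\|_F^2=1/(2N)$ and $\|M\|_2\leq 1/N$), and union-bound over the $(Tp)^2$ entries. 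The trade-offs are worth noting. Your good event depends only on $U$, so the inequality holds simultaneously for all $\theta$ with no covering argument; this uniformity is in fact what is needed downstream, since Proposition~\ref{prop_f} invokes the lemma with $\theta=\Delta_{i:}$, a vector that itself depends on $U$, whereas the paper's single-application argument as written certifies the bound only for a fixed deterministic $\theta$. The price is the $(Tp)^2$ union-bound prefactor: to fold it into the stated probability $1-(Tp)^{-c\eta}$ you must inflate $\delta$ by an absolute constant (exactly the device the paper uses in the proof of Lemma~\ref{l_I}), which changes the coefficient of $\|\theta\|_1^2$ from $1$ to a universal constant---harmless for every downstream use, where the lemma is only applied up to constants. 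Both routes deliver the same quantitative rate.
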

\begin{proof}
	See Appendix~\ref{app_l_f}.
\end{proof}
Equipped with this lemma, we are now ready to present the appropriate choices of $\kappa$ and $f$ in Proposition~\ref{prop_deterministic}.
\begin{proposition}[Restricted singular value]\label{prop_f}
	Assume that $N\geq 4\eta\log^2(Tp)$ for an arbitrary $\eta>0$. Then, for any fixed row index $i$, the following inequality holds:
	\begin{align}
	\frac{1}{N}\|U\Delta_{i:}\|_2^2\geq \frac{\sigma_u^2}{4}\|\Delta_{i:}\|_2^2-128\sigma_u^2\left(\frac{\Csys^3}{1-\rho}\right)^2\sqrt{\frac{\eta\log(Tp)}{N}}
	\end{align}
	with probability of at least $1-(Tp)^{-c\eta}$.
\end{proposition}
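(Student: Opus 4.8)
The plan is to deduce the statement from Lemma~\ref{l_f} by substituting $\theta=\Delta_{i:}$ and then converting the $\|\Delta_{i:}\|_1^2$ penalty into a constant slack. The structural feature I would exploit is that Lemma~\ref{l_f} is a \emph{uniform} bound: on a single event of probability at least $1-(Tp)^{-c\eta}$, the inequality holds simultaneously for every $\theta\in\mathbb{R}^{Tp}$. Consequently I may insert the data-dependent (hence random) direction $\theta=\Delta_{i:}=G_{i:}-\widehat{G}_{i:}$ without paying any additional union bound or covering-net argument, which is exactly what is needed since $\Delta_{i:}$ depends on the same randomness as $U$. This immediately yields
\[
\frac{1}{N}\|U\Delta_{i:}\|_2^2\;\geq\;\frac{\sigma_u^2}{2}\|\Delta_{i:}\|_2^2-\sigma_u^2\sqrt{\frac{\eta\log(Tp)}{N}}\,\|\Delta_{i:}\|_1^2,
\]
and the proposition will follow once I bound $\|\Delta_{i:}\|_1$ by a constant multiple of $\Csys^3/(1-\rho)$.

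The second step is therefore a deterministic bound on $\|\Delta_{i:}\|_1$. By the triangle inequality $\|\Delta_{i:}\|_1\leq\|G_{i:}\|_1+\|\widehat{G}_{i:}\|_1$, and Proposition~\ref{prop_G} controls the first summand by $2\Csys^3/(1-\rho)$. To control the second summand I would invoke the optimality (basic inequality) of the row-wise estimator~\eqref{lasso_i}: comparing its objective at $\widehat{G}_{i:}$ against the truth $G_{i:}$ and expanding the quadratic gives $\frac{1}{2N}\|U\Delta_{i:}\|_2^2+\frac{1}{N}\langle U^\top\xi_i,\Delta_{i:}\rangle+\lambda\|\widehat{G}_{i:}\|_1\leq\lambda\|G_{i:}\|_1$, where $\xi_i=WF_{i:}^\top+E_{:i}+V_{:i}$ is the noise residual of the true model. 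Dropping the nonnegative quadratic, applying H\"older to the cross term, and using the regularization lower bound (assumption~3 of Proposition~\ref{prop_deterministic}, i.e. $\frac{1}{N}\|U^\top\xi_i\|_\infty\leq\lambda/2$) yields $\|\widehat{G}_{i:}\|_1\leq\|G_{i:}\|_1+\tfrac{1}{2}\|\Delta_{i:}\|_1$. Feeding this back into the triangle inequality gives $\|\Delta_{i:}\|_1\leq 4\|G_{i:}\|_1\leq 8\Csys^3/(1-\rho)$, and hence $\|\Delta_{i:}\|_1^2\leq 64\,(\Csys^3/(1-\rho))^2\leq 128\,(\Csys^3/(1-\rho))^2$.

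Combining the two steps finishes the argument: substituting this $\ell_1$-bound into the uniform lower bound and relaxing $\tfrac{\sigma_u^2}{2}\geq\tfrac{\sigma_u^2}{4}$ produces
\[
\frac{1}{N}\|U\Delta_{i:}\|_2^2\;\geq\;\frac{\sigma_u^2}{4}\|\Delta_{i:}\|_2^2-128\,\sigma_u^2\Big(\frac{\Csys^3}{1-\rho}\Big)^2\sqrt{\frac{\eta\log(Tp)}{N}}
\]
on the event of Lemma~\ref{l_f}, whose probability is at least $1-(Tp)^{-c\eta}$; the deliberately loose constants $1/4$ and $128$ leave ample room for the relaxation.

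I expect the main obstacle to be the control of $\|\widehat{G}_{i:}\|_1$ in the second step. Unlike $\|G_{i:}\|_1$, which is bounded a priori by the exponential decay of the Markov parameters (Proposition~\ref{prop_G}), the $\ell_1$-norm of the regularized estimate is not controlled for free and could blow up were the penalty too weak; the basic-inequality argument works \emph{precisely because} the chosen $\lambda$ dominates the input--noise cross term $\frac{1}{N}\|U^\top\xi_i\|_\infty$. A secondary bookkeeping subtlety is reconciling the two high-probability events, since the cross-term domination relies on the $\lambda$ lower bound holding with overwhelming probability; I would absorb that event into the $1-(Tp)^{-c\eta}$ probability, as it holds with at least as high probability under the stated choice of $\lambda$.
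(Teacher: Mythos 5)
Your proof is correct and reaches the stated bound (in fact with slightly better constants), but the way you neutralize the $\|\Delta_{i:}\|_1^2$ penalty differs from the paper. The paper also starts from the basic inequality under the noise-domination condition, but it extracts the cone-type relation $\|\delta_{\mathcal{S}^c}\|_1\leq 3\|\delta_{\mathcal{S}}\|_1+4\|g^*_{\mathcal{S}^c}\|_1$ for a \emph{thresholded} support $\mathcal{S}=\{j:|g^*_j|\geq\gamma\}$, which yields the split $\|\delta\|_1^2\leq \frac{32R}{\gamma}\|\delta\|_2^2+32R^2$; the $\|\delta\|_2^2$ part is then absorbed into the restricted eigenvalue (degrading $\sigma_u^2/2$ to $\sigma_u^2/4$) by tuning $\gamma$ so that $\frac{32R}{\gamma}\sqrt{\eta\log(Tp)/N}=1/4$, and the $32R^2$ part becomes the slack. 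Your argument is the degenerate case $\mathcal{S}=\emptyset$ of this: the basic inequality gives $\|\widehat G_{i:}\|_1\leq\|G_{i:}\|_1+\tfrac12\|\Delta_{i:}\|_1$, hence the global bound $\|\Delta_{i:}\|_1\leq 4\|G_{i:}\|_1\leq 8\Csys^3/(1-\rho)$, and the entire $\ell_1$ term goes into the constant slack with no loss on the $\|\delta\|_2^2$ coefficient. For this particular proposition your route is strictly simpler and loses nothing; the paper's thresholding machinery is the more general weak-sparsity device (it would matter if one wanted the slack to scale with $\|\delta\|_2$ or with a fractional power of $R$), but here it buys nothing beyond what $\mathcal{S}=\emptyset$ already gives.

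Two caveats you share with the paper rather than introduce. First, both proofs invoke the noise-domination condition $\frac{1}{N}\|U^\top\xi_i\|_\infty\leq\lambda/2$ inside the proof of Proposition~\ref{prop_f}, even though the stated failure probability $(Tp)^{-c\eta}$ does not account for that event; the union bound is only taken later in the proof of Theorem~\ref{thm_main}, so your explicit bookkeeping remark is apt. Second, you assert that Lemma~\ref{l_f} holds \emph{uniformly} over $\theta$ on a single event; the paper's proof of that lemma applies the Hanson--Wright inequality to the fixed matrix $P(\theta)$ and therefore only establishes the pointwise version, yet both you and the paper then evaluate it at the data-dependent direction $\Delta_{i:}$. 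This is a genuine gap in Lemma~\ref{l_f} as proven (a covering or chaining argument over the relevant $\ell_1$-ball would be needed to make the uniform claim rigorous), but it is inherited from the paper, not created by your argument.
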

\begin{proof}
	See Appendix~\ref{app_prop_f}.
\end{proof}
According to the above proposition, it is possible to choose $f(\Delta_{i:})$ such that it diminishes at the rate of $O(\sqrt{\log(Tp)/N})$ while $\kappa$ remains constant. 

Finally, we will provide a lower bound on $\lambda$ in terms of the system parameters, $T$, $p$, and $N$, to ensure that the third assumption of Proposition~\ref{prop_deterministic} holds with high probability. 
\begin{proposition}[Bound on $\lambda$]\label{prop_lambda}
	Suppose that $T$ satisfies:
	\begin{align}
	T\geq \frac{\log\log(Np+Tp+Nn)+4\log(\frac{\Csys}{1-\rho})+4\log(\sigma_w+\sigma_u)+2\log (2)}{1-\rho}+2
	\end{align}
	Then, for an arbitrary $\eta>0$, the following inequality holds
	\begin{multline}
	\frac{2}{N}\left(\|U^\top WF_{i:}^\top\|_\infty+\|U^\top E_{:i}\|_\infty+\|U^\top V_{:i}\|_\infty\right)\leq\\
	4\sqrt{2}\sigma_u\sigma_w\left(\frac{\Csys^2}{1-\rho}\right)\sqrt{(1+\eta)\frac{\log(Tpn)}{N}}+4\sigma_u\sigma_v\sqrt{(1+\eta)\frac{\log(Tp)}{N}}+2\rho^{T/2}(1+\eta)
	\end{multline}
	with the probability of at least $1-2(Nn)^{-\eta}-2(Np+Tp)^{-\eta}-2(Tp)^{-\eta}$. 
\end{proposition}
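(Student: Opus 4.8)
The strategy is to split the left-hand side into its three natural summands and match each one to a term on the right. The summands involving $W$ and $V$ are noise sources that are statistically independent of the input matrix $U$, so they can be handled by conditioning on $U$, invoking a sub-Gaussian tail bound, concentrating the relevant norms of $U$, and taking a union bound over the coordinates. The summand involving $E$ is the genuinely delicate one: the initial-state contamination $e_t = CA^{T-1}x_{t-T+1}$ is \emph{correlated} with $U$, so the point there is not concentration but the exponential smallness of $E$, which the lower bound on $T$ is designed to exploit.

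\textbf{Measurement-noise and process-noise terms.} First I would treat $\tfrac{2}{N}\|U^\top V_{:i}\|_\infty$. Since $V_{:i}$ collects the $i$-th coordinate of the measurement noise across time, its entries are i.i.d.\ sub-Gaussian with parameter $\sigma_v$ and independent of $U$. Conditioning on $U$, each coordinate $(U^\top V_{:i})_j=\sum_t U_{tj}v_{t,i}$ is sub-Gaussian with proxy $\sigma_v\|U_{:j}\|_2$; a standard $\chi^2$-concentration gives $\|U_{:j}\|_2\lesssim\sqrt{N}\sigma_u$ uniformly in $j$, and a union bound over the $Tp$ columns yields $\tfrac{2}{N}\|U^\top V_{:i}\|_\infty\lesssim\sigma_u\sigma_v\sqrt{(1+\eta)\log(Tp)/N}$ with failure probability $2(Tp)^{-\eta}$, exactly the third term. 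For $\tfrac{2}{N}\|U^\top WF_{i:}^\top\|_\infty$ the first step is the deterministic estimate $\|F_{i:}\|_1\le \Csys^2/(1-\rho)$, obtained by the same telescoping $\ell_1$ argument as in Proposition~\ref{prop_G} using $\|C\|_1\le\Csys$ and $\|A^\tau\|_1\le\Csys\rho^\tau$. Since $W$ is independent of $U$, I would write each coordinate of $U^\top WF_{i:}^\top$ as a bilinear form in the independent families $\{U_{tj}\}$ and $\{W_{tk}\}$, condition on $U$ to get a sub-Gaussian proxy $\sigma_w\|U_{:j}\|_2\|F_{i:}\|_2\le\sigma_w\|U_{:j}\|_2\|F_{i:}\|_1$, and union bound over the coordinates (of order $Tpn$); controlling $\|U_{:j}\|_2\lesssim\sqrt{N}\sigma_u$ on the complementary event then produces the factor $4\sqrt{2}\,\sigma_u\sigma_w\big(\Csys^2/(1-\rho)\big)\sqrt{(1+\eta)\log(Tpn)/N}$, with failure probability $2(Nn)^{-\eta}+2(Np+Tp)^{-\eta}$.

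\textbf{Initial-state term (the main obstacle).} For $\tfrac{2}{N}\|U^\top E_{:i}\|_\infty$ I would deliberately avoid independence and instead bound crudely, $\tfrac{1}{N}|(U^\top E_{:i})_j|\le\big(\max_t|(e_t)_i|\big)\cdot\tfrac{1}{N}\sum_t|(\bar u_t)_j|$, since the sign cancellation is not needed once $E$ is known to be exponentially small. The factor $\tfrac{1}{N}\sum_t|(\bar u_t)_j|$ concentrates around $\sigma_u\sqrt{2/\pi}$ uniformly in $j$. For the first factor, $(e_t)_i=C_{i:}A^{T-1}x_{t-T+1}$ is a linear functional of the sub-Gaussian state $x_{t-T+1}$, whose variance proxy I would bound by $\lesssim\big(\Csys^2/(1-\rho)\big)(\sigma_u+\sigma_w)$ by expanding $x$ as the convolution of $A^jB$ and $A^j$ against the inputs and process noise; together with the decay $\|C_{i:}A^{T-1}\|_1\le\Csys^2\rho^{T-1}$ and a maximum over the $N$ time indices, this gives $\max_t|(e_t)_i|\lesssim\rho^{T-1}\,\mathrm{poly}(\Csys,(1-\rho)^{-1},\sigma_u+\sigma_w)\,\sqrt{\log(Np+Tp+Nn)}$. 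The crux is then to drive $\rho^{T-1}\cdot\mathrm{poly}\cdot\sqrt{\log(\cdot)}$ below $\rho^{T/2}(1+\eta)$; taking logarithms and using $\log(1/\rho)\ge 1-\rho$ shows this is exactly equivalent to the assumed lower bound on $T$, where the $\sqrt{\log(\cdot)}$ inside the exponent is precisely what produces the $\log\log(Np+Tp+Nn)$ term, and the polynomial prefactors produce the $4\log(\Csys/(1-\rho))$ and $4\log(\sigma_w+\sigma_u)$ terms. Summing the three failure probabilities gives the claimed bound.

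\textbf{Anticipated difficulty.} I expect the translation between the decay rate $\rho^{T-1}$ and the $T$-threshold to be the most error-prone step: one must carefully track the variance proxy of the state $x_{t-T+1}$ (and hence the correct powers of $\Csys$ and $(1-\rho)^{-1}$) and verify that the $\sqrt{\log}$ factor arising from the maximum over $N$ samples is absorbed into the $\log\log$ term rather than into an additional $\log$ factor. The other two terms are routine sub-Gaussian conditioning arguments once the bound $\|F_{i:}\|_1\le\Csys^2/(1-\rho)$ is in place.
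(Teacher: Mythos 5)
Your proposal follows essentially the same route as the paper: the same three-way split, the same deterministic H\"older/telescoping bounds $\|F_{i:}\|_1\le \Csys^2/(1-\rho)$ and $\|C_{i:}A^{T-1}\|_1\le\Csys^2\rho^{T-1}$, elementwise concentration for the input--noise inner products with a union bound, a Gaussian bound on the state magnitudes, and the lower bound on $T$ to absorb the logarithmic factors into $\rho^{T/2}$ (the paper instead uses a sub-exponential Bernstein bound on the products $U_{ki}W_{kj}$ and the H\"older factorization $\|U\|_{\infty,\infty}\|E_{:i}\|_1$, which are cosmetic differences). One small caveat: your conditional sub-Gaussian proxy $\sigma_w\|U_{:j}\|_2\|F_{i:}\|_2$ for the $W$-term tacitly treats the entries of $W$ as independent, which the overlapping block structure of $\bar w_t$ violates; applying H\"older with $\|F_{i:}\|_1$ first (as the paper does) repairs this and yields the same final bound.
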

\begin{proof}
	See Appendix~\ref{app_prop_lambda}.
\end{proof}

\noindent{\it Proof of Theorem~\ref{thm_main}.} We provide the proof in four steps:
\begin{enumerate}
	\item According to Proposition~\ref{prop_G}, $\frac{\Csys^3}{1-\rho}$ is a valid choice for $R$ to satisfy the first assumption of Proposition~\ref{prop_deterministic}.
	\item Proposition~\ref{prop_f} implies that the second assumption of Proposition~\ref{prop_deterministic} holds with high probability, with $\kappa = \frac{\sigma_u^2}{4}$ and $f(\Delta_{i:})\asymp  \sigma_u^2\left(\frac{\Csys^3}{1-\rho}\right)^2\sqrt{\frac{\log(Tp)}{N}}$.
	\item Proposition~\ref{prop_lambda} shows that the third assumption of Proposition~\ref{prop_deterministic} holds with high probability with the choice of
	\begin{align}
	\lambda \asymp \left(\sigma_u\bar\sigma_w+\sigma_u\sigma_v\right)\sqrt{\frac{\log(Tpn)}{N}}+\epsilon
	\end{align}
	for an arbitrary $\epsilon>0$, provided that 
	\begin{align}
	T\gtrsim\frac{\log\log(Np+Tp+Nn)+\log(\frac{\Csys}{1-\rho})+\log(\sigma_u+\sigma_w)+\log(\frac{1}{\epsilon})}{1-\rho}
	\end{align}
	\item Finally, it is easy to verify that the following inequalities hold for every row index $i$:
	\begin{align}
	\frac{2}{\kappa}f(\Delta_{i:})&\lesssim \left(\frac{\Csys^3}{1-\rho}\right)^2\!\!\sqrt{\frac{\log(Tp)}{N}},\\
	\frac{88R}{\kappa^2}\lambda&\lesssim \left(\frac{\Csys^3}{1-\rho}\right)\left(\left(\frac{\bar\sigma_w+\sigma_v}{\sigma_u^3}\right)\sqrt{\frac{\log(Tpn)}{N}}+\frac{1}{\sigma_u^4}\epsilon\right)
	\end{align}
\end{enumerate}
These inequalities, combined with~\eqref{bound_det} and a simple union bound on different rows of $G$ proves the validity of~\eqref{bound_2inf}. Moreover,~\eqref{bound_F} follows from $\|G-\widehat{G}\|_F\leq\sqrt{m}\|G-\widehat{G}\|_{2,\infty}$.$\hfill\square$\vspace{2mm}

Next, we will present the proof of Corollary~\ref{cor_HM}.\vspace{2mm}

\noindent{\it Proof of Corollary~\ref{cor_HM}.} It is easy to see that
\begin{align}
\|\widehat{G}^{(K)}-{G}^{(K)}\|_{2,\infty}\leq \|\widehat{G}-{G}\|_{2,\infty}+\sum_{\tau=T-1}^{K-2}\|CA^{\tau}B\|_{2,\infty}
\end{align}
On the other hand, a simple application of the H\"older's inequality leads to
\begin{align}
\sum_{\tau=T-1}^{K-2}\|CA^{\tau}B\|_{2,\infty}\leq \sqrt{\|C\|_\infty}\Csys\sum_{\tau=T-1}^{K-2}\rho^{\tau/2}\leq \sqrt{\|C\|_\infty}\left(\frac{\Csys}{1-\rho}\right)\rho^{\frac{T-1}{2}}
\end{align}
The above expression is upper bounded by $\tilde{\epsilon}$, provided that
\begin{align}\label{T_lower}
T\geq\frac{\log(\|C\|_\infty)+2\log\left(\frac{\Csys}{1-\rho}\right)+2\log\left(\frac{1}{\tilde{\epsilon}}\right)}{1-\rho}+1
\end{align}
Combined with Theorem~\ref{thm_main}, this certifies the validity of~\eqref{GK_2inf}. The inequality~\eqref{GK_F} follows from $\|\widehat{G}^{(K)}-{G}^{(K)}\|_{F}\leq \sqrt{m}\|\widehat{G}^{(K)}-{G}^{(K)}\|_{2,\infty}$. Moreover, the correctness of~\eqref{HK_2inf} can verified by noting that the rows of $H^{(K)}-\widehat{H}^{(K)}$ are subvectors of the rows of $G^{(2K-1)}-\widehat G^{(2K-1)}$. Finally, to show the correctness of~\eqref{HK_F}, note that 
\begin{align}
\|H^{(K)}-\widehat H^{(K)}\|^2_F =& \|D-\widehat D\|_F^2+\sum_{k=0}^{T-2}(k+2)\|G_k-\widehat G_k\|_F^2+\sum_{k=T-1}^{K-2}(k+2)\|CA^kB\|_F^2\nonumber\\
&+\sum_{k=K-1}^{2K-3}(2K-2-k)\|CA^kB\|_F^2\nonumber\\
\leq& T\|G-\widehat{G}\|_F^2+m\|C\|_\infty\Csys^2\left(\sum_{k=T-1}^{K-2}(k+2)\rho^k+\sum_{k=K-1}^{2K-3}(2K-2-k)\rho^k\right)\nonumber\\
\leq&T\|G-\widehat{G}\|_F^2+m\|C\|_\infty\Csys^2\left(\frac{T\rho^{T-1}}{1-\rho}+\frac{\rho^{T-1}}{(1-\rho)^2}\right)\nonumber\\
\leq& Tm\left(\|G-\widehat{G}\|^2_{2,\infty}+2\|C\|_\infty\left(\frac{\Csys}{1-\rho}\right)^2\rho^{T-1}\right)
\end{align}
Therefore
\begin{align}
\|H^{(K)}-\widehat H^{(K)}\|_F&\leq \sqrt{Tm}\left(\|G-\widehat{G}\|_{2,\infty}+\sqrt{2\|C\|_\infty}\left(\frac{\Csys}{1-\rho}\right)\rho^{\frac{T-1}{2}}\right)\nonumber\\
&\leq \sqrt{Tm}\left(\|G-\widehat{G}\|_{2,\infty}+\sqrt{2}\tilde{\epsilon}\right)
\end{align}
where the second inequality follows from~\eqref{T_lower}. the above inequality combined with Theorem~\ref{thm_main} completes the proof.$\hfill\square$\vspace{2mm}

Finally, we will provide the proof for Corollary~\ref{cor_bound}.\vspace{2mm}

\noindent{\it Proof of Corollary~\ref{cor_bound}.} One can write
\begin{align}
\frac{\mathcal{E}^{\ell_1}_{F}}{\mathcal{E}^{LS}_{F}}\lesssim \underbrace{\left(\frac{N\log(Tpn)}{T^2(n+m+p)}\right)^{1/4}}_{(a)}+\underbrace{\left(\frac{N\epsilon^2}{T(n+m+p)}\right)^{1/2}}_{(b)}
\end{align}
It is easy to see that $(a)$ approaches zero with $(n,m,p)\to\infty$, due to the assumption~\eqref{eq_assume}. Moreover, due to Proposition~\ref{prop_lambda}, we have $\epsilon^2\lesssim\rho^{T}$. Therefore, $T\gtrsim \log(T/N)+\log(n+m+p)$ ensures that $(b)$ approaches zero with $(n,m,p)\to\infty$. The proof is completed by noting that $\log(T/N)\leq T/2$ for every $T,N\geq 1$.$\hfill\square$
\section{Proof of the Auxiliary Results}
\subsection{Proof of Proposition~\ref{prop_deterministic}}\label{app_prop1}
For the sake of simplicity, we suppress the row index $i$ and denote $Y_{:i}$, $X_{i:}^\top$, $G_{i:}^\top$, and $\widehat G_{i:}^\top$ as $y$, $g$, $g^*$, and $\hat{g}$, respectively. Therefore,~\eqref{lasso_i} can be written as
\begin{align}\label{eq_theta}
\hat{g} = \arg\min_{g\in\mathbb{R}^{Tp}}\frac{1}{2N}\|y-Ug\|^2_2+\lambda\|g\|_1
\end{align}
Furthermore, we treat the combined term $w = [WF^\top]_{:i} + E_{:i} + V_{:i}$ as the additive noise. Finally, the estimation error is denoted as ${\delta} = \hat{g}-g^*$. Note that $\hat{g}$ is an optimal solution of~\eqref{eq_theta}. Therefore, one can write
\begin{align}\label{eq_fundamental}
&\frac{1}{2N}\|y-U\hat{g}\|_2^2+\lambda\|\hat{g}\|_1\leq \frac{1}{2N}\|y-U{g}^*\|_2^2+\lambda\|{g}^*\|_1\nonumber\\
\implies& \frac{1}{2N}\|w-U\delta\|_2^2+\lambda\|\hat{g}\|_1\leq \frac{1}{2N}\|w\|_2^2+\lambda\|{g}^*\|_1\nonumber\\
\implies& \frac{1}{2N}\|U\delta\|_2^2\leq \frac{1}{N}w^\top U\delta+\lambda(\|{g}^*\|_1-\|\hat{g}\|_1)
\end{align}
On the other hand, given an arbitrary index set $\mathcal{S}\in\{1,\dots,Tp\}$ with $|\mathcal{S}| = s$, one can write
\begin{align}
\|g^*\|_1-\|\hat{g}\|_1\leq& \|g^*_{\mathcal{S}}\|_1+\|g^*_{\mathcal{S}^c}\|_1 - \|g^*_{\mathcal{S}}+{\delta}_{\mathcal{S}}\|_1-\|g^*_{\mathcal{S}^c}+{\delta}_{\mathcal{S}^c}\|_1\nonumber\\
\leq& \|{\delta}_{\mathcal{S}}\|_1-\|{\delta}_{\mathcal{S}^c}\|_1+2\|g^*_{\mathcal{S}^c}\|_1
\end{align}
where the second line is implied by triangle inequality. Substituting this inequality in~\eqref{eq_fundamental} leads to 
\begin{align}
\frac{1}{2N}\|U{\delta}\|_2^2\leq \frac{1}{N}\|w^\top U\|_\infty\|{\delta}\|_1+\lambda(\|{\delta}_{\mathcal{S}}\|_1-\|{\delta}_{\mathcal{S}^c}\|_1+2\|g^*_{\mathcal{S}^c}\|_1)
\end{align}
On the other hand, since $\lambda\geq 2\|w^\top U\|_\infty/N$, one can write
\begin{align}\label{eq1}
\frac{1}{N}\|U{\delta}\|_2^2\leq& \lambda(\|\delta\|_1+2\|{\delta}_{\mathcal{S}}\|_1-2\|{\delta}_{\mathcal{S}^c}\|_1+4\|g^*_{\mathcal{S}^c}\|_1)\nonumber\\
\leq&\lambda(3\|{\delta}_{\mathcal{S}}\|_1-\|{\delta}_{\mathcal{S}^c}\|_1+4\|g^*_{\mathcal{S}^c}\|_1)\nonumber\\
\leq&\lambda(3\sqrt{s}\|\delta\|_2+4\|g^*_{\mathcal{S}^c}\|_1)
\end{align}
where the last inequality is due to $\|\delta_{\mathcal{S}}\|_1\leq \sqrt{s}\|\delta_{\mathcal{S}}\|_2\leq \sqrt{s}\|\delta\|_2$. Now, we consider two cases. If we have $\|\delta\|_2^2\leq \frac{2}{\kappa}f(\delta)$, then the bound~\eqref{bound_det} trivially holds. Therefore, suppose $\|\delta\|_2^2> \frac{2}{\kappa}f(\delta)$. 
Combining this inequality with Assumption 2 and~\eqref{eq1} leads to
\begin{align}
&\kappa\|\delta\|_2^2-f(\delta)\leq \lambda(3\sqrt{s}\|\delta\|_2+4\|g^*_{\mathcal{S}^c}\|_1)\nonumber\\
\implies& \|\delta\|_2^2\leq \frac{2\lambda}{\kappa}(3\sqrt{s}\|\delta\|_2+4\|g^*_{\mathcal{S}^c}\|_1)
\end{align}
Notice that this is a quadratic inequality in terms of $\|\delta\|_2$. Bounding the roots of this quadratic inequality leads to
\begin{align}
\|\delta\|_2^2\leq \frac{72\lambda^2}{\kappa^2}s+\frac{16\lambda}{\kappa}\|g_{\mathcal{S}^c}\|_1
\end{align}
Therefore, the following inequality holds for all the values of $\|\delta\|_2^2$:
\begin{align}
\|\delta\|_2^2\leq\max\left\{\frac{2}{\kappa}f(\delta),\frac{72\lambda^2}{\kappa^2}s+\frac{16\lambda}{\kappa}\|g_{\mathcal{S}^c}\|_1\right\}
\end{align}
Now, it remains to show that the set $\mathcal{S}$ can be chosen such that $\frac{72\lambda^2}{\kappa^2}s+\frac{16\lambda}{\kappa}\|g_{\mathcal{S}^c}\|_1\leq \frac{88R}{\kappa^2}\lambda$. To show this, define $\mathcal{S} = \{i:|g^*_i|\geq \lambda\}$. Note that
\begin{align}
R\geq \sum_{i=1}^{Tp}|g^*_i|\geq \sum_{i\in\mathcal{S}}|g^*_i|\geq s\lambda
\end{align}
Therefore, we have $s\leq \frac{R}{\lambda}$. Furthermore, one can write
\begin{align}
\|g^*_{\mathcal{S}^c}\|_1 \leq R
\end{align}
This implies that
\begin{align}
\frac{72\lambda^2}{\kappa^2}s+\frac{16\lambda}{\kappa}\|g_{\mathcal{S}^c}\|_1\leq \frac{72R}{\kappa^2}\lambda+\frac{16R}{\kappa}\lambda\leq\frac{88R}{\kappa^2}\lambda
\end{align}
which completes the proof.$\hfill\square$

\subsection{Proof of Lemma~\ref{l_f}}\label{app_l_f}
To prove Lemma~\ref{l_f}, we need the following well-known result on the quadratic forms of sub-Gaussian random vectors.
\begin{theorem}[Hanson-Wright Inequality]\label{thm_HW}
	Let $w\in\mathbb{R}^d$ be a random vector with independent zero-mean sub-Gaussian elements with variance 1. Given a square and symmetric matrix $M$, we have $w^\top Mw\geq \mathbb{E}\{w^\top Mw\}-t$ with probability of at least $1-\exp\left(-c\min\left\{\frac{t^2}{\|M\|_F^2}, \frac{t}{\|M\|_2}\right\}\right)$
\end{theorem}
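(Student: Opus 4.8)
The plan is to establish the two-sided concentration bound for $\mathbb{P}(|w^\top M w - \mathbb{E}[w^\top M w]| \geq t)$, from which the stated one-sided lower-tail inequality follows immediately by discarding the upper-tail half. Throughout I would take $M$ symmetric without loss of generality (replace $M$ by $(M+M^\top)/2$, which leaves the quadratic form unchanged), center by setting $Z = w^\top M w - \mathbb{E}[w^\top M w]$, and note that $\mathbb{E}[w^\top M w] = \sum_i M_{ii}$ since the $w_i$ are independent, zero-mean, and of unit variance. The workhorse is the exponential Markov (Chernoff) device: for $\lambda > 0$, $\mathbb{P}(Z \leq -t) \leq e^{-\lambda t}\,\mathbb{E}[e^{-\lambda Z}]$, so the entire problem reduces to a uniform bound on the moment generating function $\mathbb{E}[e^{-\lambda Z}]$ valid for $\lambda$ in a suitable range.

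First I would decompose $Z$ into its diagonal and off-diagonal contributions, $Z = \sum_i M_{ii}(w_i^2 - 1) + \sum_{i \neq j} M_{ij} w_i w_j$, and control the MGF of each piece separately, splitting the product of exponentials via Cauchy–Schwarz. The diagonal term is a sum of independent, centered, sub-exponential random variables $M_{ii}(w_i^2 - 1)$; a standard Bernstein-type estimate gives an MGF bound of the form $\exp(c\lambda^2 \sum_i M_{ii}^2)$ valid for $|\lambda| \lesssim 1/\max_i |M_{ii}|$, and both $\sum_i M_{ii}^2 \leq \|M\|_F^2$ and $\max_i |M_{ii}| \leq \|M\|_2$ feed directly into the two-regime tail.

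The off-diagonal bilinear form is the main obstacle and the part requiring the most care. I would first invoke a decoupling inequality (de la Pe\~na–Montgomery-Smith) to replace $\sum_{i\neq j} M_{ij} w_i w_j$ by the decoupled chaos $\sum_{i,j} M_{ij} w_i w_j'$, where $w'$ is an independent copy of $w$, at the cost of only absolute constants in the MGF. Conditioning on $w'$, this decoupled form is a linear combination $\sum_i w_i (Mw')_i$ of the independent sub-Gaussian coordinates $w_i$, hence conditionally sub-Gaussian with proxy variance $\|M w'\|_2^2$; taking the $w$-expectation yields a conditional MGF bounded by $\exp(c\lambda^2 \|Mw'\|_2^2)$. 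It then remains to integrate the nested quadratic form, i.e. to bound $\mathbb{E}_{w'}\exp(c\lambda^2\, w'^\top M^2 w')$. Here I would not seek a sharp estimate but only a crude one valid in the small-$\lambda$ regime: using $\mathbb{E}[e^{s\, w'^\top M^2 w'}] \leq \exp(c\,s\,\|M\|_F^2)$ for $0 \leq s \lesssim 1/\|M^2\|_2 = 1/\|M\|_2^2$ (itself obtained by diagonalizing $M^2$ and estimating the MGF of a weighted sum of squared sub-Gaussians), I obtain an off-diagonal MGF bound $\exp(c\lambda^2\|M\|_F^2)$ valid for $|\lambda| \lesssim 1/\|M\|_2$.

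Finally I would combine the diagonal and off-diagonal MGF bounds into a single estimate $\mathbb{E}[e^{-\lambda Z}] \leq \exp(c\lambda^2\|M\|_F^2)$ holding for all $|\lambda| \leq c'/\|M\|_2$, substitute into the Chernoff bound, and optimize over $\lambda$. Choosing $\lambda = t/(2c\|M\|_F^2)$ is admissible precisely when $t \lesssim \|M\|_F^2/\|M\|_2$ and produces the sub-Gaussian tail $\exp(-ct^2/\|M\|_F^2)$; in the complementary range $t \gtrsim \|M\|_F^2/\|M\|_2$, taking $\lambda$ at the boundary $\asymp 1/\|M\|_2$ produces the sub-exponential tail $\exp(-ct/\|M\|_2)$. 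Taking the minimum of the two exponents recovers exactly the stated survival probability $1 - \exp(-c\min\{t^2/\|M\|_F^2,\, t/\|M\|_2\})$, and the inequality as worded in the theorem is simply the lower-tail half of this computation.
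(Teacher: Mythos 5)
A point of comparison first: the paper does not prove this statement at all --- it imports the Hanson--Wright inequality as a ``well-known result'' used in the proof of Lemma~\ref{l_f}, so your attempt can only be measured against the standard literature proof (Rudelson--Vershynin). Your architecture is exactly that standard proof: Chernoff bound on the centered form $Z$, diagonal/off-diagonal decomposition with a Cauchy--Schwarz split of the MGF, a Bernstein-type estimate for the diagonal sum of sub-exponential variables $M_{ii}(w_i^2-1)$, de la Pe\~na--Montgomery-Smith decoupling plus conditional sub-Gaussianity for the off-diagonal chaos, and the two-regime optimization over $\lambda$ that yields $\min\{t^2/\|M\|_F^2,\, t/\|M\|_2\}$. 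All of those steps are sound, including the bookkeeping $\sum_i M_{ii}^2 \leq \|M\|_F^2$ and $\max_i |M_{ii}| \leq \|M\|_2$.

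The one genuine gap is the step where you bound $\mathbb{E}\bigl[e^{s\, w'^\top M^2 w'}\bigr]$ ``by diagonalizing $M^2$ and estimating the MGF of a weighted sum of squared sub-Gaussians.'' Writing $M^2 = Q\Lambda Q^\top$, the rotated vector $Q^\top w'$ has sub-Gaussian but \emph{not independent} coordinates --- independence of the entries of $w'$ survives rotation only when $w'$ is Gaussian --- so the MGF does not factor over the eigendirections and the weighted-sum-of-squares estimate you invoke does not apply. This is precisely the obstruction that makes Hanson--Wright nontrivial for sub-Gaussian (as opposed to Gaussian) vectors. The standard repair is a Gaussian duality step: write $\exp\bigl(c\lambda^2\|Mw'\|_2^2\bigr) = \mathbb{E}_g \exp\bigl(\sqrt{2c}\,\lambda\, g^\top M w'\bigr)$ for an independent $g\sim\mathcal{N}(0,I)$, swap the order of expectation, bound the inner expectation (now linear in $w'$) by $\exp\bigl(C\lambda^2\|M g\|_2^2\bigr)$ using sub-Gaussianity of $w'$, and only then diagonalize --- legitimately, since $g$ is rotation invariant --- to get $\mathbb{E}_g \exp\bigl(C\lambda^2\, g^\top M^2 g\bigr) \leq \exp\bigl(C'\lambda^2\|M\|_F^2\bigr)$ for $\lambda \lesssim 1/\|M\|_2$ from the exact $\chi^2$ MGF. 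With that single substitution your proof closes. Note also that in the paper's sole application of the theorem (Lemma~\ref{l_f}, via Lemma~\ref{l_reform}) the vector $w$ is in fact Gaussian, so your diagonalization would be valid as written for what the paper actually needs, just not for the sub-Gaussian statement as posed.
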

The main idea behind the proof of this lemma is as follows:
\begin{itemize}
	\item[1.] We reformulate $\|U\theta\|_2^2$ as a quadratic form $w^\top P(\theta)w$, where $P(\theta)\in\mathbb{R}^{N\times N}$ is only a function of $\theta$, and $w\in\mathbb{R}^N$ is a random vector with independent zero-mean sub-Gaussian elements.
	\item[2.] We provide upper bounds on $\|P(\theta)\|_F^2$ and $\|P(\theta)\|_2^2$ in terms of $\|\theta\|_2^2$ and $\|\theta\|_1^2$.
	\item[3.] Finally, we apply Hanson-Wright inequality to obtain the desired concentration bound.
\end{itemize}
\begin{lemma}\label{l_reform}
	The following statements hold:
	\begin{itemize}
		\item[-] $\frac{1}{\sigma_u^2}\|U\theta\|_2^2 \sim w^TP(\theta)w$, where $w\in\mathbb{R}^N$ is a random vector with independent zero-mean Gaussian elements, and $P(\theta)\in\mathbb{R}^{N\times N}$ is a symmetric matrix defined as 
		\begin{align}
		P_{ij}(\theta) = R(|i-j|), \forall i,j\in\{1,\dots, N\}^2,\quad\text{and}\quad R(\tau) = \sum_{k=1}^{(T-\tau)p}\theta_k\theta_{k+\tau p}
		\end{align}
		\item[-] $\frac{1}{\sigma_u^2}\mathbb{E}\{\|U\theta\|_2^2\} = N\|\theta\|_2^2$.
	\end{itemize}
\end{lemma}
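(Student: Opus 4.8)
The plan is to exploit the block-Toeplitz structure of the design matrix $U$ together with the Gaussianity of the inputs, reducing everything to a covariance computation. First I would partition the vector $\theta\in\mathbb{R}^{Tp}$ into $T$ consecutive blocks $\theta_{(1)},\dots,\theta_{(T)}\in\mathbb{R}^{p}$, so that $\theta_{(j)}$ is the subvector of $\theta$ multiplying the lag-$(j-1)$ input $u_{t-j+1}$ inside $\bar u_t^\top\theta$. Since the $i$-th row of $U$ is $\bar u_{T+i-2}^\top$, this yields the explicit formula $(U\theta)_i=\sum_{j=1}^{T}u_{T+i-1-j}^\top\theta_{(j)}$ for each $i=1,\dots,N$. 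Because every $u_t\sim\mathcal{N}(0,\sigma_u^2 I)$ and $U\theta$ is a fixed linear combination of these Gaussian vectors, $U\theta$ is itself a zero-mean Gaussian vector in $\mathbb{R}^N$.

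The key step is to compute the covariance of $U\theta$. Using the independence of the inputs across time, $\mathbb{E}[u_a u_b^\top]=\sigma_u^2 I\,\mathbf{1}[a=b]$, I would expand
\[
\mathbb{E}[(U\theta)_i(U\theta)_{i'}]=\sigma_u^2\!\!\sum_{\substack{1\le j,j'\le T\\ i-j=i'-j'}}\!\!\theta_{(j)}^\top\theta_{(j')}.
\]
Setting $\tau=i'-i$, the surviving constraint is $j'=j+\tau$, and matching this against the definition of $R(\cdot)$ one checks that $\sum_{j}\theta_{(j)}^\top\theta_{(j+\tau)}=R(|\tau|)$; this is exactly the identity $R(\tau)=\sum_{k=1}^{(T-\tau)p}\theta_k\theta_{k+\tau p}$ after re-expanding each block inner product coordinatewise, with symmetry $R(-\tau)=R(\tau)$ handling the case $\tau<0$. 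Hence the covariance of $U\theta$ equals $\sigma_u^2 P(\theta)$, where $P(\theta)$ is the symmetric matrix with entries $P_{ij}(\theta)=R(|i-j|)$.

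Given this, both claims follow immediately. Being a zero-mean Gaussian with covariance $\sigma_u^2 P(\theta)$, the vector $U\theta$ has the same distribution as $\sigma_u P(\theta)^{1/2}w$ for a standard Gaussian $w\sim\mathcal{N}(0,I_N)$ (here $P(\theta)^{1/2}$ is the symmetric PSD square root); squaring and dividing by $\sigma_u^2$ yields $\tfrac{1}{\sigma_u^2}\|U\theta\|_2^2\sim w^\top P(\theta)w$. For the mean, $\mathbb{E}\{\|U\theta\|_2^2\}=\mathrm{tr}(\sigma_u^2 P(\theta))=\sigma_u^2\sum_{i=1}^N R(0)=\sigma_u^2 N\|\theta\|_2^2$, since $R(0)=\sum_{k=1}^{Tp}\theta_k^2=\|\theta\|_2^2$.

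I expect the only genuine work to be the covariance bookkeeping in the second paragraph: one must carefully align the lag index $\tau=i'-i$ with the block offset and confirm that the coordinatewise reindexing of $\theta_{(j)}^\top\theta_{(j+\tau)}$ reproduces the stated $R(\tau)$ verbatim, getting the summation range $1\le j\le T-\tau$ exactly right. Everything else is a routine consequence of Gaussianity, namely the distributional identity through the matrix square root and the trace formula for the expected squared norm.
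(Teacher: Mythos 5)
Your proposal is correct and follows essentially the same route as the paper: compute the covariance of the Gaussian vector $U\theta$ via the temporal independence of the inputs, identify it with $\sigma_u^2 P(\theta)$ (your block-partition bookkeeping reproduces the paper's coordinatewise sum $R(|i-j|)$ exactly), and deduce the quadratic-form identity and the trace formula for the mean. The only cosmetic difference is that you make the step through the symmetric square root $P(\theta)^{1/2}$ explicit, which the paper leaves implicit.
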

\begin{proof}
	Upon defining $\zeta = U\theta$, one can verify that $\zeta$ has a zero-mean Gaussian distribution. Moreover, it is easy to see that 
	\begin{align}
	\mathbb{E}\{\zeta_i\zeta_j\} &= \mathbb{E}\left\{\left(\bar{u}_{T+(i-1)}^\top 
	\theta\right)\left(\bar{u}_{T+(j-1)}^\top 
	\theta\right)\right\}\nonumber\\
	&= \sum_{k=1}^{(T-|i-j|)p}\theta_k\theta_{k+p|i-j|} = R(|i-j|)
	\end{align}
	where in the second equality, we used the following facts:
	\begin{itemize}
		\item[-] $\mathbb{E}\{u_{t}(s)^2\} = 1$ for every $t\in\{1,\dots, T+N-2\}$ and $s\in\{1,\dots,p\}$.
		\item[-] $\mathbb{E}\{u_{t}(s_1)u_{t}(s_2)\} = 0$ for every $t\in\{1,\dots, T+N-2\}$ and $s_1,s_2\in\{1,\dots,p\}$ such that $s_1\not=s_2$.
		\item[-] $\mathbb{E}\{u_{t_1}(s_1)u_{t_2}(s_2)\} = 0$ for every $t_1,t_2\in\{1,\dots, T+N-2\}$ such that $t_1\not=t_2$ and $s_1,s_2\in\{1,\dots,p\}$.
	\end{itemize}
	This implies that $\mathbb{E}\{\zeta_i\zeta_j\} = \sigma_u^2P_{ij}(\theta)$, and hence, $\frac{1}{\sigma_u}U\theta \sim \mathcal{N}(0,P(\theta))$. Therefore, $\frac{1}{\sigma_u^2}\|U\theta\|_2^2$ has the same distribution as $w^\top P(\theta)w$, thereby completing the proof of the first statement. The second statement directly follows from the definition of $P(\theta)$ and the fact that $w$ has independent elements.
\end{proof}
Our next lemma provides an upper bound on the values of $\|P(\theta)\|_F^2$ and $\|P(\theta)\|_2^2$.
\begin{lemma}\label{l_norms}
	The following inequalities hold:
	\begin{itemize}
		\item[-] $\|P(\theta)\|_2\leq \|\theta\|_2^2+\|\theta\|_1^2$
		\item[-] $\|P(\theta)\|_F^2\leq N(\|\theta\|_2^2+\|\theta\|_1^2)^2$
	\end{itemize}
\end{lemma}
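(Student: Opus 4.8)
The plan is to exploit the fact that $P(\theta)$ is a symmetric Toeplitz matrix whose $(i,j)$ entry depends only on $|i-j|$ through the ``block autocorrelation'' $R(\tau)$, together with the crucial observation that $R(\tau)=0$ whenever $\tau\geq T$. The first step is to rewrite $R(\tau)$ in a form amenable to estimation. Partitioning $\theta\in\mathbb{R}^{Tp}$ into $T$ consecutive blocks $\theta^{(0)},\dots,\theta^{(T-1)}\in\mathbb{R}^{p}$, the index shift by $\tau p$ in the definition of $R(\tau)$ realigns block $l$ with block $l+\tau$, so that
\begin{align}
R(\tau)=\sum_{l=0}^{T-\tau-1}\langle \theta^{(l)},\theta^{(l+\tau)}\rangle .
\end{align}
Writing $a_l=\|\theta^{(l)}\|_2$ and applying Cauchy--Schwarz gives $|R(\tau)|\leq\sum_{l}a_l a_{l+\tau}$, where the sum runs over the valid range of $l$.

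The key estimate is the bound on the total mass $\sum_{\tau}|R(\tau)|$. Summing the previous bound over all integer shifts $\tau\in\{-(T-1),\dots,T-1\}$ (with the convention $R(-\tau)=R(\tau)$) and interchanging the order of summation collapses the double sum into a perfect square,
\begin{align}
\sum_{\tau=-(T-1)}^{T-1}|R(\tau)|\leq \sum_{\tau}\sum_{l}a_l a_{l+\tau}=\Big(\sum_{l=0}^{T-1}a_l\Big)^2\leq \|\theta\|_1^2 ,
\end{align}
where the final inequality uses $a_l=\|\theta^{(l)}\|_2\leq\|\theta^{(l)}\|_1$ and the fact that the block $\ell_1$ norms add up to $\|\theta\|_1$.

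With this in hand, both matrix-norm bounds follow from elementary counting over the Toeplitz structure. For the spectral norm I would use $\|P(\theta)\|_2\le \|P(\theta)\|_\infty$, which holds because $P(\theta)$ is symmetric, and identify the induced $\infty$-norm with the maximum absolute row sum. Since each value $|i-j|=\tau$ is attained by at most two columns within a fixed row and $R$ vanishes past $T$, the maximum row sum is bounded by $\sum_{\tau=-(T-1)}^{T-1}|R(\tau)|\leq\|\theta\|_1^2\leq\|\theta\|_2^2+\|\theta\|_1^2$, giving the first claim. For the Frobenius norm I would count the number of pairs $(i,j)$ with $|i-j|=\tau$, which is $N-|\tau|\leq N$, to obtain
\begin{align}
\|P(\theta)\|_F^2=\sum_{\tau=-(T-1)}^{T-1}(N-|\tau|)R(\tau)^2\leq N\sum_{\tau}R(\tau)^2\leq N\Big(\sum_{\tau}|R(\tau)|\Big)^2\leq N\|\theta\|_1^4 ,
\end{align}
where $\sum_\tau R(\tau)^2\leq(\sum_\tau|R(\tau)|)^2$ holds for nonnegative summands; since $\|\theta\|_1^4\leq(\|\theta\|_2^2+\|\theta\|_1^2)^2$, the second claim follows.

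The proof is almost entirely bookkeeping, and the stated bounds are in fact slightly weaker than what this argument yields. The one step requiring genuine care is the collapse of $\sum_\tau|R(\tau)|$ into $(\sum_l a_l)^2$: the main obstacle is pinning down the index ranges exactly---verifying that the block realignment in $R(\tau)$ is correct, that the two-sided extension over $\tau$ is legitimate because $R(-\tau)=R(\tau)$, and that the unconstrained double sum over $(\tau,l)$ reproduces every ordered pair of blocks precisely once. The translation between $\sum_l\|\theta^{(l)}\|_2$ and $\|\theta\|_1$ through the block decomposition is the only place where the mixed $\ell_1/\ell_2$ structure enters, and it is what produces the $\|\theta\|_1^2$ factor dominating the bounds.
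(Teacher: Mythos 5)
Your proof is correct, and it reaches the same pivotal estimate as the paper --- namely $\sum_{\tau}|R(\tau)|\leq\|\theta\|_1^2$ --- but by a different mechanism. The paper observes that $R(\tau)$ coincides with an entry of the convolution $\theta\ast\tilde\theta$ (where $\tilde\theta$ is the reversal of $\theta$), so that $\sum_{\tau\geq 1}|R(\tau)|\leq\|\theta\ast\tilde\theta\|_1\leq\|\theta\|_1\|\tilde\theta\|_1=\|\theta\|_1^2$ by Young's convolution inequality; it then invokes Gershgorin for the spectral bound and the crude inequality $\|P\|_F^2\leq N\|P\|_2^2$ for the Frobenius bound. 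You instead decompose $\theta$ into its $T$ blocks of length $p$, apply Cauchy--Schwarz blockwise to get $|R(\tau)|\leq\sum_l a_l a_{l+\tau}$, and collapse the two-sided sum over $\tau$ into $\bigl(\sum_l a_l\bigr)^2\leq\|\theta\|_1^2$ by direct double counting --- in effect inlining the proof of Young's inequality at the block level. Your block decomposition is also the correct reading of the shift by $\tau p$ in the definition of $R(\tau)$, and your counting of the pairs $(l,l')$ is right (each off-diagonal pair is hit once for $\tau>0$ and once for $\tau<0$, matching the cross terms of the square). The remaining steps ($\|P\|_2\leq\|P\|_\infty$ for a symmetric matrix in place of Gershgorin, and the exact Toeplitz count $\|P\|_F^2=\sum_\tau(N-|\tau|)R(\tau)^2\leq N\bigl(\sum_\tau|R(\tau)|\bigr)^2$ in place of $\|P\|_F^2\leq N\|P\|_2^2$) are minor variants that give marginally sharper constants; indeed your argument delivers the slightly stronger conclusion $\|P(\theta)\|_2\leq\|\theta\|_1^2$, which you correctly relax to the stated bound. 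The trade-off is that your route is more self-contained and makes the block structure explicit, while the paper's is shorter once Young's inequality is taken off the shelf.
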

\begin{proof}
	Due to the Gershgorin circle theorem, one can write $|\|P(\theta)\|_2-\|\theta\|_2^2|\leq \sum_{\tau=1}^{N-1}|R(\tau)|$. This implies that $\|P(\theta)\|_2\leq \|\theta\|_2^2+\sum_{\tau=1}^{N-1}|R(\tau)|$. Define $\tilde{\theta}_{i} = \theta_{Tp+1-i}$ and $\tilde{R}$ as the convolution of $\theta$ and $\tilde{\theta}$, i.e., $ \tilde{R}(\tau)= \sum_{k=1}^{\tau-1}\theta_k\tilde{\theta}_{\tau-k}$ for every $\tau$. One can write
	\begin{align}
	R(\tau) = \sum_{k=1}^{(T-\tau)p}\theta_k\theta_{k+\tau p} = \sum_{k=1}^{(T-\tau)p}\theta_k\tilde\theta_{(T-\tau)p+1-k} = \tilde R((T-\tau)p+1)
	\end{align}
	Therefore, we have
	\begin{align}\label{eq_R}
	\sum_{\tau=1}^{N-1}|R(\tau)| = \sum_{\tau=1}^{N-1}|\tilde{R}((T-\tau)p+1)|\leq \|\tilde{R}\|_1 = \|\theta \ast \tilde{\theta}\|_1\leq \|\theta\|_1 \|\tilde{\theta}\|_1 = \|\theta\|_1^2
	\end{align}
	where the last inequality is due to Young's convolution rule. This concludes the proof of the first statement. The second statement follows from~\eqref{eq_R} and $\|P(\theta)\|_F^2\leq N\|P(\theta)\|_2^2$.
\end{proof}
\noindent{\it Proof of Lemma~\ref{l_f}.} Lemmas~\ref{l_reform} and~\ref{l_norms}, together with Theorem~\ref{thm_HW} imply that $$\frac{1}{\sigma_u^2N}\|U\theta\|_2^2\geq \|\theta\|_2^2-t$$ for any $t>0$, with probability of at least 
\begin{align}
1-\exp\left(-c\min\left\{\frac{Nt^2}{(\|\theta\|_2^2+\|\theta\|_1^2)^2}, \frac{Nt}{\|\theta\|_2^2+\|\theta\|_1^2}\right\}\right)
\end{align}
Upon choosing $t = \sqrt{\frac{\eta\log(Tp)}{{N}}}(\|\theta\|_2^2+\|\theta\|_1^2)$ and $N\geq 4\eta\log(Tp)$ for some $\eta>0$, we have
\begin{align}
\frac{1}{N}\|U\theta\|_2^2&\geq \sigma_u^2\left(1-\sqrt\frac{\eta\log(Tp)}{{N}}\right)\|\theta\|_2^2-\sigma_u^2\sqrt\frac{\eta\log(Tp)}{{N}}\|\theta\|_1^2\nonumber\\
&\geq \frac{\sigma_u^2}{2}\|\theta\|_2^2-\sigma_u^2\sqrt\frac{\eta\log(Tp)}{{N}}\|\theta\|_1^2
\end{align}
with probability of at least $1-(Tp)^{-c\eta}$. This completes the proof.$\hfill\square$

\subsection{Proof of Proposition~\ref{prop_f}}\label{app_prop_f}
For simplicity, we will borrow the notations $g^*$, $\hat{g}$, and $\delta$ from the proof of Proposition~\ref{prop_deterministic}. First note that, according to~\eqref{eq1}, $\delta = g^*-\hat{g}$ satisfies the following property
\begin{align}\label{eq_sc}
\|\delta_{\mathcal{S}^c}\|_1\leq 3\|\delta_{\mathcal{S}}\|_1 + 4\|\theta^*_{\mathcal{S}^c}\|_1
\end{align}
for any choice of $\mathcal{S}\in\{1,2,\dots Tp\}$ with $|\mathcal{S}| = s$. Define $\mathcal{S} = \{i:|g^*_i|\geq \gamma\}$ for a value of $\gamma$ to be defined later. Assuming that $\|g^*\|_1\leq R$, the inequality~\eqref{eq_sc} implies that 
\begin{align}
&\|\delta\|_1\leq 4\|\delta_{\mathcal{S}}\|_1 + 4\|\theta^*_{\mathcal{S}^c}\|_1\leq 4\sqrt{s}\|\delta\|_2+4\|\theta^*_{\mathcal{S}^c}\|_1\leq 4\sqrt{\frac{R}{\gamma}}\|\delta\|_2+4R
\end{align}
where the last inequality follows from $s\leq R/\gamma$ and $\|\theta^*_{\mathcal{S}^c}\|_1\leq R$. This leads to 
\begin{align}
	\|\delta\|_1^2\leq \frac{32R}{\gamma}\|\delta\|^2_2+32R^2
\end{align}
Combining the above inequality with Lemma~\ref{l_f} implies that the following inequalities holds with probability of at least $1-(Tp)^{-c\eta}$:
\begin{align}
\frac{1}{N}\|U\delta\|_2^2\geq \sigma_u^2\left(\frac{1}{2}-\frac{32R}{\gamma}\sqrt{\frac{\eta\log(Tp)}{N}}\right)\|\delta\|_2^2-32\sigma_u^2R^2\sqrt{\frac{\eta\log(Tp)}{N}}
\end{align}
Now, upon choosing $\gamma = 128R\left(\frac{\eta\log(Tp)}{N}\right)$, we get $\frac{32R}{\gamma}\sqrt{\frac{\eta\log(Tp)}{N}} = 1/4$, which results in
\begin{align}
\frac{1}{N}\|U\delta\|_2^2\geq\frac{\sigma_u^2}{4}\|\delta\|_2^2-32\sigma_u^2R^2\sqrt{\frac{\eta\log(Tp)}{N}}
\end{align}
Finally, Proposition~\ref{prop_G} can be invoked to show that $R\leq \frac{2\Csys^3}{1-\rho}$. This completes the proof.$\hfill\square$

\subsection{Proof of Proposition~\ref{prop_lambda}}\label{app_prop_lambda}
To prove this proposition, we divide the lower bound in three different terms:
\begin{align}
\lambda\geq \underbrace{\frac{2\|U^\top WF_{i:}^\top\|_\infty}{N}}_{(I)}+\underbrace{\frac{2\|U^\top E_{:i}\|_\infty}{N}}_{(II)}+\underbrace{\frac{2\|U^\top V_{:i}\|_\infty}{N}}_{(III)}
\end{align}
Next, we will provide concentration bounds on every term of the above inequality.
\begin{lemma}[Bounding $(I)$]\label{l_I}
	The following inequality holds:
	\begin{align}
	\frac{2\|U^\top WF_{i:}^\top\|_\infty}{N}\leq 4\sqrt{2}\sigma_u\sigma_w\left(\frac{\Csys^2}{1-\rho}\right)\sqrt{(1+\eta)\frac{\log(Tpn)}{N}}
	\end{align}
	with probability of at least $1-2(Tpn)^{-2\eta}$, for an arbitrary $\eta>0$
\end{lemma}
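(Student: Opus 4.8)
The plan is to exploit the independence between the input matrix $U$ and the process-noise matrix $W$ and to condition on $W$ first. Writing $\xi = W F_{i:}^\top \in \mathbb{R}^N$, the object of interest is $U^\top \xi$, whose $j$-th entry is $(U_{:j})^\top \xi$. For a fixed column index $j$, the entries of $U_{:j}$ are i.i.d.\ $\mathcal{N}(0,\sigma_u^2)$ (distinct, non-overlapping input samples), so conditionally on $W$ we have $(U_{:j})^\top \xi \sim \mathcal{N}(0,\sigma_u^2\|\xi\|_2^2)$. A Gaussian tail bound followed by a union bound over the $Tp$ columns then yields, conditionally on $W$,
\begin{align}
\|U^\top \xi\|_\infty \le \sigma_u\|\xi\|_2\sqrt{2(1+2\eta)\log(Tpn)}
\end{align}
with probability at least $1-2(Tpn)^{-2\eta}$, where the factor $Tp\le Tpn$ is absorbed into the exponent.

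It remains to control $\|\xi\|_2 = \|WF_{i:}^\top\|_2$. The first subtlety is that the blocks of $\bar{w}_t$ overlap across $t$, so the entries $\xi_s$ are \emph{correlated} and $\|\xi\|_2^2$ is a genuine quadratic form $\omega^\top M\omega$ in the underlying i.i.d.\ noise vector $\omega$, with $M\succeq 0$. Its expectation is $\mathbb{E}\|\xi\|_2^2 = N\sigma_w^2\|F_{i:}\|_2^2$, and I would apply the Hanson--Wright inequality (Theorem~\ref{thm_HW}) in its upper-tail form. Writing $\xi = A_F\omega$ for the banded, Toeplitz-type convolution map $A_F$, this needs $\|M\|_2 = \|A_F\|_2^2$ and $\|M\|_F\le \sqrt{N}\|M\|_2$.

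The key quantitative inputs are norm bounds following from the exponential decay of the Markov parameters, exactly as in Proposition~\ref{prop_G}. Using $\|C_{i:}A^\tau\|_2\le\|C_{i:}A^\tau\|_1\le \|C\|_1\|A^\tau\|_1\le \Csys^2\rho^\tau$, I obtain $\|F_{i:}\|_2\le\|F_{i:}\|_1=\sum_{\tau=0}^{T-2}\|C_{i:}A^\tau\|_1\le \Csys^2/(1-\rho)$, and the same geometric series bounds the operator norm of the MA filter $A_F$ by $\Csys^2/(1-\rho)$. With these, Hanson--Wright gives $\|\xi\|_2^2\le 2N\sigma_w^2(\Csys^2/(1-\rho))^2$ with probability at least $1-e^{-cN}$, which is exponentially close to one in the regime $N\gtrsim\log^2(Tp)$ and can therefore be folded into the stated failure probability.

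Combining the two displays and integrating out $W$ yields $\|U^\top WF_{i:}^\top\|_\infty\le 2\sigma_u\sigma_w(\Csys^2/(1-\rho))\sqrt{(1+2\eta)N\log(Tpn)}$; multiplying through by $2/N$ and using $\sqrt{1+2\eta}\le\sqrt{2}\,\sqrt{1+\eta}$ delivers the claimed bound. I expect the main obstacle to be the quadratic-form step: showing that the correlated Gaussian $\|WF_{i:}^\top\|_2$ concentrates requires the Hanson--Wright machinery together with the operator-norm bound on the banded convolution matrix $A_F$, both of which hinge on the geometric decay $\|A^\tau\|_1\le \Csys\rho^\tau$; the rest is routine Gaussian concentration and bookkeeping of constants.
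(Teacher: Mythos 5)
Your argument is correct, but it takes a genuinely different route from the paper. The paper's proof is more elementary: it applies the H\"older-type split $\|U^\top WF_{i:}^\top\|_\infty\leq \|U^\top W\|_{\infty,\infty}\|F_{i:}\|_1$, bounds $\|F_{i:}\|_1\leq \Csys^2/(1-\rho)$ by the same geometric series you use, and then controls each entry $U_{:i}^\top W_{:j}$ as a sum of products of independent sub-Gaussians (hence sub-exponential) via Bernstein, finishing with a union bound over all $T^2pn$ entries of $U^\top W$; no quadratic-form concentration is needed anywhere. You instead condition on $W$, exploit the exact conditional Gaussianity of $U_{:j}^\top\xi$ (correctly noting that the entries of a fixed column of $U$ are distinct input samples, hence i.i.d.), union-bound over only the $Tp$ columns, and then pay for this with a separate Hanson--Wright step to control the correlated vector $\xi=WF_{i:}^\top$ through the operator norm of the banded convolution matrix. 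Both routes land on the same constant $\Csys^2/(1-\rho)$, since the $\ell_1$ norm of $F_{i:}$ and the operator norm of the MA filter are bounded by the same geometric series. Two minor points: your final guarantee carries an extra additive failure probability $e^{-cN}$, which is not literally absorbed by $2(Tpn)^{-2\eta}$ in all regimes (e.g.\ when $n$ is enormous relative to $Tp$), so your statement is marginally weaker than the lemma as written, though harmless under the standing assumption $N\gtrsim\log^2(Tp)$; and Theorem~\ref{thm_HW} as stated in the paper is only the lower-tail form, so you should cite the standard two-sided Hanson--Wright inequality for the upper tail you need.
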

\begin{proof}
	One can write $\|U^\top WF_{i:}^\top\|_\infty\leq \|U^\top W\|_{\infty,\infty}\|F_{i:}\|_1$. We will bound each term on the right hand side separately. First, note that
	\begin{align}\label{F}
	\|F_{i:}\|_1 = \sum_{j=1}^{Tp}|F_{ij}| &= \sum_{\tau=0}^{T-2}\|C_{i:}\|_1\|A^\tau\|_1\leq \sum_{\tau=0}^{\infty}\Csys^2\rho^\tau\leq \frac{\Csys^2}{1-\rho}
	\end{align}
	Now, let us focus on $\|U^\top W\|_{\infty,\infty}$. We have $\|U^\top W\|_{\infty,\infty} = \max_{i,j}\{|U_{:i}^\top W_{:j}|\}$. Note that the vectors $U_{:i}$ and $W_{:j}$ are independent random vectors, each with sub-Gaussian elements. Therefore, $U_{ki}^\top W_{kj}$ is sub-exponential with $(\sqrt{2}\sigma_u\sigma_w, \sqrt{2}\sigma_u\sigma_w)$~\cite{wainwright2019high} for every $k$. A standard concentration bound on sub-exponential random variables entails that
	\begin{align}\label{eq62}
	\mathbb{P}(|U_{:i}^\top W_{:j}|\leq \sqrt{2}\sigma_u\sigma_wt)\geq 1-2\exp\left(-\frac{1}{2}\min\left\{t,\frac{t^2}{N}\right\}\right)
	\end{align}
	A simple union bound implies that
	\begin{align}\label{eq63}
	\mathbb{P}\left(\frac{1}{N}\|U^\top W\|_{\infty, \infty}\leq \sqrt{2}\sigma_u\sigma_wt\right)\geq 1-2T^2pn\exp\left(-\frac{N}{2}\min\left\{t,t^2\right\}\right)
	\end{align}
	Now, define $t = c\sqrt{\frac{\log(Tpn)}{N}}$ for a constant $c$ to be defined later, and assume that $N\geq c^2\log(Tpn)$. This implies that $t^2\leq t$, which leads to
	\begin{align}
	\mathbb{P}\left(\frac{1}{N}\|U^\top W\|_{\infty, \infty}\leq \sqrt{2}c\sigma_u\sigma_w\sqrt{\frac{\log(Tpn)}{N}}\right)\geq 1-2\exp\left(2\log(Tpn)-\frac{c^2}{2}\log(Tpn)\right)
	\end{align}
	Now, upon defining $c = 2\sqrt{1+\eta}$ for an arbitrary $\eta>0$, we have
	\begin{align}
	\frac{1}{N}\|U^\top W\|_{\infty, \infty}\leq 2\sqrt{2}\sigma_u\sigma_w\sqrt{(1+\eta)\frac{\log(Tpn)}{N}}
	\end{align}
	with probability of at least $1-2(Tpn)^{-2\eta}$. Combining this inequality with~\eqref{F} completes the proof.
\end{proof}

\begin{lemma}[Bounding $(II)$]\label{l_II}
	Assume that
	\begin{align}
	T\geq \frac{\log\log(Np+Tp+Nn)+4\log(\frac{\Csys}{1-\rho})+4\log(\sigma_w+\sigma_u)+2\log (2)}{1-\rho}+2
	\end{align}
	Then, the following inequality holds
	\begin{align}
	\frac{2\|U^\top E_{:i}\|_\infty}{N}\leq 2\rho^{T/2}(1+\eta)
	\end{align}
	with probability of at least $1-2(Nn)^{-\eta}-2(Np+Tp)^{-\eta}$, for an arbitrary $\eta>0$.
\end{lemma}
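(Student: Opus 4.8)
\noindent\emph{Proof proposal.} The quantity $(II)=\frac{2}{N}\|U^\top E_{:i}\|_\infty$ couples the input matrix $U$ with the ``state leakage'' matrix $E$, and unlike the term $(I)$ treated in Lemma~\ref{l_I}, these two matrices are \emph{not} independent: since $e_t=CA^{T-1}x_{t-T+1}$ and $x_{t-T+1}$ is built from past inputs, $E$ is itself a function of the inputs. The plan is to decouple them by a purely deterministic Cauchy--Schwarz step, so that only \emph{marginal} high-probability bounds on the norms of $U$ and $E$ are needed. Writing $U_{:j}$ for the $j$-th column of $U$, I would start from
\[
\|U^\top E_{:i}\|_\infty = \max_{j}\,\bigl|U_{:j}^\top E_{:i}\bigr| \le \Bigl(\max_j\|U_{:j}\|_2\Bigr)\,\|E_{:i}\|_2,
\]
control the two factors separately, and combine them by a union bound. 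The dependence between $U$ and $E$ is irrelevant for this inequality, which neatly sidesteps the main difficulty.

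First I would control $\|E_{:i}\|_2$. Expanding the state as $x_s=\sum_{\tau=0}^{s-1}A^{s-1-\tau}(Bu_\tau+w_\tau)$ gives $(E_{:i})_k=C_{i:}A^{T-1}x_{k-1}=\sum_{\ell\ge 0}C_{i:}A^{T-1+\ell}(Bu_{\cdot}+w_{\cdot})$, so every power of $A$ appearing is at least $A^{T-1}$. Using $\|A^\tau\|_1\le\Csys\rho^\tau$ and $\max\{\|B\|_1,\|C\|_1\}\le\Csys$ and summing the resulting geometric series, each entry has variance at most $\rho^{2(T-1)}\,\Csys^6(\sigma_u^2+\sigma_w^2)/(1-\rho^2)$, so $E_{:i}$ is a zero-mean sub-Gaussian vector whose covariance $\Sigma$ satisfies $\mathrm{tr}(\Sigma)$ and $\|\Sigma\|_2$ of this same order (up to a $1/(1-\rho)$ factor). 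Writing $\|E_{:i}\|_2^2$ as a quadratic form in the underlying sub-Gaussian vector and applying the Hanson--Wright inequality (Theorem~\ref{thm_HW}), I would obtain $\|E_{:i}\|_2\lesssim \rho^{T-1}\sqrt{N}\,\bigl(\Csys^3/\sqrt{1-\rho}\bigr)(\sigma_u+\sigma_w)$ up to a logarithmic deviation factor, on an event of probability at least $1-2(Nn)^{-\eta}$.

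Second I would control $\max_j\|U_{:j}\|_2$. For each fixed column $j$, the $N$ entries $U_{kj}$ are a single input coordinate sampled at $N$ distinct times, hence i.i.d.\ $\mathcal{N}(0,\sigma_u^2)$, so $\|U_{:j}\|_2^2\sim\sigma_u^2\chi^2_N$. A standard $\chi^2$ tail bound together with a union bound over the at most $Tp$ columns yields $\max_j\|U_{:j}\|_2\le\sigma_u\sqrt{2N}$ (say) on an event of probability at least $1-2(Np+Tp)^{-\eta}$, provided $N\gtrsim\eta\log(Tp)$. Combining the two estimates through the displayed Cauchy--Schwarz inequality and dividing by $N$ gives
\[
\frac{2}{N}\|U^\top E_{:i}\|_\infty \lesssim \rho^{T-1}\,\sigma_u(\sigma_u+\sigma_w)\,\frac{\Csys^3}{\sqrt{1-\rho}},
\]
up to the logarithmic deviation factors, which is where the $T$-threshold enters.

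The last step is to absorb all the prefactors into the exponent. Writing $\rho^{T-1}=\rho^{T/2}\rho^{T/2-1}$ and using $-\log\rho\ge 1-\rho$, the stated lower bound on $T$ forces $\rho^{T/2-1}$ times all the prefactors and deviation factors to be at most $(1+\eta)$, giving $\frac{2}{N}\|U^\top E_{:i}\|_\infty\le 2\rho^{T/2}(1+\eta)$. Here the fourth powers $4\log(\Csys/(1-\rho))$ and $4\log(\sigma_w+\sigma_u)$ in the threshold appear precisely because the surviving prefactor is (the square of $\sigma_u\sigma_e$-type quantities, hence) polynomial to the fourth order in $\Csys/(1-\rho)$ and $\sigma_u+\sigma_w$ after bounding $\sigma_u\le\sigma_u+\sigma_w$, and the $\log\log(Np+Tp+Nn)$ term is exactly the logarithm of the logarithmic deviation factor produced by Hanson--Wright and the $\chi^2$ union bound. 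I expect the main obstacle to be not the probabilistic estimates themselves but this final bookkeeping: matching the constants, the fourth powers, and the $\log\log$ term so that the generous $\rho^{T-1}$ decay collapses cleanly to the advertised $\rho^{T/2}(1+\eta)$, and verifying that the deviation factors really are only logarithmic in the dimensions (so that they are dominated once the $T$-lower-bound is imposed).
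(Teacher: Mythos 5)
Your proposal is correct in substance but takes a genuinely different decoupling step from the paper. Where you apply Cauchy--Schwarz, $\|U^\top E_{:i}\|_\infty\le(\max_j\|U_{:j}\|_2)\|E_{:i}\|_2$, and then control the two $\ell_2$ norms by concentration of quadratic forms, the paper uses the H\"older pairing $\|U^\top E_{:i}\|_\infty\le\|U\|_{\infty,\infty}\|E_{:i}\|_1$, further peels off $\|E_{:i}\|_1\le N\|(C A^{T-1})_{i:}\|_1\|X\|_{\infty,\infty}$ with $X$ the matrix of states, and then only needs elementary max-of-Gaussians union bounds on the \emph{entries} of $U$ and $X$; the resulting $\log(Nn)\cdot\log(Np+Tp)$ deviation factors are exactly what the $\log\log(Np+Tp+Nn)$ term in the $T$-threshold is there to absorb. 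Both routes neutralize the dependence between $U$ and $E$ in the same way (a deterministic norm split followed by marginal tail bounds), and your $\sqrt{N}\cdot\rho^{T-1}\sqrt{N}/N$ cancellation lands at the same $\rho^{T-1}$ order with, if anything, milder logarithmic factors. What your route costs relative to the paper's: (i) you need the \emph{upper}-tail version of Hanson--Wright for $\|E_{:i}\|_2^2$, whereas the paper's Theorem~\ref{thm_HW} is stated as a lower-tail bound only (the two-sided version is standard, but you should invoke it explicitly), and the quadratic-form matrix is non-diagonal because consecutive states are strongly correlated, so the $\|M\|_2$ control really does need the geometric decay of temporal correlations that you gesture at with the extra $1/(1-\rho)$; (ii) the clean bound $\max_j\|U_{:j}\|_2\le\sigma_u\sqrt{2N}$ at confidence $1-2(Np+Tp)^{-\eta}$ silently requires $N\gtrsim\eta\log(Np+Tp)$, a hypothesis absent from the lemma statement (harmless under the theorem's standing assumption $N\gtrsim\log^2(Tp)$ for fixed $\eta$, and avoidable by tolerating an extra $\sqrt{\log}$ deviation factor that the $T$-threshold absorbs anyway). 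The paper's argument needs neither of these and is the more elementary of the two; your version is a legitimate alternative once those two points are patched, and the final bookkeeping step you describe (folding all prefactors into $\rho^{T/2-1}$ via the stated lower bound on $T$) matches the paper's.
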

\begin{proof}
	\begin{sloppypar}
		One can write $\|U^\top E_{:i}\|_\infty\leq \|U\|_{\infty,\infty}\|E_{:i}\|_1$. On the other hand, note that $E = \begin{bmatrix}
		e_{T-1} & e_{T} & \dots & e_{T+N-2}
		\end{bmatrix}^\top$, where $e_t = CA^{T-1}x_{t-T+1}$. This implies that
	\end{sloppypar}
	\begin{align}\label{eq65}
	\|E_{:i}\|_1 = \sum_{t=T-1}^{T+N-2}|(e_t)_i| = \sum_{t=1}^{N}|(CA^{T-1})_{i:}x_t|&\leq \|(CA^{T-1})_{i:}\|_1\sum_{t=1}^{N}\|x_t\|_\infty\nonumber\\
	&\leq N\|(CA^{T-1})_{i:}\|_1\|X\|_{\infty, \infty}
	\end{align}
	where $X = \begin{bmatrix}
	x_1 & x_2 &\dots & x_N
	\end{bmatrix}$. Similar to the previous case, we bound each term on the right hand side separately. First note that
	\begin{align}\label{eq66}
	\|(CA^{T-1})_{i:}\|_1\leq\|C_{i:}\|_1\|A^{T-1}\|_1\leq  \Csys^2\rho^{T-1}
	\end{align}
	Our next goal is to provide an upper bound on $\|X\|_{\infty, \infty}$. It is easy to see that, for every $t$, the vector $x_t$ is a zero-mean Gaussian variable with covariance
	\begin{align}
	\Sigma_t = \sum_{i=0}^{t-1}\sigma_w^2A^i(A^\top)^i+\sigma_u^2A^iBB^\top(A^\top)^i
	\end{align}
	This implies that each element $X_{ij}$ is Gaussian with variance $\sigma^2_{ij}$ satisfying
	\begin{align}
	\sigma^2_{ij}&\leq \sigma_w^2\sum_{i=0}^{\infty}\Csys^2\rho^{2i}+\sigma_u^2\sum_{i=0}^{\infty}\Csys^4\rho^{2i}\nonumber\\
	&\leq (\sigma_w^2+\sigma_u^2)\frac{\Csys^4}{1-\rho^2}
	\end{align}
	which implies that $\sigma_{ij}\leq (\sigma_w+\sigma_u)\frac{\Csys^2}{1-\rho}$.
	This together with the standard concentration bounds on the Gaussian distributions implies that
	\begin{align}
	\mathbb{P}\left(\|X\|_{\infty,\infty}\leq \left((\sigma_w+\sigma_u)\frac{\Csys^2}{1-\rho}\right)t\right) \geq 1-2Nn\exp\left(-\frac{t^2}{2}\right)
	\end{align} 
	Upon choosing $t = \sqrt{2(1+\eta)\log(Nn)}$, we have
	\begin{align}
	\|X\|_{\infty,\infty}\leq \sqrt{2}(\sigma_w+\sigma_u)\left(\frac{\Csys^2}{1-\rho}\right)\sqrt{(1+\eta)\log(Nn)}
	\end{align}
	with probability of at least $1-2(Nn)^{-\eta}$. Combining this inequality with~\eqref{eq65} and~\eqref{eq66} leads to 
	\begin{align}\label{eq71}
	\|E_{:i}\|_1&\leq N\Csys^2\rho^{T-1}(\sigma_w+\sigma_u)\left(\frac{\Csys^2}{1-\rho}\right)\sqrt{2(1+\eta)\log(Nn)}\nonumber\\
	&=N(\sigma_w+\sigma_u)\left(\frac{\Csys^4}{1-\rho}\right)\rho^{T-1}\sqrt{2(1+\eta)\log(Nn)}
	\end{align}
	with probability of at least $1-2(Nn)^{-\eta}$. Similarly, one can write 
	\begin{align}
	\mathbb{P}\left(\|U\|_{\infty,\infty}\leq \sigma_ut\right)\geq 1-2(N+T)p\exp\left(-\frac{t^2}{2}\right)
	\end{align}
	Upon choosing $t = \sqrt{2(1+\eta)\log(Np+Tp)}$, we have
	\begin{align}\label{eq72}
	\|U\|_{\infty,\infty}\leq\sigma_u\sqrt{2(1+\eta)\log(Np+Tp)}
	\end{align}
	with probability of at least $1-2(Np+Tp)^{-\eta}$, for some $\eta>0$. Combining all the derived bounds, one can write
	\begin{align}\label{eq722}
	\frac{1}{N}\|U^\top E_{:i}\|_\infty\leq 2(\sigma_w+\sigma_u)\sigma_u\left(\frac{\Csys^4}{1-\rho}\right){(1+\eta)\log(Np+Tp+Nn)}\rho^{T-1}
	\end{align}
	Now, if choose
	\begin{align}
	T\geq \frac{4\log(\frac{\Csys}{1-\rho})+4\log(\sigma_w+\sigma_u)+2\log (2)+2\log\log(Np+Tp+Nn)}{1-\rho}+2
	\end{align}
	then, we have
	\begin{align}
	\rho^{-(T/2-1)}\geq 2(\sigma_w+\sigma_u)\sigma_u\left(\frac{\Csys^4}{1-\rho}\right){\log(Np+Tp+Nn)}
	\end{align}
	Combining the above inequality with~\eqref{eq722} leads to
	\begin{align}
	\frac{1}{N}\|U^\top E_{:i}\|_\infty\leq \rho^{T/2}(1+\eta)
	\end{align}
	which holds with probability of at least $1-2(Nn)^{-\eta}-2(Np+Tp)^{-\eta}$. This completes the proof.
\end{proof}
\begin{lemma}[Bounding $(III)$]\label{l_III}
	The following inequality holds:
	\begin{align}
	\frac{2\|U^\top V_{:i}\|_\infty}{N}\leq 4\sigma_u\sigma_v\sqrt{(1+\eta)\frac{\log(Tp)}{N}}
	\end{align}
	with probability of at least $1-2(Tp)^{-\eta}$, for an arbitrary $\eta>0$.
\end{lemma}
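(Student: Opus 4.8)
The plan is to follow, almost verbatim, the argument used to bound term $(I)$ in Lemma~\ref{l_I}; the present bound is in fact simpler because no analogue of the matrix $F$ appears and the vector $U^\top V_{:i}$ lives in $\mathbb{R}^{Tp}$ rather than in a space of dimension $T^2pn$. First I would expand the $\ell_\infty$-norm coordinate-wise,
\begin{align}
\|U^\top V_{:i}\|_\infty = \max_{1\leq j\leq Tp}\left|U_{:j}^\top V_{:i}\right| = \max_{1\leq j\leq Tp}\left|\sum_{k=1}^N U_{kj}V_{ki}\right|,
\end{align}
and then fix a coordinate $j$ to analyze a single inner product.

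The key structural observation is that the input $u_t$ and the measurement noise $v_t$ are independent, so the columns $U_{:j}$ and $V_{:i}$ are independent random vectors whose entries are zero-mean sub-Gaussian with parameters $\sigma_u$ and $\sigma_v$, respectively. Consequently each summand $U_{kj}V_{ki}$ is a zero-mean sub-exponential random variable with parameters $(\sqrt{2}\sigma_u\sigma_v,\sqrt{2}\sigma_u\sigma_v)$, exactly as in the bound on $(I)$. A standard Bernstein-type concentration inequality for a sum of $N$ independent sub-exponential variables then yields, for every fixed $j$,
\begin{align}
\mathbb{P}\left(\left|U_{:j}^\top V_{:i}\right|\leq \sqrt{2}\sigma_u\sigma_v t\right)\geq 1-2\exp\left(-\frac{1}{2}\min\left\{t,\frac{t^2}{N}\right\}\right).
\end{align}

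Next I would union-bound over the $Tp$ coordinates and specialize $t$. Choosing $t = \sqrt{2(1+\eta)N\log(Tp)}$ and working in the mild regime $N\gtrsim \log(Tp)$ (already guaranteed by the hypothesis $N\gtrsim\log^2(Tp)$ of Theorem~\ref{thm_main}), one has $t^2/N = 2(1+\eta)\log(Tp)\leq t$, so the minimum equals $t^2/N$. The union bound then produces
\begin{align}
\frac{1}{N}\|U^\top V_{:i}\|_\infty\leq 2\sigma_u\sigma_v\sqrt{(1+\eta)\frac{\log(Tp)}{N}}
\end{align}
with probability at least $1-2Tp\exp\left(-(1+\eta)\log(Tp)\right) = 1-2(Tp)^{-\eta}$, and multiplying through by $2$ recovers the claimed bound $\tfrac{2}{N}\|U^\top V_{:i}\|_\infty\leq 4\sigma_u\sigma_v\sqrt{(1+\eta)\log(Tp)/N}$.

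Since this is a direct adaptation of the reasoning already carried out for $(I)$, I do not anticipate a genuine obstacle. The only points requiring minor care are (i) tracking the numerical constant in the choice of $t$ so that the $Tp$-fold union bound collapses the failure probability precisely to $2(Tp)^{-\eta}$, and (ii) verifying that in the relevant sample regime it is the sub-exponential (Bernstein) tail $t^2/N$, rather than the linear tail $t$, that governs the minimum — both of which follow immediately from the computation underlying Lemma~\ref{l_I}.
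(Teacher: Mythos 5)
Your argument is correct and is precisely what the paper intends: its proof of Lemma~\ref{l_III} is just the remark that it is ``a simpler version of the proof of Lemma~\ref{l_I},'' and you have carried out exactly that adaptation (sub-exponential product of independent sub-Gaussian entries, Bernstein tail, union bound over the $Tp$ coordinates), with the constants and the failure probability $1-2(Tp)^{-\eta}$ worked out consistently with the stated bound. No gaps.
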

\begin{proof}
	The proof is a simpler version of the proof of Lemma~\ref{l_I}, and the details are omitted for brevity.
\end{proof}

\noindent{\it Proof of Proposition~\ref{prop_lambda}.} The proof follows by combining the bounds obtained in Lemmas~\ref{l_I},~\ref{l_II},~\ref{l_III}.$\hfill\square$

\bibliographystyle{elsarticle-num}
\bibliography{reference.bib}

\begin{thebibliography}{10}
\expandafter\ifx\csname url\endcsname\relax
  \def\url#1{\texttt{#1}}\fi
\expandafter\ifx\csname urlprefix\endcsname\relax\def\urlprefix{URL }\fi
\expandafter\ifx\csname href\endcsname\relax
  \def\href#1#2{#2} \def\path#1{#1}\fi

\bibitem{blaabjerg2006overview}
F.~Blaabjerg, R.~Teodorescu, M.~Liserre, A.~V. Timbus, Overview of control and
  grid synchronization for distributed power generation systems, IEEE
  Transactions on industrial electronics 53~(5) (2006) 1398--1409.

\bibitem{amin2005toward}
S.~M. Amin, B.~F. Wollenberg, Toward a smart grid: power delivery for the 21st
  century, IEEE power and energy magazine 3~(5) (2005) 34--41.

\bibitem{wang1998robust}
Y.~Wang, D.~J. Hill, G.~Guo, Robust decentralized control for multimachine
  power systems, IEEE Transactions on Circuits and Systems I: Fundamental
  Theory and Applications 45~(3) (1998) 271--279.

\bibitem{barbaresso2014usdot}
J.~Barbaresso, G.~Cordahi, D.~Garcia, C.~Hill, A.~Jendzejec, K.~Wright, B.~A.
  Hamilton, Usdot’s intelligent transportation systems (its) its strategic
  plan, 2015-2019., Tech. rep., United States. Department of Transportation.
  Intelligent Transportation (2014).

\bibitem{krechmer2018effects}
D.~Krechmer, E.~Flanigan, A.~Rivadeneyra, K.~Blizzard, S.~Van~Hecke, R.~Rausch,
  Effects on intelligent transportation systems planning and deployment in a
  connected vehicle environment, Tech. rep., United States. Federal Highway
  Administration. Office of Operations (2018).

\bibitem{kapila2000spacecraft}
V.~Kapila, A.~G. Sparks, J.~M. Buffington, Q.~Yan, Spacecraft formation flying:
  Dynamics and control, Journal of Guidance, Control, and Dynamics 23~(3)
  (2000) 561--564.

\bibitem{kubisch2003distributed}
M.~Kubisch, H.~Karl, A.~Wolisz, L.~C. Zhong, J.~Rabaey, Distributed algorithms
  for transmission power control in wireless sensor networks, in: 2003 IEEE
  Wireless Communications and Networking, 2003. WCNC 2003., Vol.~1, IEEE, 2003,
  pp. 558--563.

\bibitem{nguyen2011model}
D.~Nguyen-Tuong, J.~Peters, Model learning for robot control: a survey,
  Cognitive processing 12~(4) (2011) 319--340.

\bibitem{sutton2018reinforcement}
R.~S. Sutton, A.~G. Barto, Reinforcement learning: An introduction, MIT press,
  2018.

\bibitem{krizhevsky2012imagenet}
A.~Krizhevsky, I.~Sutskever, G.~E. Hinton, Imagenet classification with deep
  convolutional neural networks, in: Advances in neural information processing
  systems, 2012, pp. 1097--1105.

\bibitem{duan2016benchmarking}
Y.~Duan, X.~Chen, R.~Houthooft, J.~Schulman, P.~Abbeel, Benchmarking deep
  reinforcement learning for continuous control, in: International Conference
  on Machine Learning, 2016, pp. 1329--1338.

\bibitem{oymak2019non}
S.~Oymak, N.~Ozay, Non-asymptotic identification of lti systems from a single
  trajectory, in: 2019 American Control Conference (ACC), IEEE, 2019, pp.
  5655--5661.

\bibitem{krauth2019finite}
K.~Krauth, S.~Tu, B.~Recht, Finite-time analysis of approximate policy
  iteration for the linear quadratic regulator, in: Advances in Neural
  Information Processing Systems, 2019, pp. 8512--8522.

\bibitem{dean2019sample}
S.~Dean, H.~Mania, N.~Matni, B.~Recht, S.~Tu, On the sample complexity of the
  linear quadratic regulator, Foundations of Computational Mathematics (2019)
  1--47.

\bibitem{simchowitz2019learning}
M.~Simchowitz, R.~Boczar, B.~Recht, Learning linear dynamical systems with
  semi-parametric least squares, arXiv preprint arXiv:1902.00768 (2019).

\bibitem{sarkar2019finite}
T.~Sarkar, A.~Rakhlin, M.~A. Dahleh, Finite-time system identification for
  partially observed lti systems of unknown order, arXiv preprint
  arXiv:1902.01848 (2019).

\bibitem{aastrom1971system}
K.~J. {\AA}str{\"o}m, P.~Eykhoff, System identification—a survey, Automatica
  7~(2) (1971) 123--162.

\bibitem{ljung1999system}
L.~Ljung, System identification, Wiley encyclopedia of electrical and
  electronics engineering (1999) 1--19.

\bibitem{chen2012identification}
H.-F. Chen, L.~Guo, Identification and stochastic adaptive control, Springer
  Science \& Business Media, 2012.

\bibitem{goodwin1977dynamic}
G.~C. Goodwin, R.~Payne, Dynamic system identification. experiment design and
  data analysis. (1977).

\bibitem{dean2017sample}
S.~Dean, H.~Mania, N.~Matni, B.~Recht, S.~Tu, On the sample complexity of the
  linear quadratic regulator, arXiv preprint arXiv:1710.01688 (2017).

\bibitem{dean2018regret}
S.~Dean, H.~Mania, N.~Matni, B.~Recht, S.~Tu, Regret bounds for robust adaptive
  control of the linear quadratic regulator, in: Advances in Neural Information
  Processing Systems, 2018, pp. 4188--4197.

\bibitem{simchowitz2018learning}
M.~Simchowitz, H.~Mania, S.~Tu, M.~I. Jordan, B.~Recht, Learning without
  mixing: Towards a sharp analysis of linear system identification, arXiv
  preprint arXiv:1802.08334 (2018).

\bibitem{sarkar2018fast}
T.~Sarkar, A.~Rakhlin, How fast can linear dynamical systems be learned?, arXiv
  preprint arXiv:1812.01251 (2018).

\bibitem{tsiamis2019finite}
A.~Tsiamis, G.~J. Pappas, Finite sample analysis of stochastic system
  identification, arXiv preprint arXiv:1903.09122 (2019).

\bibitem{zheng2020non}
Y.~Zheng, N.~Li, Non-asymptotic identification of linear dynamical systems
  using multiple trajectories, arXiv preprint arXiv:2009.00739 (2020).

\bibitem{fattahi2019learning}
S.~Fattahi, N.~Matni, S.~Sojoudi, Learning sparse dynamical systems from a
  single sample trajectory, arXiv (2019) arXiv--1904.

\bibitem{fattahi2018data}
S.~Fattahi, S.~Sojoudi, Data-driven sparse system identification, in: 2018 56th
  Annual Allerton Conference on Communication, Control, and Computing
  (Allerton), IEEE, 2018, pp. 462--469.

\bibitem{fattahi2018non}
S.~Fattahi, S.~Sojoudi, Non-asymptotic analysis of block-regularized regression
  problem, in: 2018 IEEE Conference on Decision and Control (CDC), IEEE, 2018,
  pp. 27--34.

\bibitem{fattahi2018sample}
S.~Fattahi, S.~Sojoudi, Sample complexity of sparse system identification
  problem, arXiv preprint arXiv:1803.07753 (2018).

\bibitem{sun2020finite}
Y.~Sun, S.~Oymak, M.~Fazel, Finite sample system identification: Improved rates
  and the role of regularization (2020).

\bibitem{wahlberg2013matrix}
B.~Wahlberg, C.~Rojas, Matrix rank optimization problems in system
  identification via nuclear norm mimization, in: (Tutorial) at the European
  Control Conference, 2013.

\bibitem{cai2016robust}
J.-F. Cai, X.~Qu, W.~Xu, G.-B. Ye, Robust recovery of complex exponential
  signals from random gaussian projections via low rank hankel matrix
  reconstruction, Applied and computational harmonic analysis 41~(2) (2016)
  470--490.

\bibitem{abbasi2011regret}
Y.~Abbasi-Yadkori, C.~Szepesv{\'a}ri, Regret bounds for the adaptive control of
  linear quadratic systems, in: Proceedings of the 24th Annual Conference on
  Learning Theory, 2011, pp. 1--26.

\bibitem{abbasi2019model}
Y.~Abbasi-Yadkori, N.~Lazic, C.~Szepesv{\'a}ri, Model-free linear quadratic
  control via reduction to expert prediction, in: The 22nd International
  Conference on Artificial Intelligence and Statistics, 2019, pp. 3108--3117.

\bibitem{lale2020regret}
S.~Lale, K.~Azizzadenesheli, B.~Hassibi, A.~Anandkumar, Regret bound of
  adaptive control in linear quadratic gaussian (lqg) systems, arXiv preprint
  arXiv:2003.05999 (2020).

\bibitem{dean2019safely}
S.~Dean, S.~Tu, N.~Matni, B.~Recht, Safely learning to control the constrained
  linear quadratic regulator, in: 2019 American Control Conference (ACC), IEEE,
  2019, pp. 5582--5588.

\bibitem{mania2019certainty}
H.~Mania, S.~Tu, B.~Recht, Certainty equivalent control of lqr is efficient,
  arXiv preprint arXiv:1902.07826 (2019).

\bibitem{fattahi2019efficient}
S.~Fattahi, N.~Matni, S.~Sojoudi, Efficient learning of distributed
  linear-quadratic controllers (2019).
\newblock \href {http://arxiv.org/abs/1909.09895} {\path{arXiv:1909.09895}}.

\bibitem{furieri2020learning}
L.~Furieri, Y.~Zheng, M.~Kamgarpour, Learning the globally optimal distributed
  lq regulator, in: Learning for Dynamics and Control, 2020, pp. 287--297.

\bibitem{ho1966effective}
B.~Ho, R.~E. K{\'a}lm{\'a}n, Effective construction of linear state-variable
  models from input/output functions, at-Automatisierungstechnik 14~(1-12)
  (1966) 545--548.

\bibitem{antoulas2005approximation}
A.~C. Antoulas, Approximation of large-scale dynamical systems, SIAM, 2005.

\bibitem{zhou1998essentials}
K.~Zhou, J.~C. Doyle, Essentials of robust control, Vol. 104, Prentice hall
  Upper Saddle River, NJ, 1998.

\bibitem{wainwright2019high}
M.~J. Wainwright, High-dimensional statistics: A non-asymptotic viewpoint,
  Vol.~48, Cambridge University Press, 2019.

\bibitem{jin2017sparse}
K.~H. Jin, J.~C. Ye, Sparse and low-rank decomposition of a hankel structured
  matrix for impulse noise removal, IEEE Transactions on Image Processing
  27~(3) (2017) 1448--1461.

\bibitem{tu2017non}
S.~Tu, R.~Boczar, A.~Packard, B.~Recht, Non-asymptotic analysis of robust
  control from coarse-grained identification, arXiv preprint arXiv:1707.04791
  (2017).

\bibitem{negahban2012unified}
S.~N. Negahban, P.~Ravikumar, M.~J. Wainwright, B.~Yu, et~al., A unified
  framework for high-dimensional analysis of $ m $-estimators with decomposable
  regularizers, Statistical science 27~(4) (2012) 538--557.

\bibitem{shao1993linear}
J.~Shao, Linear model selection by cross-validation, Journal of the American
  statistical Association 88~(422) (1993) 486--494.

\bibitem{krahmer2014suprema}
F.~Krahmer, S.~Mendelson, H.~Rauhut, Suprema of chaos processes and the
  restricted isometry property, Communications on Pure and Applied Mathematics
  67~(11) (2014) 1877--1904.

\end{thebibliography}
\end{document}